\DeclarePairedDelimiter{\ceil}{\lceil}{\rceil}
\newtheorem{lemma}{Lemma}
\declaretheorem{definition}
\declaretheorem{assumption}
\crefname{figure}{Figure}{Figures}
\Crefname{figure}{Figure}{Figures}
\crefname{assumption}{Assumption}{Assumptions}
\crefname{table}{Table}{Tables}
\Crefname{table}{Table}{Tables}
\crefname{equation}{Equation}{Equations}
\Crefname{equation}{Equation}{Equations}
\crefname{section}{Section}{Sections}
\Crefname{section}{Section}{Sections}
\crefname{subsection}{Subsection}{Subsections}
\Crefname{subsection}{Subsection}{Subsections}
\renewcommand{\ge}{\geqslant}
\renewcommand{\geq}{\geqslant}
\renewcommand{\le}{\leqslant}
\renewcommand{\leq}{\leqslant}
\newcommand{\deff}{d_{\mathrm{eff}}}
\newcommand{\norm}[1]{\lVert #1 \rVert}
\newcommand{\unorm}[1]{\lVert #1 \rVert_{S, 2}}
\newcommand{\ip}[2]{\ensuremath{\langle #1, #2 \rangle}}
\newcommand{\E}{\mathbb{E}}
\newcommand{\ind}{\mathbf{1}}
\DeclareMathOperator*{\Tr}{\mathrm{tr}}
\renewcommand{\Pr}{\mathbb{P}}
\newcommand{\T}{\mathsf{T}}
\newcommand{\cE}{\mathcal{E}}
\newcommand{\opnorm}[1]{\norm{#1}_{\mathrm{op}}}
\newcommand{\vfapprox}{V_{\phi,w}}
\newcommand{\vfhatapprox}{V_{\phi,\hat w}}
\newcommand{\vfoptapprox}{V_{\phi,w^*}}
\newcommand{\cG}{\mathcal{G}}
\newcommand{\cP}{\mathcal{P}}
\newcommand{\cS}{\mathcal{S}}
\newcommand{\rR}{\mathbb{R}}
\newcommand{\Pperpphi}{P^\perp_{\Phi}}
\newcommand{\Rmax}{R_{\textrm{max}}}
\newcommand{\Vmax}{V_{\textrm{max}}}
\newcommand{\rpi}{r_\pi}
\newcommand{\Ppi}{P_\pi}
\DeclareMathOperator*{\argmin}{arg\,min}
\DeclareMathOperator*{\expect}{{\huge \mathbb{E}}}
\newcommand{\mdp}{\mathcal{M}}
\newcommand{\sspace}{\mathcal{S}}
\newcommand{\rew}{\mathcal{R}}
\newcommand{\prob}{\mathcal{P}}
\newcommand{\aspace}{\mathcal{A}}
\newcommand{\cbar}{\, | \,}
\begin{document}
\runningauthor{Charline Le Lan, Stephen Tu, Adam Oberman, Rishabh Agarwal, Marc Bellemare}
\twocolumn[

\aistatstitle{On the Generalization of Representations in Reinforcement Learning}

\aistatsauthor{ Charline Le Lan \And Stephen Tu \And  Adam Oberman}

\aistatsaddress{ University of Oxford \And  Google Brain \And McGill University}

\aistatsauthor{Rishabh Agarwal \And Marc Bellemare}
\aistatsaddress{Google Brain \And Google Brain} ]

\begin{abstract}
In reinforcement learning, state representations are used to tractably deal with large problem spaces. State representations serve both to approximate the value function with few parameters, but also to generalize to newly encountered states. Their features may be learned implicitly (as part of a neural network) or explicitly (for example, the successor representation of \citet{dayan1993improving}). While the approximation properties of representations are reasonably well-understood, a precise characterization of how and when these representations generalize is lacking. In this work, we address this gap and provide an informative bound on the generalization error arising from a specific state representation. This bound is based on the notion of effective dimension which measures the degree to which knowing the value at one state informs the value at other states.
Our bound applies to any state representation and quantifies the natural tension between representations that generalize well and those that approximate well. We complement our theoretical results with an empirical survey of classic representation learning methods from the literature and results on the Arcade Learning Environment, and find that the generalization behaviour of learned representations is well-explained by their effective dimension.
\end{abstract}

\section{INTRODUCTION}
\label{sec:intro}
At the heart of reinforcement learning~(RL) is the problem of predicting the expected return that can be obtained from different states. In most practical situations, these predictions are made on the basis of parametric function approximation, needed in order to make accurate predictions on the basis of limited samples -- technically speaking, to estimate the \emph{value function} \citep{sutton18reinforcement}. 
Linear function approximation, for example, estimates the value function using a fixed state representation $\phi$ which maps states to vectors in $\rR^k$; general-purpose algorithms for constructing state representations include tile coding \citep{sutton1996generalization}, the Fourier basis \citep{konidaris11value}, local basis functions \citep{ratitch04sparse},  and methods based on properties of the transition function \citep{mahadevan2007proto,ghosh2020representations}. Common deep RL network architectures such as DQN \citep{mnih15human} use multiple layers of nonlinear transformations to map perceptual inputs to a final layer which is linearly transformed into a value function prediction (\cref{fig:highlevel}); accordingly, we may also view this final layer as a (time-varying) state representation $\phi$ \citep{levine17shallow,chung18two}.

\begin{figure}
  \centering
  \includegraphics[width=.45\textwidth]{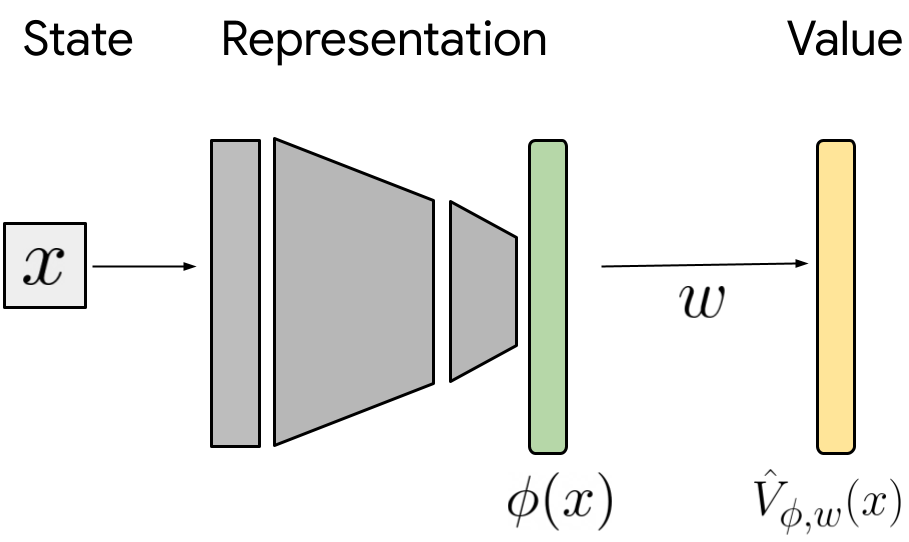}
  \caption{A deep RL architecture seen as a deep representation $\phi$ and a value prediction $\hat{V}_{\phi, w}$.}
  \vspace{-1.em}
  \label{fig:highlevel}
\end{figure}

It is generally believed that auxiliary tasks, known to improve performance in deep reinforcement learning \citep{jaderberg17reinforcement, bellemare17distributional}, play an important role in shaping the learned state representation \citep{bellemare2019geometric,dabney2020value,lyle2021effect}. This motivates the need to understand how representation learning impacts policy evaluation.
In this paper, we give a theoretical characterization of the generalization properties of a given or learned representation. While there are a number of results characterizing the approximation error due to a representation \citep{petrik2007analysis,parr2008analysis}, its effect on statistical error is relatively unknown.

Our first contribution is a bound on the generalization error (approximation + estimation) that arises when performing Monte Carlo value function estimation with a given $k$-dimensional representation $\phi$ (\cref{sec:genbounds}). Critically, this bound depends on the (in)coherence of the feature matrix $\Phi$ \citep{candes2009exact}, which in turns defines the \emph{effective dimension} of the representation. This effective dimension determines how many samples are needed to obtain a good generalization of the value function with the chosen representation; it may be as low as $k$, indicating that generalization is as good as possible, or as high as $|S|$, the number of states, indicating no generalization at all. The bound applies more broadly to the generalization error incurred in least-squares regression problems where a subset of a larger set of points is observed.

\looseness=-1 In \cref{sec:sr-bound}, we demonstrate the usefulness of our bound by specializing it to study the generalization properties of the successor representation (SR) \citep{dayan1993improving}. Specifically, we consider the state representation constructed from the top $k$ singular vectors of the SR \citep{stachenfeld14design,machado17laplacian,behzadian2018low}. Empirically, we find that the effective dimension of this representation -- and consequently its generalization characteristics -- can vary substantially according to the transition structure of the environment. We also show empirically that the effective dimension is important to determine the generalization capacity of different theoretically-motivated representations in the four room domain \citep{sutton99between}.

\looseness=-1 In an empirical study on the Arcade Learning Environment \citep{bellemare2013arcade}, we find that the notions of incoherence and effective dimension correlate with the observed empirical performance of existing value-based deep RL agents~(\cref{sec:deep-rl}). Furthermore, we find that a simple auxiliary loss motivated by our bound shows promising gains in the offline deep RL setting.

\section{BACKGROUND}
\label{sec:background}
We consider a Markov Decision Process (MDP) $\mdp = \langle\sspace, \aspace, \rew, \prob, \gamma\rangle$ \citep{puterman1994markov} with finite state space $\sspace$, discrete set of actions $\aspace$, transition kernel $\prob : \sspace \times \aspace \to \mathscr{P}(\sspace)$, deterministic reward function $\rew : \sspace \times \aspace \to [-\Rmax, \Rmax]$, and discount factor $\gamma \in [0, 1)$. For simplicity, we make the correspondence $\cS = \{ 1, ..., S \}$. We write $\prob_s^a$ to denote the next-state distribution over $\sspace$ resulting from selecting action $a$ in $s$ and write $\rew_s^a$ for the corresponding reward.

A stationary policy $\pi:\sspace \rightarrow \mathscr{P}(\aspace)$ is a mapping from states to distributions over actions, describing a particular way of interacting with the environment. We denote the set of all policies by $\Pi$. For any policy $\pi \in \Pi$, the value function $V^\pi(s)$ measures the expected discounted sum of rewards received when starting from state $s\in\sspace$ and acting according to $\pi$:
\begin{equation*}
	V^\pi(s) := \expect_{\pi, \prob} \left[ \sum_{t = 0}^\infty \gamma^t \rew_{s_t}^{a_t} \cbar s_0 = s, a_t \sim \pi(\cdot \cbar s_t) \right] .
\end{equation*}
The upper-bound value is $\Vmax := \frac{\Rmax}{1-\gamma}$. In vector notation \citep{puterman1994markov}, let $\rpi \in \rR^S$ denote the vector of expected rewards, and let $\Ppi \in \rR^{S \times S}$ be the transition matrix whose entries are
\begin{equation*}
    \Ppi(s, s') = \sum_{s' \in \sspace} \prob^a_s(s') \pi(a \cbar s) .
\end{equation*}
We then have
\begin{equation*}
    V^\pi = \sum_{t=0}^\infty (\gamma \Ppi)^t \rpi = (I - \gamma \Ppi)^{-1} \rpi.
\end{equation*}
In this paper we consider approximating the value function $V^\pi$ using a linear combination of features. We call the map $\phi: \sspace \rightarrow \mathbb{R}^k$ a \emph{$k$-dimensional state representation}; $\phi(s)$ is the feature vector for a state $s \in \sspace$.
In general, we will be interested in the setting where $k \ll S$. The value function approximation at $s$ is
\begin{equation*}
    \vfapprox (s) = \phi(s)^\top w,
\end{equation*}
where $w \in \rR^k$ is a weight vector. We collect the per-state feature vectors into a feature matrix $\Phi \in \rR^{S \times k}$. For simplicity, we assume $\Phi$ has full column rank.
In vector form, the value function approximation (a $S$-dimensional vector) is more directly expressed as
\begin{equation*}
    \vfapprox = \Phi w .
\end{equation*}

\subsection{Statistical Learning Theory}

We consider the \emph{batch Monte Carlo policy evaluation} setting, in which we are given a sample of training examples $D = \{ (s_1, y_1), \dots, (s_n, y_n) \} \in (\sspace \times \rR)^n$ and wish to determine a good linear approximation to $V^\pi$ on the basis of this sample. 
Here, $s_i$ is a state and $y_i$ is a realisation of the random return $G^\pi(s_i)$ \citep{bellemare17distributional,sutton18reinforcement}, defined by the random-variable equation
\begin{equation*}
    G^\pi(s) = \sum_{t=0}^\infty \gamma^t \rew_{s_t}^{a_t}, \quad s_0 = s, a_t \sim \pi(\cdot \cbar s_t) .
\end{equation*}
We assume that $s_i$ is drawn uniformly at random from $\sspace$.\footnote{Results for a larger class of distributions are given in \cref{app:basicboundmoregeneral}} The batch Monte Carlo setting obviates some of the technical challenges in analyzing iterative methods such as least-squares TD~(LSTD) but still allows us to provide practically-relevant theoretical guarantees.

We measure the quality of a linear approximation $\vfapprox$ in terms of the expected squared error
\begin{equation}
    R(\vfapprox) = \frac{1}{S} \sum_{s \in \sspace} \expect_{y \sim G^\pi(s)} \big (\vfapprox(s) - y \big)^2 . \label{eq:squarederror}
\end{equation}
For a value function $V$, we express this error and related quantities in terms of the uniformly-weighted $L^2$ norm
\begin{equation*}
    \unorm{V} = \sqrt{\frac{1}{S} \sum_{s \in \sspace} \big (V(s) \big)^2} .
\end{equation*}
Following terminology from statistical learning theory \citep{vapnik1995nature}, we call $R(\vfapprox)$ the \emph{population risk} of $\vfapprox$. 
One can verify that $R(\vfapprox)$ is minimized when $\vfapprox = V^\pi$.

Given the dataset $D$ and a fixed state representation $\phi$, least-squares regression determines the weight vector $\hat w$ minimizing the \emph{empirical risk function}
\begin{equation*}
    \hat{R}(\vfapprox) = \frac{1}{n}\sum_{i=1}^n(\vfapprox(s_i) - y_i)^2.
\end{equation*}
Notice that $\hat{R}$ is a random function as it depends on the training sample $D$.

We are interested in the performance of the least-squares approximation $\vfhatapprox$ compared to the true value function $V^\pi$. Let us denote by $\vfoptapprox$ the linear approximation minimizing the population risk, such that
\begin{equation*}
    w^* = \argmin_{w \in \rR^k} R(\vfapprox).
\end{equation*}
For clarity of exposition, we will assume this approximation is unique.
The \emph{excess risk} $\cE(\vfhatapprox) = R(\vfhatapprox) - R(V^\pi)$ measures the additional error suffered by the approximation $\vfhatapprox$ compared to the true value function. We decompose it into an estimation error term, measuring the performance gap with the best-in-class, and an approximation error term arising from considering a restricted set of $k$-dimensional value function approximations:
\begin{align*}
 \cE(\vfhatapprox) = \underbrace{R(\vfhatapprox) - R(\vfoptapprox)}_{\text{estimation error}}
 +\underbrace{R(\vfoptapprox) - R(V^\pi)}_{\text{approximation error}} .
\end{align*}
\subsection{The Successor Representation}

The successor representation \citep{dayan1993improving} describes a state in terms of the frequency at which it visits future states; it is also related to the fundamental matrix in the study of Markov chains see \citet{kemeny1961finite, bremaud2013markov, grinstead2012introduction}.
\begin{definition}
The successor representation (SR) 
with respect to a policy $\pi$ for a state $s \in \sspace$ is the expected discounted sum of future occupancies for each state $s' \in \sspace$.
Specifically, $\psi^\pi(s) = ( \psi^\pi(s, s') )_{s' \in \sspace}$, where 
\begin{equation*}
    \psi^{\pi}(s, s') = \expect_{\pi, \prob}\left[\sum_{t=0}^{\infty} \gamma^{t} \mathbb{I}\left[s_{t}=s'\right] \mid s_{0}=s\right].
\end{equation*}
Expressed as a matrix $\Psi^\pi \in \rR^{S \times S}$, the successor representation can be written as:
\begin{equation*}
    \Psi^{\pi} = \left(I-\gamma P_{\pi}\right)^{-1}.
\end{equation*}
\end{definition}
As a consequence of the Bellman equation, 
we can express the value function in terms of the
successor representation as follows:
\begin{equation*}
    V^\pi = \Psi^\pi \rpi .
\end{equation*}
This makes it a particularly appealing candidate to use as a state representation. In particular, it is well-established that the top eigenvectors \citep{mahadevan2007proto} or singular vectors \citep{behzadian2018low} of the successor representation form a useful representation \citep{stachenfeld14design}. \citet{petrik2007analysis} derived an analytical bound on the approximation error for linear value function approximation for a representation made of the top eigenvectors of $\Psi^\pi$ in the particular setting where $P_{\pi}$ is symmetric. By contrast, in this paper, we consider the more general setting of an arbitrary transition matrix $P_{\pi}$ and consider a generalization bound that accounts for the statistical nature of the learning process.

\section{CHARACTERIZING EXCESS RISK}
\label{sec:genbounds}

Our first result characterizes how the choice of representation affects the generalization of value functions. Theorem \ref{thm:main_gen_error} applies beyond the setting of reinforcement learning, and more generally characterizes the excess risk of a broad class of least-squares regression problems. 

To begin, we assume that the labels $y_1, \dots, y_n$ satisfy
\begin{equation*}
    y_i = V(s_i) + \eta_i,
\end{equation*}
where $V : \sspace \to \rR$ and $\eta_i$ is i.i.d.\ zero mean $\sigma$-sub-Gaussian noise \citep{vershynin2010introduction}. This includes the batch Monte Carlo setting, in which case $V = V^\pi$ and $\eta_i \overset{D}{=} G^\pi(s_i) - V^\pi(s_i)$, where $G^\pi(s_i)$ is the random return from $s_i$.

For a feature matrix $\Phi$, we write $P_{\Phi}$ for the orthogonal projector onto its column space, and $\Pperpphi$ for the orthogonal projector onto the corresponding nullspace. We have
\begin{equation*}
    P_\Phi = \Phi (\Phi^\T \Phi)^{-1} \Phi^\T \quad \Pperpphi = I_S - P_\Phi .
\end{equation*}
In particular, the approximation error for a given state representation $\phi$ is
\begin{equation*}
    R(\vfoptapprox) - R(V) = \unorm{\Pperpphi V}^2.
\end{equation*}
A key quantity in our analysis is the notion of the \emph{effective dimension} of a state representation, which dictates the number of samples required to achieve a low estimation error.

\begin{definition}[Effective dimension]
Let $\Phi \in \rR^{S \times k}$ be a feature matrix.
The effective dimension of $\Phi$ (vis-a-vis the standard basis $(e_i)$)
is defined as the quantity
\begin{equation*}
    \deff(\Phi) := S\max_{i=1,...,S} \norm{P_{\Phi} e_i}^2_2, 
\end{equation*}
where $P_{\Phi}$ is the orthogonal projector onto the column space of $\Phi$.
\end{definition}
It is simple to check that the effective dimension is only a function of the column
space of $\Phi$ and that $\deff(\Phi)$ satisfies
\begin{equation*}
    \mathrm{rank}(\Phi) \leq \deff(\Phi) \leq S.
\end{equation*}
Our notion of effective dimension is derived from the \emph{coherence} of $\Phi$, defined as
\begin{equation*}
    \mu(\Phi)=\frac{d_{\mathrm{eff}}}{\mathrm{rank}(\Phi)} .
\end{equation*}
The notion of coherence is from \cite{candes2009exact}, who demonstrate that coherence can be used to characterize the feasibility of low-rank matrix recovery. Informally,
$\mu(\Phi)$ (and $\deff(\Phi)$) measure the (lack of) sparsity of the column space of $\Phi$.
At one extreme, if $\Phi \in \rR^{S \times 1}$ is the all-ones vector, then $\deff(\Phi) = {\mathrm{rank}(\Phi)}$,
saturating the lower bound. %
On the other hand, if $\Phi = e_i$ for some $i \in \{1, \dots, S\}$ then $\deff(\Phi) = S$, saturating the upper bound. As we now show, the effective dimension of $\Phi$ can be used to bound the excess risk of least-squares regression applied to the state representation $\phi$.
\begin{restatable}[Excess risk]{theorem}{maingenerror}
\label{thm:main_gen_error}
Fix any $\delta \in (0, 1)$. 
Suppose that $n \geq 8 \deff(\Phi) \log(6k/\delta)$.
With probability at least $1-\delta$,
the empirical risk minimizer $\vfhatapprox$ satisfies:
\begin{align*}
    &\cE(\vfhatapprox)
    \leq \unorm{\Pperpphi V}^2 + 384 c \frac{ \deff(\Phi)}{n}\unorm{\Pperpphi V}^2 \\
    &+ 48 \sigma^2 \frac{2k + 3 c}{n} + \frac{64}{3} \frac{\deff(\Phi)}{n^2} \norm{ P^\perp_{\Phi} V}_\infty^2 c^2,
\end{align*}
where $c = \log(3/\delta)$ and $\norm{\cdot}_{\infty}$ denotes the usual supremum norm.
\end{restatable}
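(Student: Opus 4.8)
The plan is to split the excess risk into the estimation and approximation pieces already identified in the text, bound the approximation piece exactly, and control the estimation piece through matrix concentration. First I would record the population identity $R(\vfapprox) - R(V) = \unorm{\Phi w - V}^2$, which holds because $R(V) = \frac1S\sum_s \Var(G^\pi(s))$ is the irreducible variance and is independent of $w$. Since $w^*$ minimizes $\unorm{\Phi w - V}^2$, we have $\Phi w^* = P_\Phi V$, and writing $\Phi\hat w - V = \Phi(\hat w - w^*) - \Pperpphi V$ with the first summand in the column space of $\Phi$ and the second orthogonal to it, Pythagoras gives
\[
\cE(\vfhatapprox) = \unorm{\Phi(\hat w - w^*)}^2 + \unorm{\Pperpphi V}^2 .
\]
The second term is exactly the approximation-error term in the statement, so the entire task reduces to bounding the estimation error $\unorm{\Phi(\hat w - w^*)}^2$.

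To analyze it, I would pass to an orthonormal basis $U \in \rR^{S\times k}$ of the column space of $\Phi$ (so $U^\top U = I_k$, $UU^\top = P_\Phi$, and $\max_s \twonorm{U^\top e_s}^2 = \deff(\Phi)/S$), and reparametrize $\Phi w = U\theta$. Letting $\Omega \in \rR^{n\times S}$ be the random row-selection matrix with rows $e_{s_i}^\top$, the normal equations yield the decomposition
\[
\hat\theta - \theta^* = (U^\top\Omega^\top\Omega U)^{-1} U^\top\Omega^\top\Omega\, \Pperpphi V + (U^\top\Omega^\top\Omega U)^{-1} U^\top\Omega^\top \eta ,
\]
a bias term driven by the approximation residual $r := \Pperpphi V$ and a variance term driven by the noise $\eta$. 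Using $U^\top U = I_k$ we have $\unorm{\Phi(\hat w - w^*)}^2 = \frac1S\twonorm{\hat\theta - \theta^*}^2$, which I would split into at most twice the squared norm of each piece. The key structural observation, and the reason the residual only enters at order $\deff/n$, is that $U^\top r = 0$: hence $U^\top\Omega^\top\Omega r = \sum_i (U^\top e_{s_i})\, r(s_i)$ is a sum of independent, mean-zero vectors, since $\E[(U^\top e_{s_i}) r(s_i)] = \frac1S U^\top r = 0$.

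The probabilistic core is three concentration estimates, each allocated probability $\delta/3$ (so that the tail level $c = \log(3/\delta)$ appears). For the Gram matrix $M := \frac{S}{n}U^\top\Omega^\top\Omega U = \frac{S}{n}\sum_i (U^\top e_{s_i})(U^\top e_{s_i})^\top$, whose summands are PSD with operator norm at most $\deff(\Phi)/n$ and $\E[M] = I_k$, a matrix Chernoff bound gives $\tfrac12 I_k \preceq M \preceq \tfrac32 I_k$ precisely when $n \geq 8\deff(\Phi)\log(6k/\delta)$, which is the sample-size hypothesis (the $6k/\delta$ arising from a two-sided tail under a $\delta/3$ budget); this yields $M^{-2}\preceq 4 I_k$ and $\Tr(M)\leq \tfrac32 k$. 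On this event I would bound the variance term by applying the Hanson--Wright inequality to the quadratic form $\eta^\top(\Omega U)(\Omega U)^\top\eta$, whose conditional mean is $\sigma^2\Tr(U^\top\Omega^\top\Omega U)\leq \tfrac32\sigma^2 nk/S$, producing the $\sigma^2(2k+3c)/n$ term. For the bias term I would apply a vector Bernstein inequality to the mean-zero sum $\sum_i (U^\top e_{s_i}) r(s_i)$, using the per-term variance bound $\E\twonorm{(U^\top e_{s_i})r(s_i)}^2 \leq (\deff/S)\unorm{r}^2$ and the almost-sure bound $\twonorm{(U^\top e_{s_i}) r(s_i)}\leq \sqrt{\deff/S}\,\maxnorm{r}$; squaring the resulting two-term tail bound reproduces the $c\,\deff/n\,\unorm{\Pperpphi V}^2$ and $\deff/n^2\,\maxnorm{\Pperpphi V}^2 c^2$ contributions.

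Finally I would collect the three events by a union bound and substitute back through $M^{-2}\preceq 4I_k$ and the factor-of-two split, absorbing all numerical factors into the stated constants. The main obstacle is not any single step but the bookkeeping needed to keep the three concentration statements on a common high-probability event and to track constants through the whitening, the operator-norm control of $M^{-1}$, and the variance/tail terms of the Bernstein bound; conceptually, the one idea that must be gotten right is the orthogonality $U^\top\Pperpphi V = 0$, without which the bias term would contribute at order one rather than $\deff/n$ and the bound would be vacuous.
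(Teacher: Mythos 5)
Your proposal is correct and follows essentially the same route as the paper's proof: the same Pythagorean split into approximation and estimation error, the same whitened normal-equations decomposition into a residual-driven bias term and a noise term (with the crucial mean-zero observation $U^\top \Pperpphi V = 0$, which appears in the paper as $\E[q_i]=0$ in its vector-Bernstein lemma), and the same three concentration tools — matrix Chernoff for the Gram matrix, vector Bernstein for the bias, Hanson--Wright for the noise — combined by a union bound over three $\delta/3$ events. The only differences are cosmetic: the paper proves a more general statement for an arbitrary sampling distribution $\nu$ with $\nu_{\min}>0$ and then specializes to the uniform case, and its matrix Chernoff upper tail is taken at $4n$ rather than your $\tfrac{3}{2}$ (your choice requires a marginally larger constant than $8$ in the sample-size hypothesis, which the constants you absorb easily cover).
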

\begin{proof} 
The proof is given in \cref{app:basicboundmoregeneral},
and follows arguments for the analysis of
random design linear least-squares problems \citep{hsu2012random} and matrix concentration inequalities \citep{tropp15introduction}. The result can also be obtained by instantiating Theorem 1 of \citet{hsu2012random} to our setting, at the cost of added complexity.
\end{proof}

In \cref{thm:main_gen_error}, the term $\unorm{\Pperpphi V}^2$ is the approximation error and reflects the error due to using a $k$-dimensional linear approximation. The remainder of the bound corresponds to the estimation error. The theorem demonstrates that the ability of a representation to generalize is quantified not only by the approximation error but also the effective dimension $\deff(\Phi)$.
Not only does $\deff(\Phi)$ appear in the bound, but it also dictates a minimum number of samples needed to obtain a high probability bound: when $\deff(\Phi)$ is small, the bound holds for fewer samples.

In the specific context of batch Monte Carlo policy evaluation,  \cref{thm:main_gen_error} holds as-is with $V = V^\pi$. Additionally, the noise variance $\sigma^2$ can be bounded as
\begin{equation*}
    \sigma^2 \le \frac{\Vmax^2}{4} .
\end{equation*}

The term $\norm{P_{\Phi} e_i}^2_2$ that drives the effective dimension of $\Phi$ differs (for non orthogonal representations $\Phi$) from the quantity $\max_i \| \phi(s_i) \|_2^2$ that appears in Rademacher complexity bounds for regression in the case of a family of linear predictors \citep{mohri2018foundations}
(see also \cite{maillard2009compressed}). Compared to such bounds, \cref{thm:main_gen_error} is also sharper for all representations as it offers a $O(1/n)$ dependency rather than $O(1/\sqrt{n})$. In subsequent sections, we will provide empirical evidence illustrating how the effective dimension plays a critical role in determining the generalization capability of $\phi$.

\subsection{Illustrative Examples}

To understand how the bound is instantiated in particular settings, consider first the scenario in which $\Phi = I_S$ is the tabular representation. 
This corresponds to using the feature vector $e_{i} \in \rR^S$ for the $i$-th state.
In this case, the approximation error is 0 and the estimation error reduces to the classic $\sigma^2 S/n$ rate for least-squares regression:
\begin{align*}
    R(\vfhatapprox) - R(V) \lesssim  \frac{ \sigma^2 (S + \log(1/\delta))}{n}.
\end{align*}
With this choice of features, good generalization requires a number of samples $n$ linear in $S$.

At the other extreme, it is possible to improve the sample complexity to avoid the dependency on $S$. In the ideal case, $\deff(\Phi) = k$. In the next section we will demonstrate that, in environments with a particular transition structure, representations derived from the successor representation achieve this bound. 

To make this argument more concrete,
suppose that we have a family $(\phi_k)_{k=1}^{S}$ of representations (resp. matrices $(\Phi_k)$) whose effective dimension satisfies
$\deff(\Phi_k) \approx k$. Furthermore, assume that the approximation error $\unorm{P^\perp_{\Phi_k} V}^2$
scales as $\psi(k)$, where $\psi(k)$ is a monotonically
decreasing function of $k$.
Fix $\varepsilon > 0$ and define $\bar{k} = \bar{k}(\varepsilon) := \min\{ k : \psi(k) \leq \varepsilon \}$, and let $\bar w$ be the weight vector found by least-squares regression applied with $\phi_{\bar{k}}$.
Observe that as long as $n$ satisfies:
\begin{align*}
    n \gtrsim \max\left\{ \max\Big\{ \frac{\sigma^2}{\varepsilon}, 1 \Big\} \bar{k}(\varepsilon) \log\frac{\bar{k}(\varepsilon)}{\delta}, \sqrt{\bar{k}(\varepsilon) S} \log \frac{1}{\delta} \right\},
\end{align*}
then we have $\cE(V_{\phi_{\bar k}, \bar w}) \leq 4\varepsilon$.
As a particular example,
let $\psi(k) = \rho^k$ for some $\rho \in (0, 1)$.
Then $\bar{k}(\varepsilon) \leq \ceil{ \frac{1}{1-\rho} \log\left(\frac{1}{\varepsilon}\right)}$,
in which case the sample complexity only depends sublinearly on $S$.

\section{GENERALIZATION FOR THE SUCCESSOR REPRESENTATION}
\label{sec:sr-bound}
An effective approach for constructing a family of representations is to take the $k$ singular vectors of the successor representation (SR) whose singular values are the greatest. For a given policy $\pi$, let $\Psi^\pi$ be the successor representation for $\pi$. We write
\begin{equation*}
    \Psi^\pi = F \Sigma B^\top,
\end{equation*}
where $F, B \in \rR^{S \times S}$ are matrices whose columns are orthogonal and have unit norm. Additionally, $\Sigma = \mathrm{diag}(\sigma_1, ..., \sigma_S)$ where $\sigma_i$ are the singular values of $\Psi$ sorted in decreasing order.

For a fixed integer $k$ satisfying $1 \leq k \leq S$,
let us partition $F$ into two matrices, $F_k \in \rR^{S \times k}$ and $F^\perp_k$, which respectively contains the top $k$ and bottom $S - k$ columns of $F$. Correspondingly, we partition $\Sigma$ into $\Sigma_k \in \rR^{k \times k}$ and $\Sigma_k^\perp$ and $B$ into $B_k$ and $B_k^\perp$. With this notation, we obtain the family of state representations (expressed as feature matrices) $\Phi_k = F_k$.

\begin{figure*}[htb]
  \centering
    \includegraphics[width=0.8\textwidth]{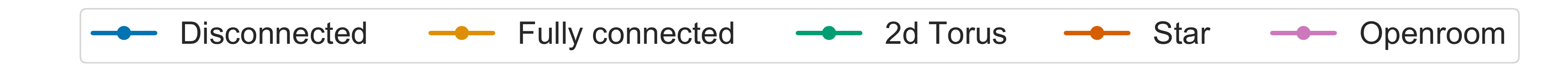}
  \includegraphics[width=0.38\textwidth]{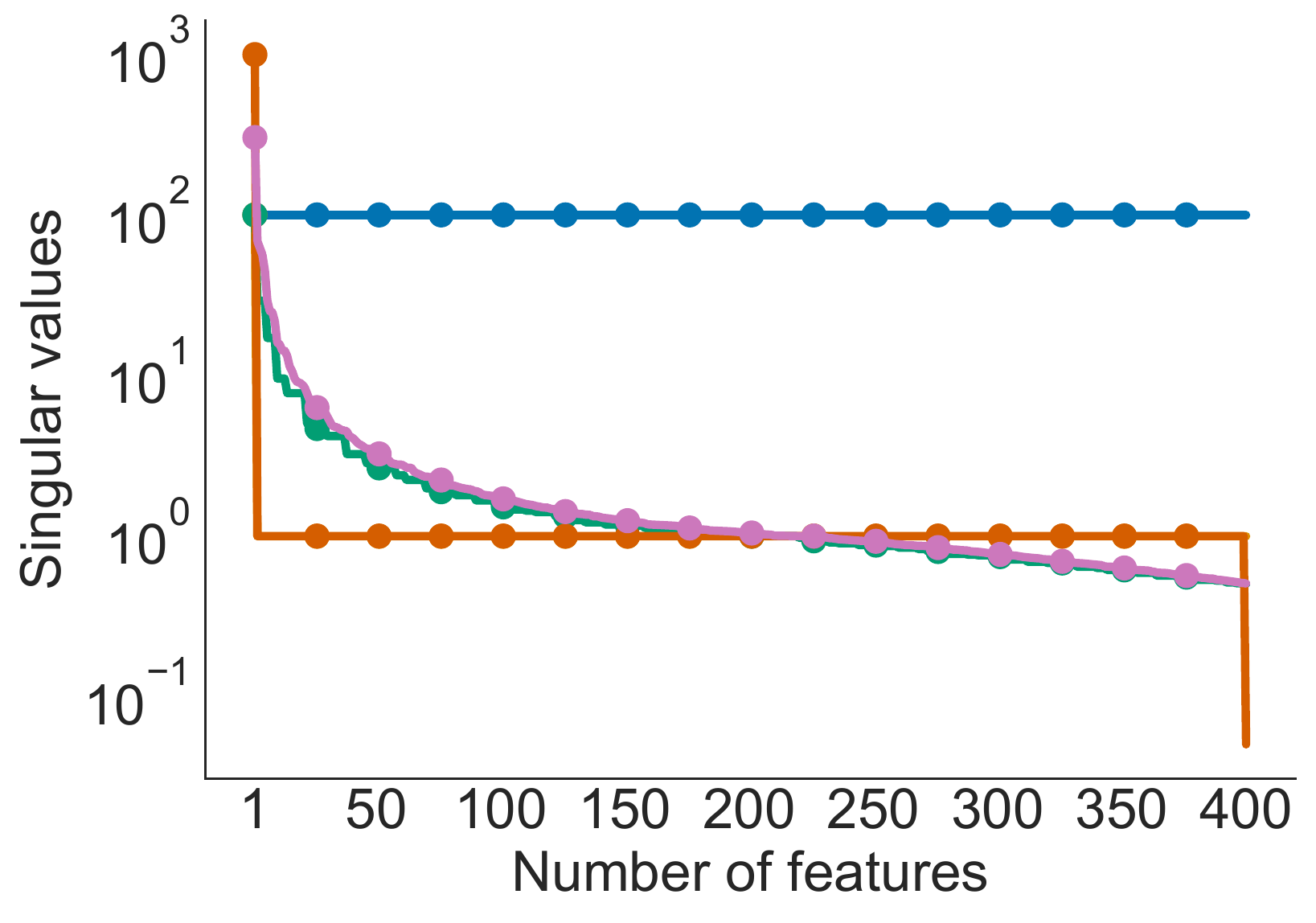}
  \includegraphics[width=0.38\textwidth]{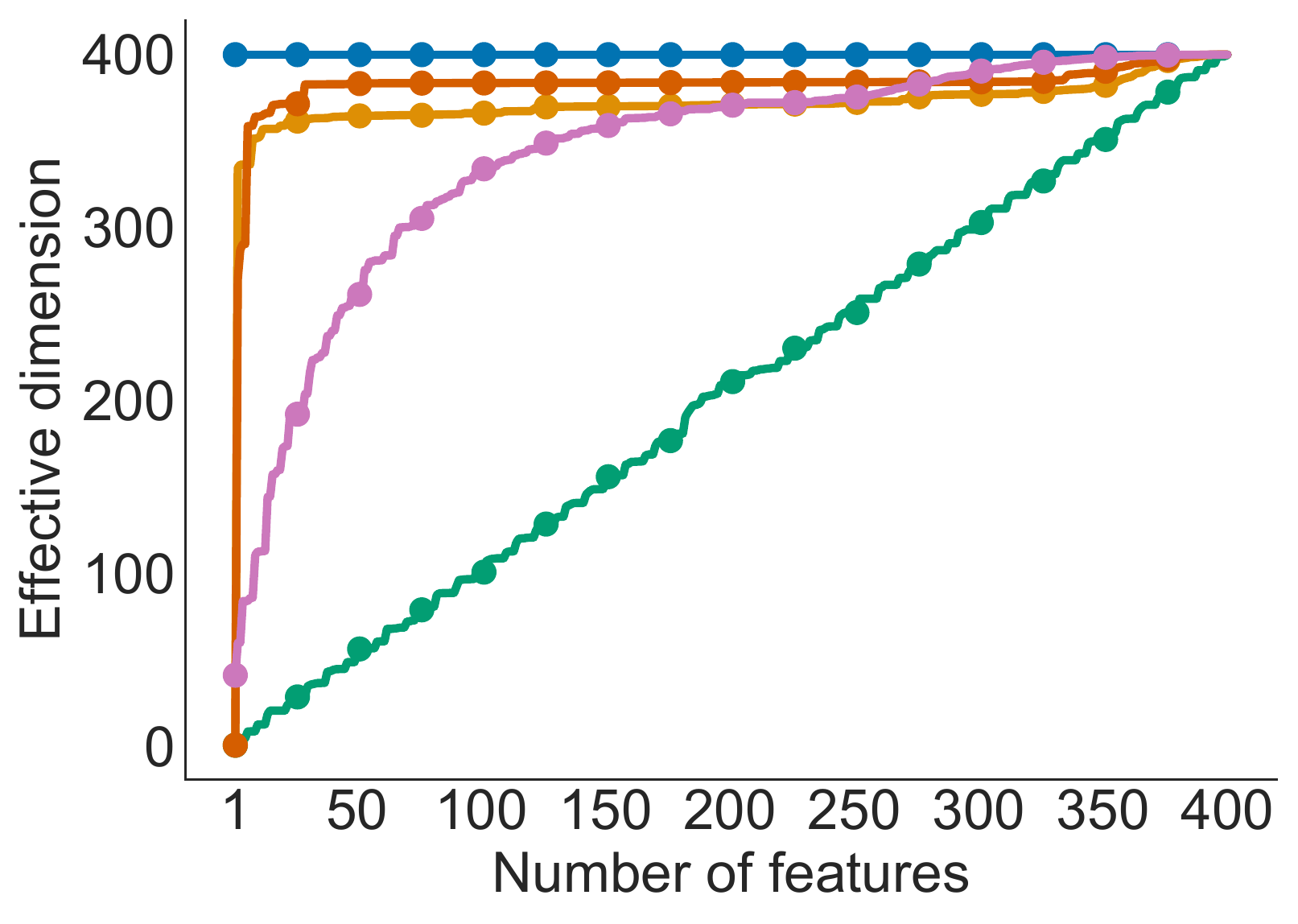}
  \includegraphics[width=0.38\textwidth]{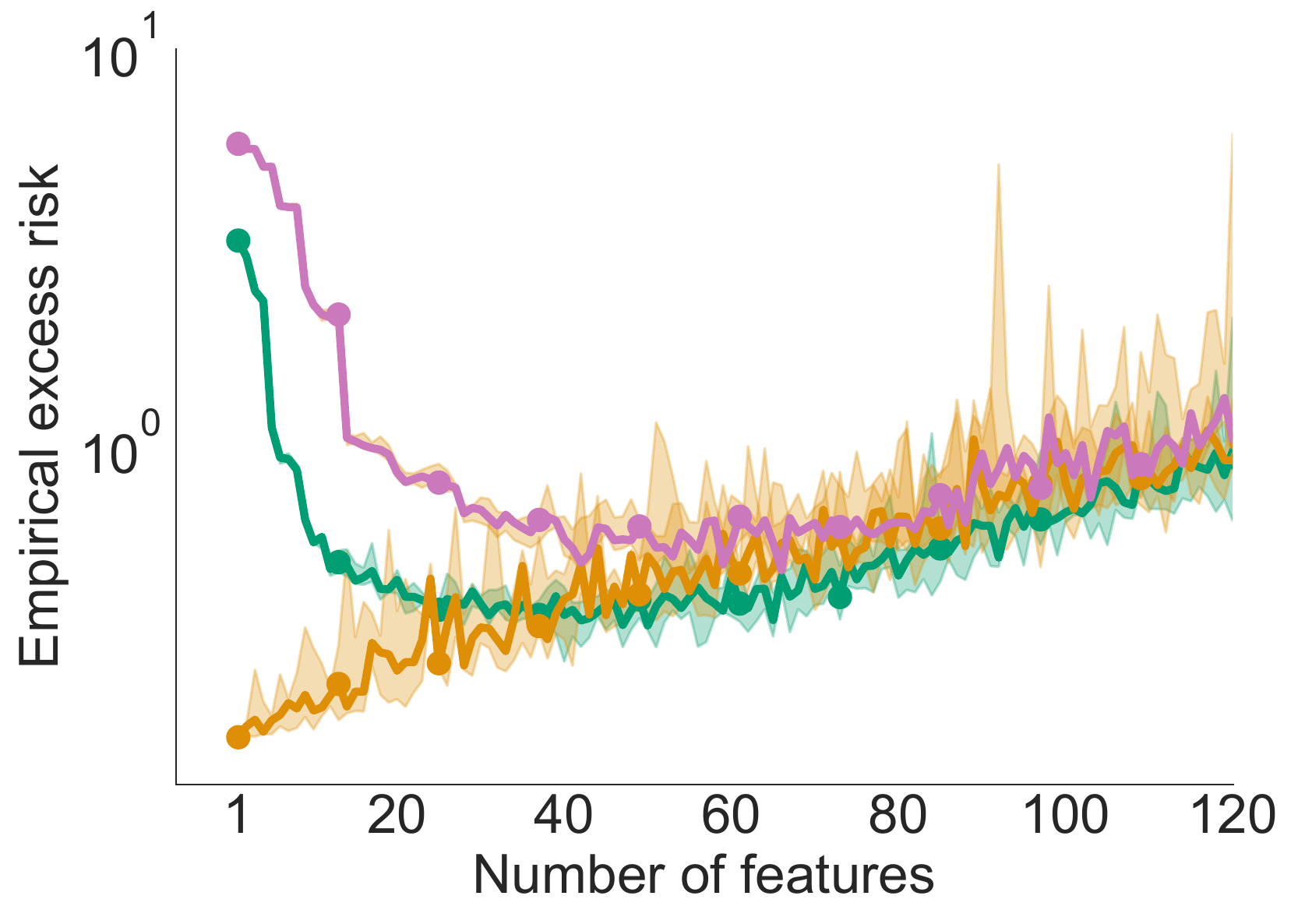}
  \includegraphics[width=0.38\textwidth]{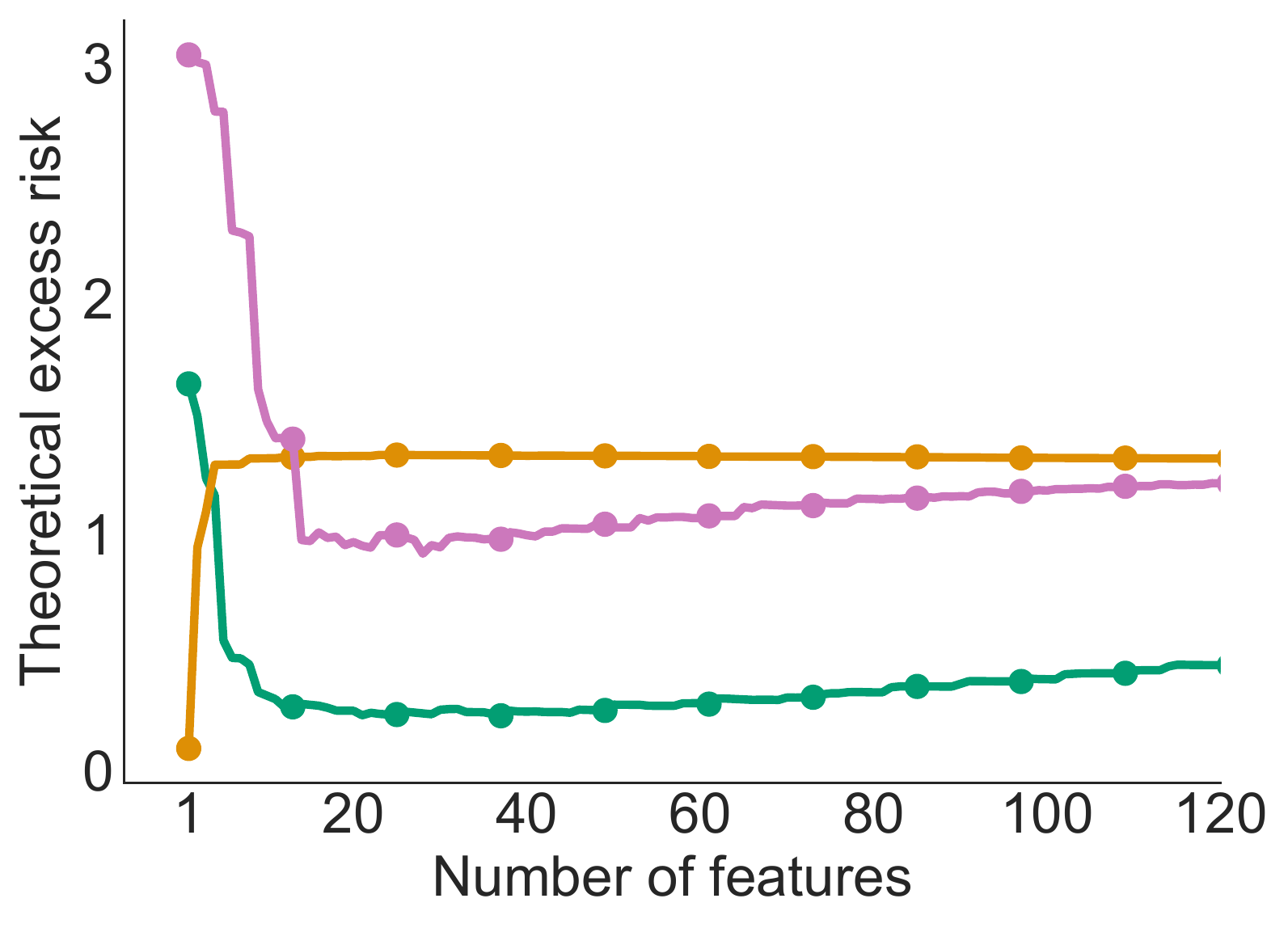}
    \vspace{-1.em}
  \caption{\textbf{Top left}: Singular values of the successor representation $\Psi^\pi$, in decreasing order and for different graphical structures (the fully connected and star graphs' spectra overlap). \textbf{Top right}: Effective dimension of the representation $\Phi_k = F_k$. \textbf{Bottom left and right}: Median empirical excess risk over 10 runs, with 95\% CIs as shaded regions, and theoretical excess risk, respectively, for the open room, torus, and fully connected graphs.}
  \label{fig:effectivedim_toymdps}
\end{figure*}

\subsection{Approximation Error: $\unorm{\Pperpphi V^\pi}^2$}
\label{sec:sr-approxerror}
Given a reward vector $r_{\pi} \in \rR^S$,
the value function $V^{\pi} \in \rR^S$ is given by $V^{\pi} = \Psi^{\pi} r_{\pi}$.
As demonstrated by \cref{thm:main_gen_error},
the first key quantity that appears
in the generalization bound is the approximation error $\unorm{P^\perp_{F_k} V^{\pi}}^2$.
With the successor representation, we can write:
\begin{align*}
    \unorm{P^\perp_{F_k} V^{\pi}}^2 = \unorm{P^\perp_{F_k} \Psi^{\pi} r_{\pi}}^2 = \unorm{F_k^\perp \Sigma_k^\perp (B_k^\perp)^\T r_{\pi}}^2.
\end{align*}
Following the argument from \citet{petrik2007analysis} for the specific case of proto-value functions \citep{mahadevan2007proto}, the worst-case unit-norm reward vector $r_\pi$ in this case approximately corresponds to the $(k+1)$-th vector $b_{k+1}$. This is because
\begin{equation*}
    F_k^\perp \Sigma_k^\perp (B_k^\perp)^\T b_{k+1} = f_{k+1} \sigma_{k+1},
\end{equation*}
and the fact that $\sigma_{k+1} \ge \sigma_{k+i}$, for all $i \ge 1$. To make the bound comparable for different $k$ and MDPs, let us fix $\Rmax$ and write
\begin{equation}\label{eqn:worst-case-reward}
    r_\pi = \frac{b_{k+1} \Rmax}{\norm{b_{k+1}}_\infty} .
\end{equation}
In this case, since $\norm{f_{k+1}}_2^2 = 1$, we have that
\begin{equation*}
    \unorm{P^\perp_{F_k} V^\pi}^2 \le \frac{\sigma_{k+1}^2 \Rmax^2}{S \norm{b_{k+1}}^2_\infty} \le \sigma_{k+1}^2 \Rmax^2 .
\end{equation*}
The dependence on $\norm{b_{k+1}}_\infty$ relates to the operator norm of $\Psi$ from $L^2$ to $L^\infty$, and illustrates how $b_{k+1}$ is only approximately the worst-case reward vector.

A frequent scenario in reinforcement learning occurs when the reward is nonzero in a single state. Suppose that the reward vector $r_{\pi}$ is $r_{\pi} = R_{\max} e_i$ for some $i \in \{ i, \dots, S \}$. Then we have that:
\begin{align*}
    \unorm{P^\perp_{F_k} V^{\pi}}^2 &= \frac{R_{\max}^2 \mathrm{tr}( (\Sigma_k^\perp)^2 ) \norm{(B_k^\perp)^\top e_i}_2^2}{S} \\
    &\leq \frac{\sigma_{k+1}^2 R_{\max}^2  \deff(B_k^\perp)}{S}. 
\end{align*}
\looseness=-1 When the effective dimension of $B_k^\perp$ is $O(S-k)$, the approximation error may be a factor $\tfrac{S-k}{S}$ smaller than the error for the worst-case reward vector (\cref{eqn:worst-case-reward}).

\looseness=-1 These arguments show that the generalization quality of a given family of representations can be partially quantified in terms of its spectrum $(\sigma_i)_{i=1}^S$.
When the transition matrix is symmetric, we can bound the spectrum $(\sigma_i)_{i=1}^S$ in terms of the effective horizon implied by the discount factor. This is given by the following lemma.
\begin{restatable}[]{lemma}{singularvalues}
\label{lemma:singularvalues}
Let $P \in \rR^{|\sspace| \times |\sspace|}$ be a symmetric row stochastic matrix, and let $\gamma \in (0, 1)$.
Let $\sigma(\cdot)$ denote the set of singular values of a matrix.
We have that:
\begin{align*}
    \sigma((I - \gamma P)^{-1}) \subseteq \Big [\tfrac{1}{1+\gamma}, \tfrac{1}{1-\gamma} \Big ].
\end{align*}
\end{restatable}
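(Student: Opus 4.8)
The plan is to exploit the symmetry of $P$ so that the entire argument reduces to reasoning about eigenvalues. First I would observe that since $P$ is symmetric, $I - \gamma P$ is symmetric, and hence so is $(I - \gamma P)^{-1}$. For a real symmetric matrix the singular values are exactly the absolute values of the eigenvalues, so it suffices to locate the spectrum of $(I-\gamma P)^{-1}$. By the spectral mapping theorem, if $\{\lambda_i\}$ denotes the (real) eigenvalues of $P$, then the eigenvalues of $(I-\gamma P)^{-1}$ are precisely $\frac{1}{1-\gamma\lambda_i}$; each $1-\gamma\lambda_i$ is nonzero once we know $|\lambda_i|\le 1$ and $\gamma<1$, so the inverse is well defined and the singular values are $\big|\frac{1}{1-\gamma\lambda_i}\big|$.

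The crux is therefore to show $\lambda_i \in [-1,1]$, and this is the one step that uses stochasticity rather than symmetry alone. Since $P$ is symmetric with nonnegative entries and unit row sums, it is doubly stochastic; I would invoke the Gershgorin circle theorem, which places each eigenvalue within distance $\sum_{j\neq i}|P_{ij}| = 1 - P_{ii}$ of the diagonal entry $P_{ii}$, yielding $\lambda_i \in [2P_{ii}-1,\,1] \subseteq [-1,1]$. An equivalent route is to note that the operator (spectral) norm of a symmetric matrix equals its spectral radius, while $\|P\|_{\mathrm{op}} \le \|P\|_\infty = 1$ because the maximum absolute row sum of a stochastic matrix is $1$; combined with the reality of the spectrum this again gives $\lambda_i \in [-1,1]$.

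Finally, with $\lambda_i \in [-1,1]$ and $\gamma \in (0,1)$, I would bound $1 - \gamma\lambda_i \in [1-\gamma,\,1+\gamma]$, a range of strictly positive numbers, so that $\frac{1}{1-\gamma\lambda_i} \in [\frac{1}{1+\gamma},\,\frac{1}{1-\gamma}]$. Because these values are positive they coincide with their own absolute values, and hence every singular value of $(I-\gamma P)^{-1}$ lies in $[\frac{1}{1+\gamma},\,\frac{1}{1-\gamma}]$, which is the claim. I do not expect any genuine obstacle here, as the whole argument is essentially a single chain of elementary facts; the only step that is not immediate is the containment $\lambda_i \in [-1,1]$, and even that is standard once symmetry is used to guarantee real eigenvalues.
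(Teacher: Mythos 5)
Your proof is correct and follows essentially the same route as the paper's: use symmetry of $P$ to identify the singular values of $(I-\gamma P)^{-1}$ with $\bigl|\tfrac{1}{1-\gamma\lambda}\bigr|$ for $\lambda \in \lambda(P)$, bound $\lambda(P) \subseteq [-1,1]$ from stochasticity, and conclude. The only difference is cosmetic: where the paper simply asserts $\rho(P)=1$ for a row stochastic matrix, you spell out the justification via Gershgorin (or the induced-norm bound $\rho(P) \le \norm{P}_\infty = 1$), which is a slightly more careful rendering of the same step.
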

Because the value function is generally of magnitude $\Vmax = \tfrac{\Rmax}{1-\gamma}$, an approximation error of order $\tfrac{1}{1+\gamma}$ is quite small, suggesting that the corresponding basis functions may be safely omitted from the representation.

Intuitively (and as supported by the analysis above), choosing a representation with a larger number of features $k$ reduces the approximation error. However, as will see in the next section, a larger $k$ necessarily increases the effective dimension, often in a manner that is superlinear in $k$.

\begin{figure*}[t]
    \centering
 \includegraphics[width=0.24\textwidth]{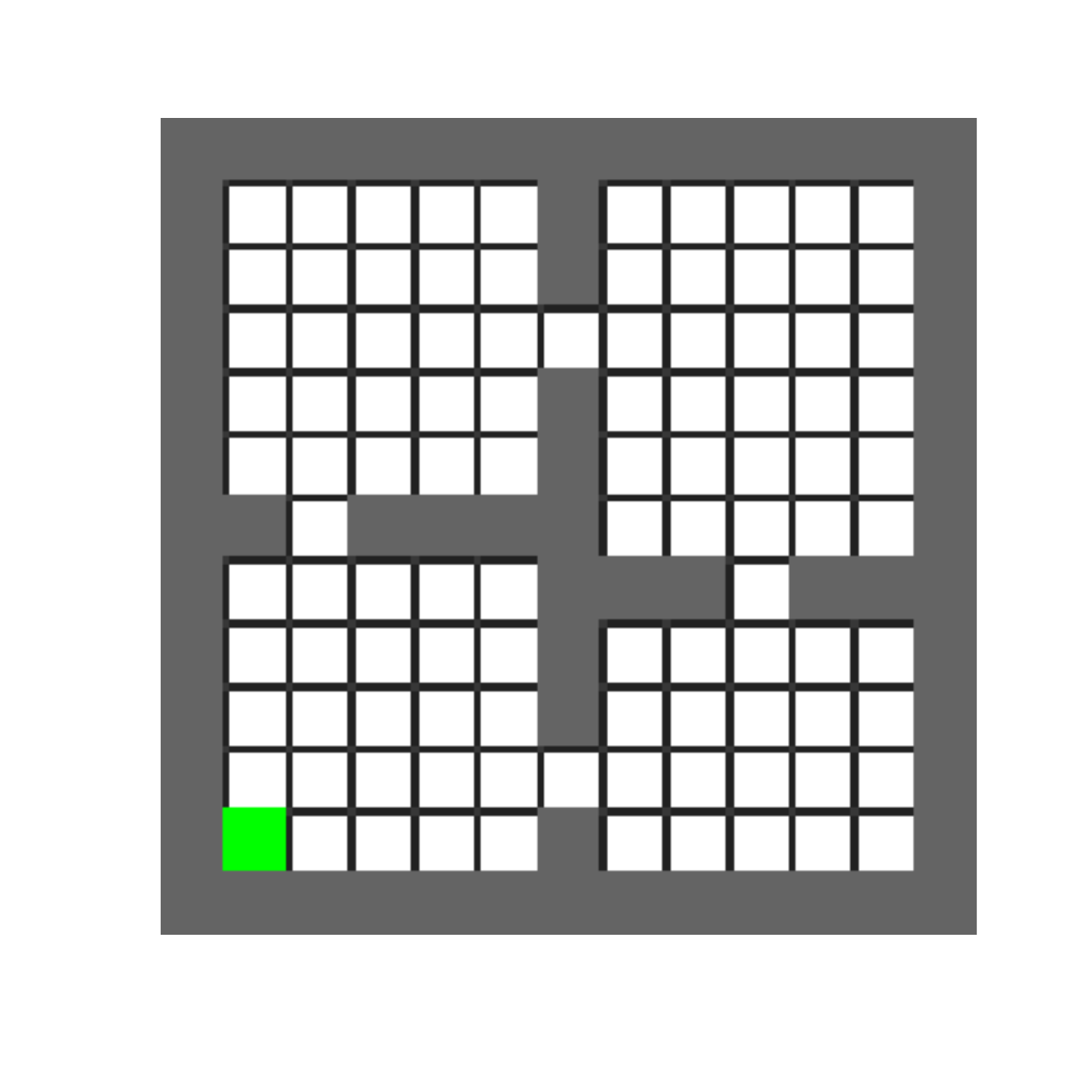}
    \includegraphics[width=0.75\textwidth]{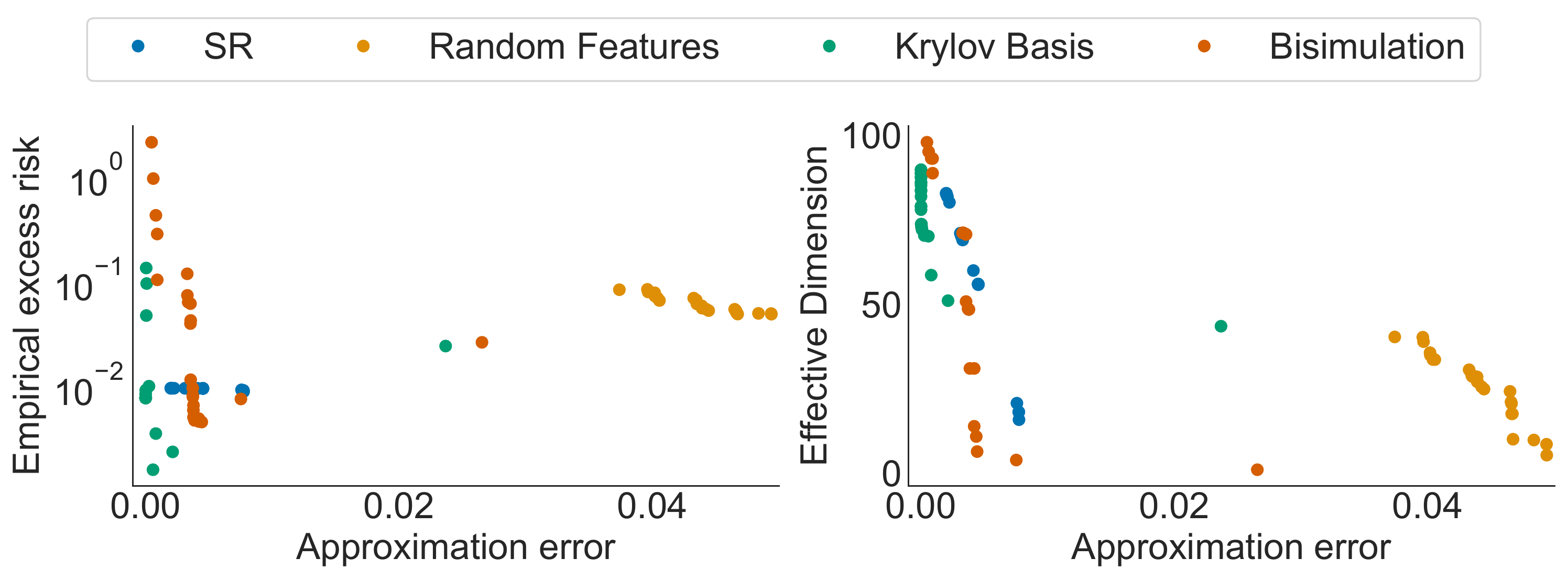}
    \caption{\looseness=-1 The four-room domain (\textbf{Left}). Median empirical excess risk (\textbf{Middle}) and effective dimension (\textbf{Right}) as a function of approximation error for the top $k$ left singular vectors of the SR, random features, the Krylov basis and the bisimulation metric matrix in the four-room domain.}
    \label{fig:representations:comparison}
\end{figure*}

\subsection{Effect of Transition Structure}
\label{sec:effect-structure}
We next study characteristics of families of representations induced by the SVD of the successor representation for different environment transition structures. To this end, we consider different types of graphs over which we define a uniform random walk; the resulting representations are specifically proto-value functions \citep[PVF,][]{mahadevan2007proto}. We consider the two key quantities identified above: the spectrum of the representation, which informs us on the profile of the approximation error $\unorm{P_{F_k}^\top V^\pi}^2$ for different $F_k$, and the effective dimension of $F_k$ as a function of $k$.

\looseness=-1 We consider five graphical structures, each with $S = 400$ states (illustrations of these structures as well as results for additional structures are given in the appendix): a fully-connected graph, Baird's star graph \citep{baird1995residual}, a disconnected graph (on which each node self-transitions), a $20 \times 20$ grid, and a $20 \times 20$ torus. The torus has the same ``shape'' as the grid but allows transitions from one edge to its opposite, while the fully-connected graph is similar to the star graph in that both mix quickly. These graph were chosen to illustrate the diversity in generalization profiles arising from different transition structures. In all cases, $\gamma = 0.99$.

\cref{fig:effectivedim_toymdps}, top left illustrates three types of spectra. The fully-connected and star structures have a flat spectrum, both with an important first component but with a last component that is much smaller in the case of the star structure (see \cref{app:proof-srbound} for a closed-form description of the spectrum of the star graph).
By contrast, the grid and torus exhibit a decaying spectrum, suggesting that attaining a low approximation error may require many features. As expected, the disconnected graph produces a flat spectrum with values $\sigma_i = (1-\gamma)^{-1}$.

\cref{fig:effectivedim_toymdps}, top right shows the effective dimension as a function of the number of features $k$, and paints a relatively different picture. Here, both star and fully-connected graphs exhibit a high effective dimension, despite having relatively simple structure. This is because effective dimension reflects in some sense the degree to which a single sample might give misleading information about the value at other states. Because the first singular vectors capture most of the symmetry in these graphs, additional features must in some sense be misleading. On the other hand, the open room and torus, despite an almost-identical spectrum, exhibit notedly different profiles: while the torus achieves the lower bound $\deff(F_k) \approx k$, the grid results in generally poor features for $k$ large.

To understand the consequences of these characteristic differences, we performed least-squares regression to estimate value functions in three of these structures (fully-connected, grid, and torus). In all cases, we sampled a reward function by assigning rewards to each state-action pair from a normal distribution (see \cref{appendix:experiments}). We then sampled $n = 300$ states with replacement and performed a Monte Carlo rollout to obtain the sample return $(y_i)_{i=1}^n$. We measured the excess risk of the linear approximation found by the least-squares procedure. For each graph structure, we repeated the experiment 10 times.

\cref{fig:effectivedim_toymdps}, bottom depicts the outcome of this experiment.
Experimentally, the PVF of the torus generalizes significantly better than the PVF of the grid (left panel). This is reflected in a heuristic calculation of the theoretical bound (right panel), given more explicitly by the formula
\begin{equation*}
    \unorm{P^\perp_{F_k} V^{\pi}}^2 + \frac{\deff(F_k)}{n} + \frac{\deff(F)}{n^2} \norm{ P^\perp_{F_k} V^\pi}_\infty^2 .
\end{equation*}
The number of features $k$ minimizing the empirical and theoretical excess risk differ, but follow the same qualitative pattern: for small $k$, the open room PVF generalizes poorly, while the minimum is achieved in the fully-connected graph by $k = 1$, highlighting again its high degree of symmetry.

\subsection{Analysis of the One-dimensional Torus}

As evidenced by the experiments of the previous section, the proto-value functions of the two-dimensional torus have particularly appealing generalization characteristics. Analytically, similarly good generalization can be demonstrated on the one-dimensional torus, as we now show.

The one-dimensional torus consists in $S$ states arranged on a chain, such that $s_i$ connects to $s_{i-1}, s_{i+1} \mod S$. As such, the random walk on this torus induces a transition function $P_\pi$ described by a circulant matrix. Since $P_\pi$ is symmetric, we may write\footnote{We ignore the issue of real diagonalizable versus complex diagonalizable.}
\begin{equation*}
    (I-\gamma P_\pi)^{-1} = U_S \Sigma U_S^* .
\end{equation*}
Following \cite{gray06toeplitz}, the $k$-th singular value of $(I - \gamma P_\pi)^{-1}$ is given by
\begin{equation*}
    \sigma_k = \frac{1}{1-\gamma \cos(\frac{2\pi}{S} \ceil{\frac{k-1}{2}})}
\end{equation*}
for $k = 1, ..., S$.\footnote{The spectrum of the torus is briefly mentioned in \citet{blier2021learning}.}
Additionally, we have that $U_S = \frac{1}{\sqrt{S}} F_S^*$, with $(F_S)_{j,k} = \exp(-2\pi i jk/S)$ the discrete Fourier transform matrix in dimension $S$. From this we deduce that each entry of $U_S$ has modulus $1/\sqrt{S}$, and therefore any orthogonal matrix formed from any $k$ distinct columns of $U_S$ will have coherence $1$ and effective dimension $k$. This shows that the proto-value functions of the one-dimensional torus give in some sense an ideal state representation.

\section{EXPERIMENTS}
\label{sec:experiments}
\begin{figure*}[t]
  \centering
  \includegraphics[width=0.96\textwidth]{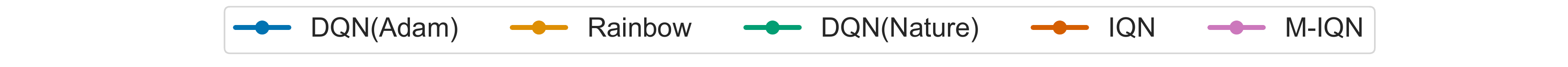}
     \includegraphics[width=0.4\textwidth]{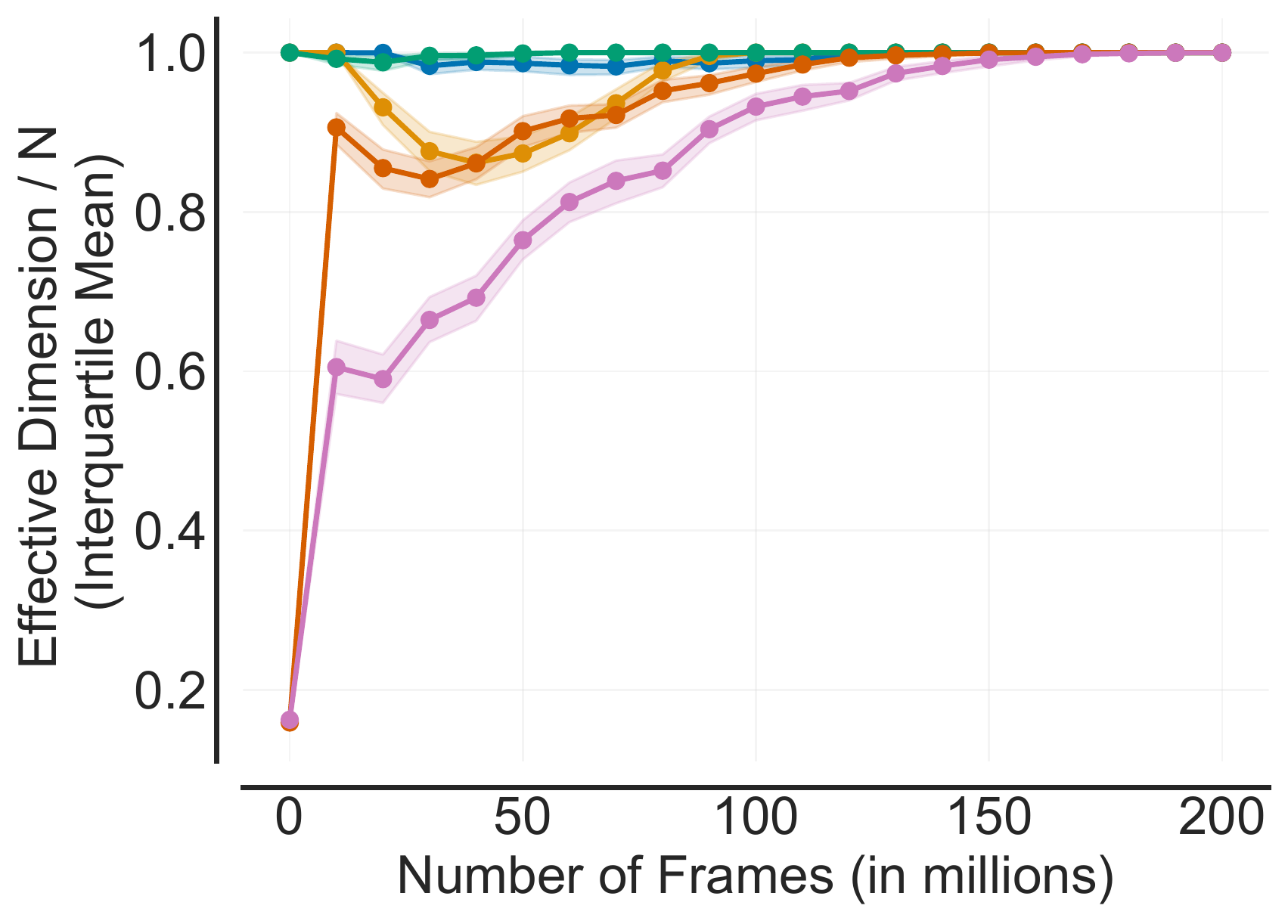}~~~~
       \includegraphics[width=0.4\textwidth]{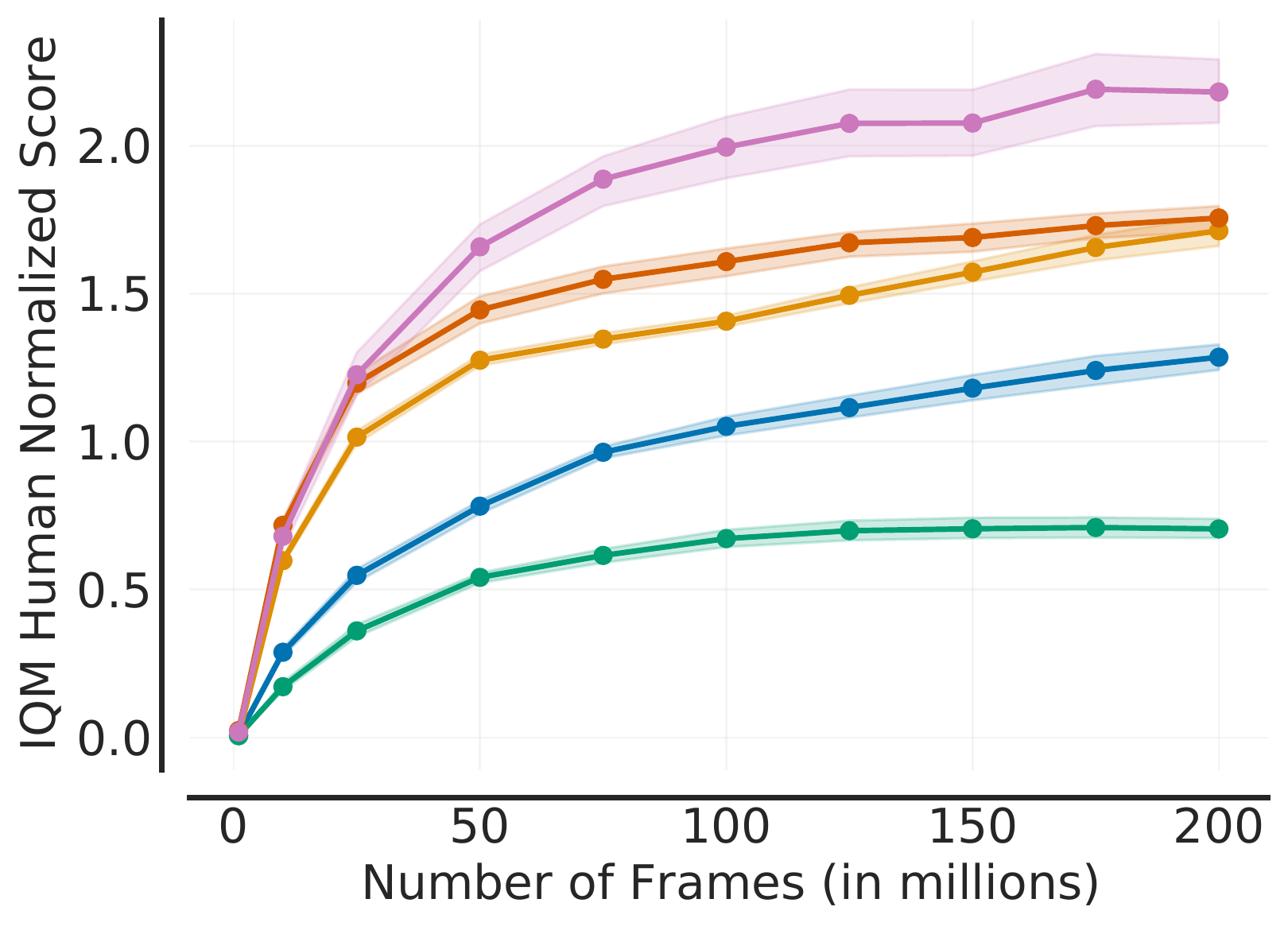}
  \caption{ \looseness=-1 \textbf{Left:} Interquartile mean (IQM)~\citep{agarwal2021deep} for the effective dimension, normalized by the batch size used $N=2^{15}$.  \textbf{Right:} for human-normalized scores over the course of training across 60 Atari games. IQM measures the mean on the middle 50\% of the data points combined across all runs and games. These statistics are over 5 independent runs and shading gives 95\% stratified bootstrap confidence intervals based on Rliable~\citep{agarwal2021deep}.}\label{fig:aggregate_atari}
\end{figure*}

\subsection{Comparing State Representations}
\label{sec:representations}

We now compare the Successor Representation to other theoretically-motivated representations: the bisimulation metric matrix (Ferns et al., 2004), the Krylov basis (Petrik, 2007) and some random features, in terms of effective dimension and excess risk, in the setting of Section 4.2. \cref{fig:representations:comparison} shows some of these results on the four room domain \citep{sutton99between,solway14optimal}. These give further weight to the idea that effective dimension plays an important role in determining the usefulness of a representation, as for a given approximation error better effective dimension corresponds to better excess risk.
 
\looseness=-1 The SR of the four-room domain is fairly well-studied and have been shown to give rise to effective representations \citep{machado17laplacian,bellemare2019geometric}.It generalizes well but has worse approximation error compared to the Krylov basis or the Bisimulation metric which take into account the reward. For small approximation errors, the krylov basis has smaller effective dimension and is performing best. Finally, random features which are agnostic to the structure of the MDP have very high approximation error making them unappealing.

\begin{figure*}[t!]
  \centering
  \includegraphics[width=0.96\textwidth]{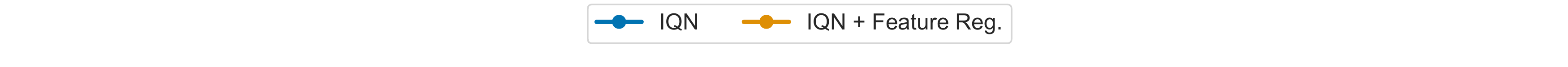}
        \includegraphics[width=0.39\textwidth]{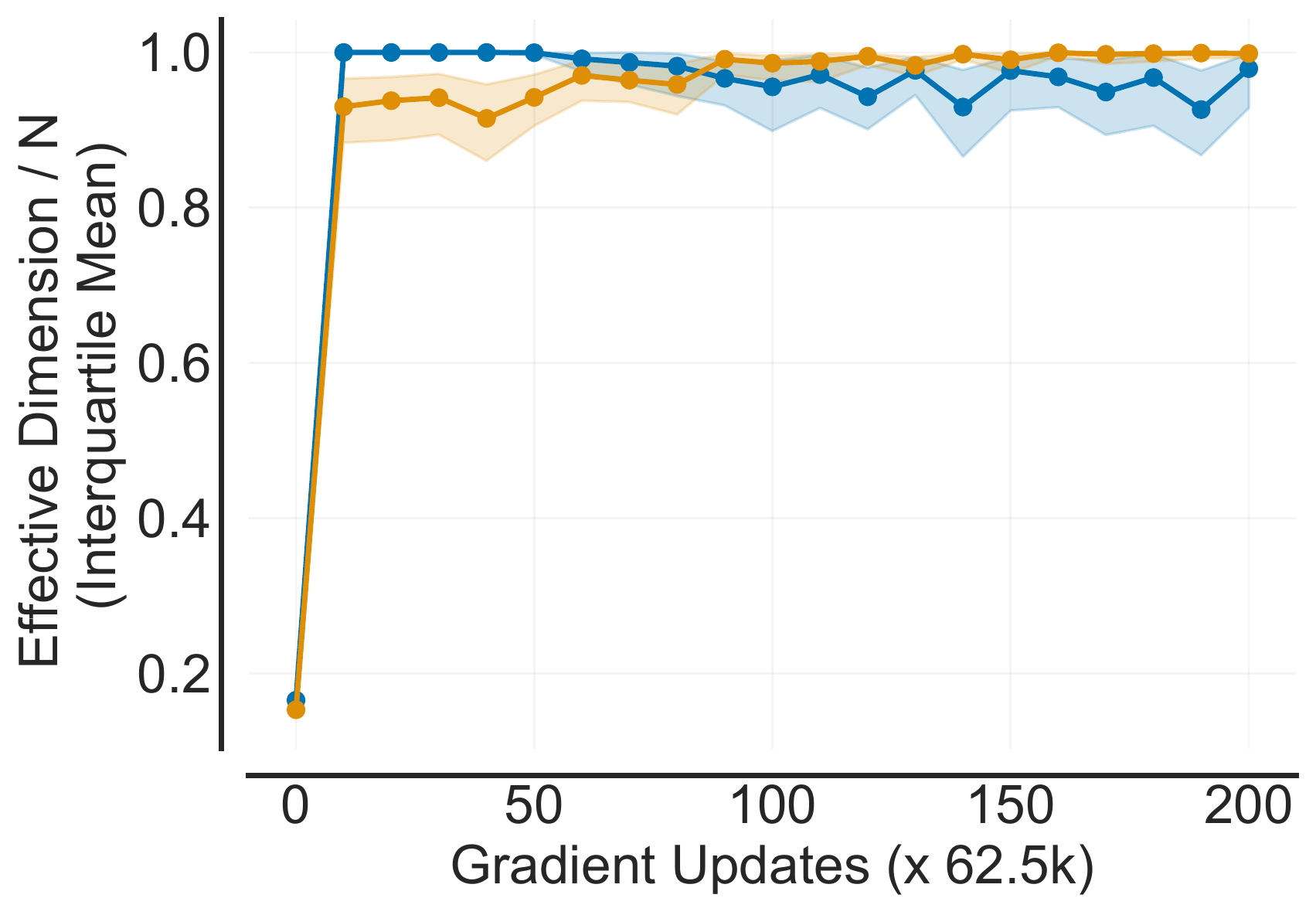}
     \includegraphics[width=0.39\textwidth]{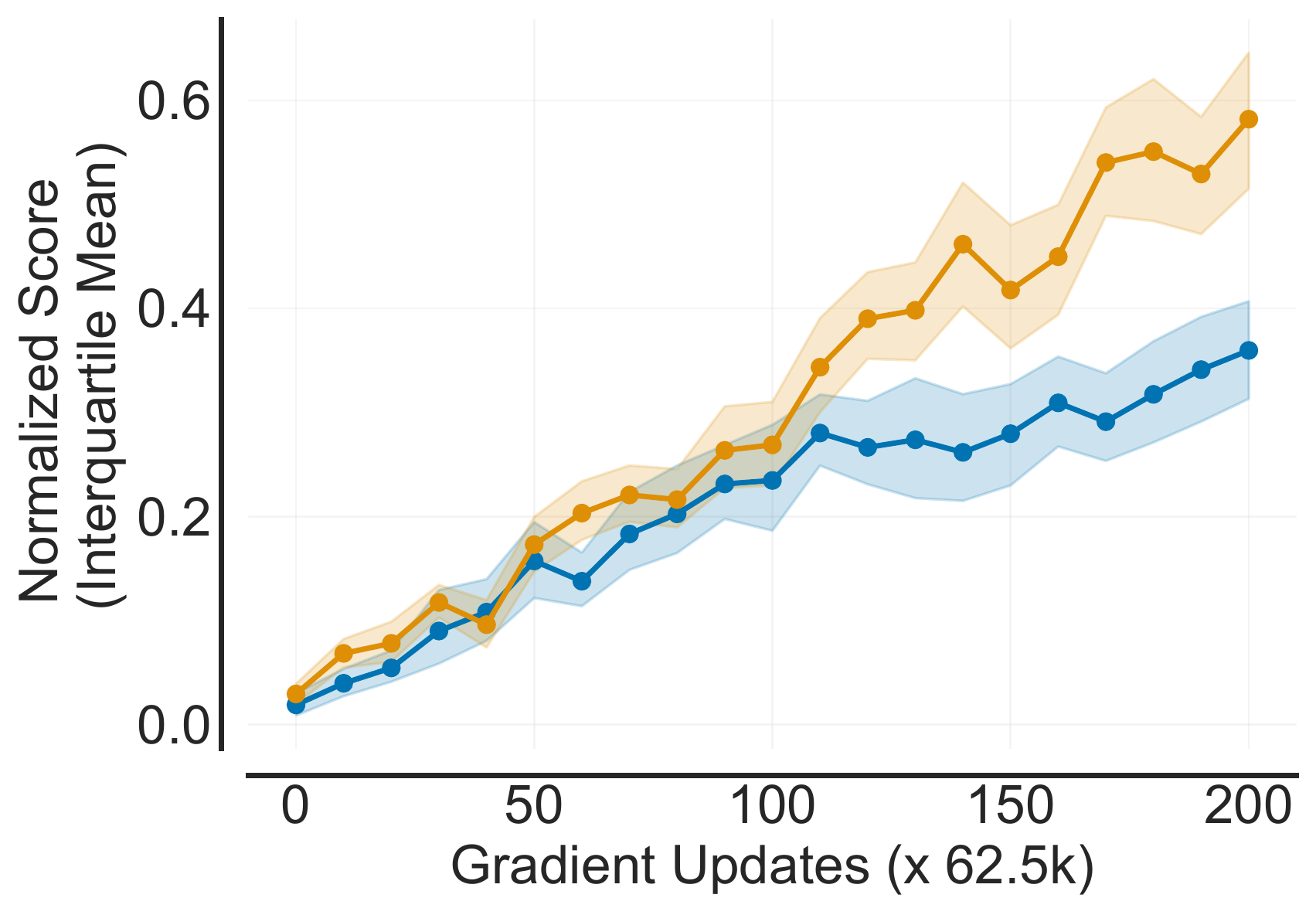}
  \caption{Effective dimension, normalized by the batch size $N=2^{15}$ and performance of IQN and IQN with feature regularization $L_\phi$ on 17 Atari games in the offline RL setting.}
  \label{fig:atari_offline}
\end{figure*}
\subsection{Deep Reinforcement Learning}
\label{sec:deep-rl}
We conclude with an empirical evaluation demonstrating the usefulness of our results in characterizing generalization in a larger setting.
Specifically, we measure the effective dimensions of a representation $\phi$ implied by a deep neural network. We consider the hidden layer of 512 rectified linear units learnt by five deep RL agents, namely DQN~\citep{mnih15human}, DQN with Adam optimizer, Rainbow~\citep{hessel18rainbow}, IQN~\citep{dabney2018implicit}, and Munchausen-IQN (M-IQN)~\citep{vieillard2020munchausen}.
We are interested in how the notion of effective dimension explains the relative performance of these deep RL agents aggregated across 60 Atari 2600 games~\citep{bellemare2013arcade} and at different points in training until 200M environment frames~\citep{castro18dopamine}.

We compare estimates of the effective dimension of these representations throughout training and reported results in \cref{fig:aggregate_atari} (Left) (see per game comparison in \cref{app:fullatariresults}). When computing such estimates, we use a large batch size (=$2^{15}$), sampled uniformly from the offline Atari-replay datasets~\citep{agarwal2020optimistic}, as a proxy for the ambient dimension $S$ used in the definition of the effective dimension.  

\looseness=-1 We observe that higher performance on a game typically correlates with lower effective dimension.
The relative ordering of effective dimension~(\cref{fig:aggregate_atari}, left) matches the performance ranking of different agents(\cref{fig:aggregate_atari}, right). Furthermore, we can notice a rise in the effective dimension from iteration 50 which suggests an overfitting of the representation to the current value function, in line with the evidence of late-training overfitting found by \citet{dabney2020value}.

To further corroborate that low effective dimension corresponds to better generalization, we investigate whether optimizing an auxiliary loss $\mathcal{L}_\phi$, motivated by the idea of reducing the effective dimension of the learned representation, improves performance. To do so, we use $\mathcal{L}_\phi = \log \sum_i \exp(\norm{\phi(s_i)}_2^2)$ for states $s_i$ in a randomly sampled mini-batch of size 32. To avoid confounding effects from exploration, we study the offline RL setting~\citep{levine2020offline}.
Specifically, we use the 5\% Atari-replay dataset~\citep{agarwal2020optimistic} on 17 games and evaluate IQN, one of the top performing agents on the offline Atari dataset~\citep{gulcehre2020rl}. As shown in \cref{fig:atari_offline}, right, combining IQN with the loss $\mathcal{L}_\phi$ results in significantly higher average returns compared to IQN on all 17 games. We also compare estimates of the effective dimension of the representations induced by these two agents in \cref{fig:atari_offline}, left, and find the auxiliary loss $\mathcal{L}_\phi$ results in lower effective dimension during the first 80 iterations. Surprisingly, we also notice that IQN with feature regularization prevents the substantial loss in rank of the feature matrix observed previously by \citet{kumar2021implicit, kumar2021dr3} (see \cref{fig:atari_offline_IQM_rank} and \cref{fig:atari_offline_pergame_rank}), making it hard to disentangle between approximation and estimation error effects.
Further study
of this phenomenon would be an interesting direction for future work.

\section{CONCLUSION}\label{sec:discussion}

In this paper we provided a theoretical characterisation of how a given representation affects generalization in reinforcement learning. While we focused here on the batch Monte Carlo setting for simplicity, a similar but more involved analysis can in theory also be performed to analyze algorithms such as LSTD.

Providing fresh evidence regarding the benefits of successor representations in shaping an agent's representation, both our analysis and experiments on synthetic environments demonstrate that indeed, the left-singular vectors of SRs generally provide good generalization. While natural given the successor representation's close relationship with the value function, one surprising result is that the effective dimension of such a representation is relatively sensitive to the particular transition structure, as illustrated by the differences between the torus and open room representations. In addition, the effective dimension of this representation does not immediately correlate with mixing time, as one might have expected. These findings suggests that it should be possible to devise algorithms inspired by the same principles, but that work well across a variety of transition structures, for example by leveraging contrastive graph representations \citep{madjiheurem19representation}.

Our analysis of Atari 2600-playing agents gives further evidence of the important role played by the representation in deep reinforcement learning. While not a surprise in itself, we find a strong correlation between effective dimension and performance, this suggests that generalization is key to explaining many performance improvements. In particular, it is by now well-understood that auxiliary tasks \citep{jaderberg17reinforcement,bellemare17distributional} shape the learned representation of the agent, and under ideal conditions cause it to match the SVD of an auxiliary task matrix \citep{bellemare2019geometric,lyle2021effect}. Controlling the bound of \cref{thm:main_gen_error} by means of such tasks or deep learning mechanisms such as hindsight experience replay \citep{andrychowicz2017hindsight} may provide further performance improvements. Our results also suggest that it may be possible to derive theoretical guarantees regarding transfer between policies or MDPs \citep{taylor09transfer}, in particular with a learned representation \citep{agarwal2021contrastive}.

\subsubsection*{Acknowledgements}
The authors would like to thank Matthieu Geist, Mark Rowland, Pablo Samuel Castro, Ahmed Touati, Marlos Machado, Dale Schuurmans, Robert Dadashi, Tomas Vaskevicius, Olivier Pietquin, Martha White, Hanie Sedghi, Damien Vincent, Dominic Richards, Nino Vieillard, Leonard Hussenot, Amartya Sanyal, Sephora Madjiheurem, Laura Toni and the anonymous reviewers for useful discussions and feedback on this paper.

We would also like to thank
the Python community \citep{van1995python,oliphant2007python} for developing
tools that enabled this work, including
{NumPy}~\citep{oliphant2006guide,walt2011numpy, harris2020array},
{SciPy}~\citep{jones2001scipy},
{Matplotlib}~\citep{hunter2007matplotlib} and JAX \citep{bradbury2018jax}.

\bibliographystyle{plainnat}
\bibliography{bib}
\clearpage
\onecolumn
\begin{appendix}
\pdfoutput=1
\hsize\textwidth
\linewidth\hsize \toptitlebar {\centering
{\Large\bfseries On the Generalization of Representations in Reinforcement Learning: \\Appendices \par }}
\bottomtitlebar

\section{PROOFS FOR SECTION \ref{sec:genbounds}}
\label{app:basicboundmoregeneral}

\newcommand{\numin}{\underline{\nu}}
This section is dedicated to proving the main theorem on the paper, \cref{thm:main_gen_error}. Before that, we introduce and prove a more general result from which \cref{thm:main_gen_error} can be deduced as a corollary.

Let $s_1, ..., s_n$ denote iid draws from an arbitrary distribution $\nu \in \cP(\sspace)$ and $(e_i)_{i=1}^S \subset \rR^S$ the standard basis.
\begin{assumption}
\label{assump:distribution}
We assume that $\nu(s)>0$ for all state state $s \in \{1, ..., S\}.$
\end{assumption}

Let $N := \E_{i \sim \nu} [e_ie_i^\T]$,
and let $\norm{x}_{\nu,2} := \norm{N^{1/2}x}_2$
for $x \in \rR^S$.
Put $\numin := \min_{i=1, ..., S} \nu_i > 0$.
Let $w^* := (\Phi^\T N \Phi)^{-1} \Phi^\T N V$,
and also define $\Xi := \Phi^\T N \Phi$. $\Xi$ is the steady-state feature covariance matrix. $w^*$ represents the best $k$-dimensional model.
Since we assume that $\numin > 0$, we have that
$\Xi$ is positive definite.

The excess risk $\cE(V_{\phi, {w}})$ of a hypothesis $V_{\phi, {w}} : \mathcal{S} \rightarrow \rR$ is defined as:
\begin{align*}
    \cE(V_{\phi, {w}}) := \E_{s_i \sim \nu} (V_{\phi, {w}}(s_i) - V(s_i))^2.
\end{align*}

For any $\hat{w} \in \rR^k$, we have
the decomposition:
\begin{align*}
  \cE(V_{\phi, \hat{w}}) =  \norm{\Phi \hat{w} - V}_{\nu,2}^2 = \norm{\Phi(\hat{w}-w^*)}_{\nu,2}^2 + \norm{\Phi w^* - V}_{\nu,2}^2.
\end{align*}

Note we have the identity:
\begin{align*}
    \norm{\Phi w^* - V}_{\nu,2}^2 = \norm{P^\perp_{N^{1/2} \Phi} N^{1/2} V}^2_2.
\end{align*}
\begin{restatable}[]{theorem}{maingenerrorgen}
\label{thm:main_gen_error_gen}
Fix any $\delta \in (0, 1)$. 
Suppose that $n \geq 8 \deff(\Phi) \log(6k/\delta)$. Under \cref{assump:distribution},
with probability at least $1-\delta$,
the empirical risk minimizer $\vfhatapprox$ satisfies:
\begin{align*}
     \cE(V_{\phi, \hat{w}}) &= \norm{P^\perp_{N^{1/2} \Phi} N^{1/2} V}^2_2 + 384 \frac{\deff(\Phi)}{\numin n S} \norm{P^{\perp}_{N^{1/2} \Phi} N^{1/2} V}_2^2 \log({3}/{\delta})\\
     &+ 48\frac{\sigma^2}{n}[2k + 3 \log(3/\delta)]
    + \frac{64}{3} \frac{\deff(\Phi)}{\numin n^2S} \norm{ N^{-1/2} P^\perp_{N^{1/2} \Phi} N^{1/2} V }_\infty^2 \log^2({3}/{\delta}).
\end{align*}
where $\norm{\cdot}_{\infty}$ denotes the usual supremum norm.
\end{restatable}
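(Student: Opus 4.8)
The plan is to bound the estimation term $\norm{\Phi(\hat w - w^*)}_{\nu,2}^2$, since the approximation error $\norm{P^\perp_{N^{1/2}\Phi}N^{1/2}V}_2^2$ is already isolated by the decomposition stated in the excerpt. The central device is to work in \emph{whitened coordinates}: set $\tilde\phi(s) := \Xi^{-1/2}\phi(s)$, so that $\E_{s\sim\nu}[\tilde\phi(s)\tilde\phi(s)^\T] = I_k$ and the whitened empirical covariance is $\hat M := \frac1n\sum_{i=1}^n \tilde\phi(s_i)\tilde\phi(s_i)^\T$. Writing the normal equations as $\hat\Xi(\hat w - w^*) = \frac1n\sum_i \phi(s_i)(\xi_i + \eta_i)$ with residuals $\xi_i := V(s_i) - \phi(s_i)^\T w^*$ and noise $\eta_i$, and substituting $u := \Xi^{1/2}(\hat w - w^*)$, gives $\hat M u = \tilde b$ with $\tilde b := \frac1n\sum_i \tilde\phi(s_i)(\xi_i + \eta_i)$. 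On the event $\hat M \succeq \tfrac12 I_k$ one then has $\norm{\Phi(\hat w - w^*)}_{\nu,2}^2 = \norm{u}_2^2 \leq 4\norm{\tilde b}_2^2 \leq 8\norm{\tilde b_\xi}_2^2 + 8\norm{\tilde b_\eta}_2^2$, where $\tilde b_\xi := \frac1n\sum_i \tilde\phi(s_i)\xi_i$ and $\tilde b_\eta := \frac1n\sum_i \tilde\phi(s_i)\eta_i$. So the whole problem reduces to a covariance-invertibility event plus separate concentration bounds on the residual and noise parts.

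The quantity that ties everything to the effective dimension is the whitened leverage. Using $N \succeq \numin I$ so that $\Xi = \Phi^\T N\Phi \succeq \numin\,\Phi^\T\Phi$, one gets $\max_s \norm{\tilde\phi(s)}_2^2 = \max_s \phi(s)^\T \Xi^{-1}\phi(s) \leq \frac{1}{\numin}\max_s \phi(s)^\T(\Phi^\T\Phi)^{-1}\phi(s) = \frac{1}{\numin}\max_s \norm{P_\Phi e_s}_2^2 = \frac{\deff(\Phi)}{\numin S} =: L$. This single bound is what makes $\deff(\Phi)/(\numin S)$ appear in each term. It is also worth recording the two reformulations $\E[\xi^2] = \norm{P^\perp_{N^{1/2}\Phi}N^{1/2}V}_2^2$ and $\max_s|\xi_s| = \norm{V - \Phi w^*}_\infty = \norm{N^{-1/2}P^\perp_{N^{1/2}\Phi}N^{1/2}V}_\infty$, which follow from $V - \Phi w^* = N^{-1/2}P^\perp_{N^{1/2}\Phi}N^{1/2}V$.

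The three concentration steps proceed as follows. \emph{First}, apply the matrix Chernoff inequality of \citet{tropp15introduction} to $\hat M$, whose summands are rank-one PSD matrices of operator norm at most $L/n$ with mean $I_k/n$; the hypothesis $n \geq 8\deff(\Phi)\log(6k/\delta)$ (i.e. $n \gtrsim L\log(k/\delta')$) then forces $\tfrac12 I_k \preceq \hat M \preceq \tfrac32 I_k$ with probability at least $1-\delta/3$, the factor $2k$ and budget $\delta/3$ explaining the $\log(6k/\delta)$. \emph{Second}, conditioning on $s_1,\dots,s_n$ and on this event, $\tilde b_\eta$ is a sum of independent $\sigma$-sub-Gaussian vectors with covariance proxy $\frac{\sigma^2}{n}\hat M \preceq \frac{3\sigma^2}{2n}I_k$, so a sub-Gaussian chi-square tail gives $\norm{\tilde b_\eta}_2^2 \lesssim \frac{\sigma^2}{n}(k + \log(1/\delta))$, yielding the $48\frac{\sigma^2}{n}(2k + 3\log(3/\delta))$ term. \emph{Third}, since $\E[\tilde\phi(s)\xi(s)] = \Xi^{-1/2}(\Phi^\T N V - \Xi w^*) = 0$ by the definition of $w^*$, the term $\tilde b_\xi$ is an average of i.i.d.\ zero-mean vectors, and a vector Bernstein inequality with variance proxy $v = \E\norm{\tilde\phi(s)\xi(s)}_2^2 \leq L\,\E[\xi^2]$ and almost-sure bound $b = \max_s\norm{\tilde\phi(s)}_2|\xi_s| \leq \sqrt{L}\,\norm{V-\Phi w^*}_\infty$ gives $\norm{\tilde b_\xi}_2^2 \lesssim \frac{v\log(1/\delta)}{n} + \frac{b^2\log^2(1/\delta)}{n^2}$; substituting $v$, $b$, and $L$ produces exactly the $384\,\frac{\deff(\Phi)}{\numin nS}\norm{P^\perp_{N^{1/2}\Phi}N^{1/2}V}_2^2\log(3/\delta)$ and $\frac{64}{3}\frac{\deff(\Phi)}{\numin n^2S}\norm{N^{-1/2}P^\perp_{N^{1/2}\Phi}N^{1/2}V}_\infty^2\log^2(3/\delta)$ terms. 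A union bound over the three failure events (each of probability $\delta/3$) and summation of the three pieces with the fixed approximation error completes the argument.

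The main obstacle is twofold. The first pressure point is the covariance step: pinning down the leverage bound $L = \deff(\Phi)/(\numin S)$ and feeding it into matrix Chernoff so that the sample threshold and the $\log(6k/\delta)$ come out cleanly, since everything downstream rests on the invertibility $\hat M \succeq \tfrac12 I_k$. The more delicate point for matching the stated constants is the residual step: one must pick a vector Bernstein inequality whose variance and almost-sure terms simultaneously reproduce the $1/n$ term governed by the $L^2$ approximation error and the $1/n^2$ term governed by the $L^\infty$ norm $\norm{N^{-1/2}P^\perp_{N^{1/2}\Phi}N^{1/2}V}_\infty$, while correctly accounting for $\tilde b_\xi$ being a sum of random vectors rather than scalars.
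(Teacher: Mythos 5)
Your proposal is correct and takes essentially the same route as the paper's proof: the same whitening by $\Xi^{-1/2}$, the same leverage bound $\max_s \norm{\Xi^{-1/2}\phi(s)}_2^2 \le \deff(\Phi)/(S\min_i \nu_i)$, and the same three concentration steps --- matrix Chernoff for the whitened empirical covariance, vector Bernstein for the mean-zero residual term, and a sub-Gaussian quadratic-form (Hanson--Wright) bound for the noise term --- combined by a union bound over three $\delta/3$ events. The only differences are cosmetic: you derive the error decomposition from the normal equations rather than the paper's explicit pseudo-inverse/projector algebra, and your intermediate constants (e.g.\ the upper bound $\tfrac{3}{2}I_k$ on the whitened covariance instead of the paper's factor $4$) differ harmlessly.
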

\begin{proof}
The empirical risk minimizer $\hat{w} \in \rR^k$ is defined as
the random vector
$\hat{w} = (E_n \Phi)^{\dag} Y$.
Next, 
we write:
\begin{align*}
  N^{1/2} \Phi(\hat{w} - w^*) &= N^{1/2} \Phi (E_n \Phi)^{\dag}(E_n V + \eta) - N^{1/2} \Phi w^*
\end{align*}
Therefore, assuming $E_n \Phi$ has full column rank (which will be the case by \cref{lemma:chernoff-wn}),
\begin{align*}
    &N^{1/2} \Phi (E_n \Phi)^{\dag} E_n V - N^{1/2} \Phi w^* \\
    &=N^{1/2} \Phi (E_n \Phi)^{\dag} E_n V - P_{N^{1/2} \Phi} N^{1/2} V \\
    &= N^{1/2} \Phi (\Phi^\T E_n^\T E_n \Phi)^{-1} \Phi^\T E_n^\T E_n V - P_{N^{1/2} \Phi} N^{1/2} V \\
    &= N^{1/2} \Phi (\Phi^\T E_n^\T E_n \Phi)^{-1} \Phi^\T E_n^\T E_n N^{-1/2} ( P_{N^{1/2} \Phi} + P^\perp_{N^{1/2} \Phi} ) N^{1/2} V - P_{N^{1/2} \Phi} N^{1/2} V \\
    &= N^{1/2} \Phi (\Phi^\T E_n^\T E_n \Phi)^{-1} \Phi^\T E_n^\T E_n N^{-1/2} P^{\perp}_{N^{1/2} \Phi} N^{1/2} V \\
    &\qquad+ N^{1/2} \Phi (\Phi^\T E_n^\T E_n \Phi)^{-1} \Phi^\T E_n^\T E_n N^{-1/2} P_{N^{1/2} \Phi} N^{1/2} V - P_{N^{1/2} \Phi} N^{1/2} V \\
    &= N^{1/2} \Phi (\Phi^\T E_n^\T E_n \Phi)^{-1} \Phi^\T E_n^\T E_n N^{-1/2} P^{\perp}_{N^{1/2} \Phi} N^{1/2} V \\
    &= N^{1/2} \Phi \Xi^{-1/2} (\Xi^{-1/2} \Phi^\T E_n^\T E_n \Phi \Xi^{-1/2})^{-1} \Xi^{-1/2} \Phi^\T E_n^\T E_n  N^{-1/2} P^{\perp}_{N^{1/2} \Phi} N^{1/2} V.
\end{align*}
Similarly,
\begin{align*}
    N^{1/2} \Phi (E_n \Phi)^{\dag} \eta &= N^{1/2} \Phi (\Phi^\T E_n^\T E_n \Phi)^{-1} \Phi^\T E_n^\T \eta \\
    &= N^{1/2} \Phi \Xi^{-1/2} (\Xi^{-1/2} \Phi^\T E_n^\T E_n \Phi \Xi^{-1/2})^{-1} \Xi^{-1/2} \Phi^\T E_n^\T \eta.
\end{align*}
We first claim that
$\opnorm{ N^{1/2} \Phi \Xi^{-1/2} } \leq 1$.
To see this, observe that:
\begin{align*}
    \opnorm{ N^{1/2} \Phi \Xi^{-1/2} }^2 = \lambda_{\max}( N^{1/2} \Phi (\Phi^\T N \Phi)^{-1} \Phi^\T N^{1/2} ) 
    = \lambda_{\max}( P_{N^{1/2} \Phi} ) \leq 1.
\end{align*}
Hence:
\begin{align*}
    \norm{ N^{1/2} \Phi (E_n \Phi)^{\dag} E_n V - N^{1/2} \Phi w_* }_2 
    \leq \frac{\norm{\Xi^{-1/2} \Phi^\T E_n^\T E_n  N^{-1/2} P^{\perp}_{N^{1/2} \Phi} N^{1/2} V}_2 }{\lambda_{\min}(\Xi^{-1/2} \Phi^\T E_n^\T E_n \Phi \Xi^{-1/2} )},
\end{align*}
and similarly
\begin{align*}
    \norm{ N^{1/2} \Phi (E_n\Phi)^{\dag} \eta }_2 \leq \frac{ \norm{\Xi^{-1/2} \Phi^\T E_n^\T \eta}_2 }{\lambda_{\min}(\Xi^{-1/2} \Phi^\T E_n^\T E_n \Phi \Xi^{-1/2} ) }.
\end{align*}
Therefore, 
\begin{align*}
    \|N^{1/2} \Phi(\hat{w} - w^*)\|_2 \leq \frac{1}{\lambda_{\min}(\Xi^{-1/2} \Phi^\T E_n^\T E_n \Phi \Xi^{-1/2})}[\norm{\Xi^{-1/2} \Phi^\T E_n^\T E_n  N^{-1/2} P^{\perp}_{N^{1/2} \Phi} N^{1/2} V}_2+ \norm{\Xi^{-1/2} \Phi^\T E_n^\T \eta}_2]
\end{align*}
By \cref{lemma:chernoff-wn}, as long as $n \geq  \frac{8\deff(\Phi)}{\numin S} \log(6k/\delta)$, then with probability at least $1-\delta/3$, 
\begin{align*}
    \frac{n}{2} I_k \preccurlyeq \Xi^{-1/2} \Phi^\T E_n^\T E_n \Phi \Xi^{-1/2} \preccurlyeq {4n} I_k.
\end{align*}
Furthermore, by \cref{lemma:bernstein-wn},
with probability at least $1-\delta/3$,
\begin{align*}
    &\norm{\Xi^{-1/2} \Phi^\T E_n^\T E_n N^{-1/2} P^{\perp}_{N^{1/2} \Phi} N^{1/2} V}_2 \\
    &\leq 2 \sqrt{ \frac{8n \deff(\Phi)}{\numin S} \norm{P^{\perp}_{N^{1/2} \Phi} N^{1/2} V}^2_2 \log\left(\frac{3}{\delta}\right)} + \frac{4}{3} \sqrt{\frac{\deff(\Phi)}{\numin S}} \norm{ N^{-1/2} P^\perp_{N^{1/2} \Phi} N^{1/2} V }_\infty \log\left(\frac{3}{\delta}\right).
\end{align*}
Finally, by \cref{lemma:mds-wn},
with probability at least $1-\delta/3$,
\begin{align*}
    \ind\left\{ \Xi^{-1/2} \Phi^\T E_n^\T E_n \Phi \Xi^{-1/2} \preccurlyeq 4n I_k \right\} \cdot \norm{ \Xi^{-1/2} \Phi^\T E_n^\T \eta }_2 \leq \sqrt{\sigma^2 n [ 8k + 12 \log(3/\delta)]}.
\end{align*}
Therefore, by a union bound, with probability at least $1-\delta$,
\begin{align*}
    \|N^{1/2} \Phi(\hat{w} - w^*)\|_2 &\leq \frac{2}{n} \left[2 \sqrt{ \frac{8n \deff(\Phi)}{\numin S} \norm{P^{\perp}_{N^{1/2} \Phi} N^{1/2} V}^2_2 \log\left(\frac{3}{\delta}\right)}\right] \\
    &+ \frac{2}{n} \left[\frac{4}{3} \sqrt{\frac{\deff(\Phi)}{\numin S}} \norm{ N^{-1/2} P^\perp_{N^{1/2} \Phi} N^{1/2} V }_\infty \log\left(\frac{3}{\delta}\right)\right]
   + \frac{2}{n} \left[\sqrt{\sigma^2 n [ 8k + 12 \log(3/\delta)]}\right]\\
    &= 4\sqrt{8} \sqrt{ \frac{\deff(\Phi)}{\numin n S} \log(3/\delta)} \norm{P^{\perp}_{N^{1/2} \Phi} N^{1/2} V}_2 + 4\sqrt{\frac{\sigma^2}{n}[2k + 3\log(3/\delta)]}\\
    &\qquad+ \frac{8}{3} \frac{\sqrt{\frac{\deff(\Phi)}{\numin S}}}{n} \norm{ N^{-1/2} P^\perp_{N^{1/2} \Phi} N^{1/2} V }_\infty \log\left(\frac{3}{\delta}\right).
\end{align*}
Now, from the inequality $
(a+b+c)^{2} \leqslant 3\left(a^{2}+b^{2}+c^{2}\right) \text { for any } a, b, c \in \mathbb{R},
$ it follows that
\begin{align*}
     \cE(V_{\phi, \hat{w}}) &= \norm{P^\perp_{N^{1/2} \Phi} N^{1/2} V}^2_2 + 384 \frac{\deff(\Phi)}{\numin n S} \norm{P^{\perp}_{N^{1/2} \Phi} N^{1/2} V}_2^2 \log({3}/{\delta})\\
     &+ 48\frac{\sigma^2}{n}[2k + 3 \log(3/\delta)]
    + \frac{64}{3} \frac{\deff(\Phi)}{\numin n^2S} \norm{ N^{-1/2} P^\perp_{N^{1/2} \Phi} N^{1/2} V }_\infty^2 \log^2({3}/{\delta}).
\end{align*}
\end{proof}
\begin{lemma}
\label{lemma:chernoff-wn}
Let $\Phi \in \rR^{S \times k}$.
Let $\nu$ denote a distribution over $\{1, ..., S\}$ satisfying \cref{assump:distribution} and $(e_i)_{i=1}^S \subset \rR^S$ the standard basis.
Let $s_1, ..., s_n$ denote iid draws from $\nu$.
Define $Y_n \in \rR^{k \times k}$ as:
\begin{align*}
    Y_n = \sum_{i=1}^{n}  \Xi^{-1/2} \Phi^\T e_{s_i} e_{s_i}^\T \Phi \Xi^{-1/2}.
\end{align*}
Fix any $\delta \in (0, 1)$.
As long as $n \geq  \frac{8\deff(\Phi)}{\numin S} \log(2k/\delta)$,
with probability at least $1 - \delta$,
\begin{align*}
    \frac{n}{2} I_k \preccurlyeq Y_n \preccurlyeq 4n I_k.
\end{align*}
where for two symmetric matrices, $A\preccurlyeq B$ means that the matrice $B-A$ is positive semi-definite.
\end{lemma}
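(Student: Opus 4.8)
The plan is to recognize $Y_n$ as a sum of independent, positive semidefinite rank-one random matrices and apply a matrix Chernoff bound \citep{tropp15introduction}. Writing $Z_i := \Xi^{-1/2}\Phi^\T e_{s_i} e_{s_i}^\T \Phi \Xi^{-1/2}$, so that $Y_n = \sum_{i=1}^n Z_i$, the first step is to compute the mean. Since $\E_{s \sim \nu}[e_s e_s^\T] = N$, we get $\E[Z_i] = \Xi^{-1/2}\Phi^\T N \Phi \Xi^{-1/2} = \Xi^{-1/2}\Xi\,\Xi^{-1/2} = I_k$, hence $\E[Y_n] = n I_k$ and $\lambda_{\min}(\E Y_n) = \lambda_{\max}(\E Y_n) = n$. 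The desired conclusion $\tfrac{n}{2}I_k \preccurlyeq Y_n \preccurlyeq 4n I_k$ is exactly a two-sided relative deviation of $Y_n$ from its mean $nI_k$, which is what the matrix Chernoff inequality controls.

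The crux of the argument --- and the step I expect to be the main obstacle --- is a uniform almost-sure bound on $\lambda_{\max}(Z_i) = \opnorm{Z_i}$, since this per-summand bound $R$ is what determines the required sample size. Because $Z_i$ is rank one, $\opnorm{Z_i} = e_{s_i}^\T \Phi \Xi^{-1} \Phi^\T e_{s_i}$. The key identity is $\Phi \Xi^{-1}\Phi^\T = \Phi(\Phi^\T N \Phi)^{-1}\Phi^\T = N^{-1/2} P_{N^{1/2}\Phi} N^{-1/2}$, where $P_{N^{1/2}\Phi}$ is the orthogonal projector onto the column space of $N^{1/2}\Phi$; this follows by direct substitution of the projector formula. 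Consequently $\opnorm{Z_i} = \tfrac{1}{\nu_{s_i}}\norm{P_{N^{1/2}\Phi} e_{s_i}}_2^2 \le \tfrac{1}{\numin}\max_{j}\norm{P_{N^{1/2}\Phi}e_j}_2^2 =: R$, which is the effective dimension applied to the relevant projector, divided by $\numin S$; in the uniform case $N = \tfrac{1}{S}I_S$ this projector is precisely $P_\Phi$, so $R = \tfrac{\deff(\Phi)}{\numin S}$ as in the sample-size hypothesis. Establishing this cleanly is delicate precisely because $\Xi$ is the $N$-weighted covariance $\Phi^\T N \Phi$ rather than $\Phi^\T\Phi$, so one must route through the projector onto $N^{1/2}\Phi$ rather than $\Phi$ itself.

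With $R$ in hand, I would invoke the matrix Chernoff bound of \citet{tropp15introduction}: for independent PSD summands with $\lambda_{\max}(Z_i)\le R$ and $\mu := \lambda_{\min}(\E Y_n) = \lambda_{\max}(\E Y_n) = n$, the lower tail at level $\epsilon = \tfrac12$ gives $\Pr[\lambda_{\min}(Y_n) \le \tfrac{n}{2}] \le k\,(\sqrt{2}\,e^{-1/2})^{n/R}$ and the upper tail at level $\epsilon = 3$ gives $\Pr[\lambda_{\max}(Y_n)\ge 4n] \le k\,(e^3/4^4)^{n/R}$. The lower tail is the binding constraint; since $\sqrt{2}\,e^{-1/2}\approx 0.858$, requiring $n/R \ge 8\log(2k/\delta)$ forces $(\sqrt2 e^{-1/2})^{n/R} \le (2k/\delta)^{-1}$ and hence drives each tail below $\delta/2$ (the upper tail being satisfied with enormous slack), and a union bound then yields $\tfrac{n}{2}I_k \preccurlyeq Y_n \preccurlyeq 4n I_k$ with probability at least $1-\delta$ under the stated condition $n \ge \tfrac{8\deff(\Phi)}{\numin S}\log(2k/\delta)$. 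The only remaining bookkeeping is the routine numerical verification that the constants in the two tail exponents are both dominated by the $8$ in the sample-size hypothesis.
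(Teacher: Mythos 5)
Your overall strategy coincides with the paper's: compute $\E[Y_n] = n I_k$, obtain an almost-sure bound $R$ on the operator norm of each rank-one summand, and apply the two-sided matrix Chernoff inequality of \citet{tropp15introduction} with a union bound; your tail bookkeeping (lower tail at deviation $1/2$, upper tail at deviation $3$, both absorbed by the constant $8$ in the sample-size hypothesis) is correct. The gap is in the per-summand bound, which you yourself flag as the crux. Starting from the (correct) identity $\opnorm{Z_i} = \tfrac{1}{\nu_{s_i}}\norm{P_{N^{1/2}\Phi}e_{s_i}}_2^2$, you maximize the two factors \emph{separately}, arriving at $R = \tfrac{1}{\numin}\max_j\norm{P_{N^{1/2}\Phi}e_j}_2^2 = \tfrac{\deff(N^{1/2}\Phi)}{\numin S}$. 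This is the effective dimension of the \emph{weighted} feature matrix $N^{1/2}\Phi$, not of $\Phi$, and for non-uniform $\nu$ it can strictly exceed $\deff(\Phi)$: take $S=2$, $\Phi = (1,1)^\T$, $\nu = (0.9,\,0.1)$; then $\deff(\Phi)=1$, while $\max_j \norm{P_{N^{1/2}\Phi}e_j}_2^2 = 0.9$, so $\deff(N^{1/2}\Phi) = 1.8$ and your $R = 9$ exceeds $\tfrac{\deff(\Phi)}{\numin S} = 5$. Consequently, the lemma's hypothesis $n \ge \tfrac{8\deff(\Phi)}{\numin S}\log(2k/\delta)$ does not imply your Chernoff requirement $n \ge 8R\log(2k/\delta)$. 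Your argument therefore proves the lemma only in the uniform case you single out, whereas the lemma is stated—and is later invoked, in the proof of the general excess-risk theorem—for an arbitrary distribution $\nu$ satisfying the positivity assumption.

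The fix is one line and is exactly the step the paper takes: do not decouple the maxima. Since $N \succeq \numin I_S$ gives $\Xi = \Phi^\T N \Phi \succeq \numin \Phi^\T\Phi$, and hence $\Xi^{-1} \preceq \tfrac{1}{\numin}(\Phi^\T\Phi)^{-1}$, one has for every $j$
\begin{align*}
\frac{1}{\nu_j}\norm{P_{N^{1/2}\Phi}e_j}_2^2 \;=\; e_j^\T \Phi\,\Xi^{-1}\Phi^\T e_j \;\le\; \frac{1}{\numin}\, e_j^\T \Phi(\Phi^\T\Phi)^{-1}\Phi^\T e_j \;=\; \frac{1}{\numin}\norm{P_\Phi e_j}_2^2 \;\le\; \frac{\deff(\Phi)}{\numin S},
\end{align*}
that is, the index in the $1/\nu_j$ factor and in the projector factor must remain matched before maximizing over states; the maximum is then controlled by the unweighted projector $P_\Phi$ and hence by $\deff(\Phi)$. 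With this corrected value of $R$, the remainder of your argument (mean computation, both Chernoff tails, and the union bound) goes through verbatim and matches the paper's proof.
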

\begin{proof}
This is an application of the Matrix Chernoff inequality.
First, we see that
$\E[Y_n] = n I_k$.
Next, we have:
\begin{align*}
    \max_{i=1, ..., S} \lambda_{\max}( \Xi^{-1/2} \Phi^\T e_{i} e_{i}^\T \Phi \Xi^{-1/2} ) &= \max_{i=1, ..., S} \norm{\Xi^{-1/2} \Phi^\T e_i}_2^2 \\
     &= \max_{i=1, ..., S} \norm{(\Phi^\T N \Phi)^{-1/2} \Phi^\T e_i}_2^2 \\
    &\leq \frac{1}{\numin} \max_{i=1, ..., S} \norm{P_\Phi e_i}_2^2 \\
    &\leq \frac{\deff(\Phi)}{\numin S}.
\end{align*}
We now make two applications of the Matrix Chernoff inequality (see Theorem 5.1.1 in \cite{tropp15introduction}). Denoting $\mathit{e}$ as Euler's number, for the upper tail, we have that for any $t \geq \mathit{e}$,
\begin{align*}
    \Pr(\lambda_{\max}(Y_n) \geq t n ) \leq k (\mathit{e}/t)^{t n \numin S/\deff(\Phi)}.
\end{align*}
Setting $t=4$, we conclude that as long as $n \geq \frac{1}{4 \log(4/\mathit{e})} \frac{\deff(\Phi)}{\numin S} \log(2k/\delta)$, then we have that
with probability at least $1-\delta/2$, $\lambda_{\max}(Y_n) \leq 4n$.
For the lower tail, we have that for any $t \in (0, 1)$,
\begin{align*}
    \Pr(\lambda_{\min}(Y_n) \leq t n ) &\leq k \exp\left(-(1-t)^2 \frac{n}{2}  \frac{\numin S}{\deff(\Phi)} \right).
\end{align*}
Setting $t = 0.5$,
we see that as long as $n \geq 8 \frac{\deff(\Phi)}{\numin S} \log(2k/\delta)$, then
$\lambda_{\min}(Y_n) \geq n/2$ with probability
at least $1 - \delta/2$. Taking a union bound
yields the claim.
\end{proof}

\begin{lemma}
\label{lemma:bernstein-wn}
Put $z_n := \Xi^{-1/2} \Phi^\T E_n^\T E_n N^{-1/2} P^{\perp}_{N^{1/2} \Phi} N^{1/2} V$.
Fix any $ \delta \in (0, e^{-1/8})$.
With probability at least $1-\delta$,
\begin{align*}
    \norm{z_n}_2 \leq 2 \sqrt{ \frac{8n \deff(\Phi)}{\numin S} } \norm{P^{\perp}_{N^{1/2} \Phi} N^{1/2} V}_2 \sqrt{\log(1/\delta)} + \frac{4}{3} \sqrt{\frac{\deff(\Phi)}{\numin S}} \norm{ N^{-1/2} P^\perp_{N^{1/2} \Phi} N^{1/2} V }_\infty \log(1/\delta).
\end{align*}
\end{lemma}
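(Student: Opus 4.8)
The plan is to recognize $z_n$ as a sum of $n$ independent, identically distributed, mean-zero random vectors and then apply a dimension-free vector Bernstein inequality. Writing $u := N^{-1/2} P^{\perp}_{N^{1/2}\Phi} N^{1/2} V$ and using $E_n^\T E_n = \sum_{i=1}^n e_{s_i} e_{s_i}^\T$, I would express
\begin{equation*}
    z_n = \sum_{i=1}^n X_i, \qquad X_i := \Xi^{-1/2} \Phi^\T e_{s_i}\, u_{s_i},
\end{equation*}
where $u_{s_i}$ is the $s_i$-th coordinate of $u$. The first step is to check that each $X_i$ is mean-zero: since $\E[e_{s_i} e_{s_i}^\T] = N$, we get $\E[X_i] = \Xi^{-1/2}\Phi^\T N u = \Xi^{-1/2}(N^{1/2}\Phi)^\T P^{\perp}_{N^{1/2}\Phi} N^{1/2} V = 0$, because $P^{\perp}_{N^{1/2}\Phi}$ annihilates the range of $N^{1/2}\Phi$. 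This is the one structural identity the whole argument rests on.

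The second step is to control the two inputs to a Bernstein bound: an almost-sure norm bound and a variance proxy. For the almost-sure bound I would reuse the coherence estimate already established inside the proof of \cref{lemma:chernoff-wn}, namely $\norm{\Xi^{-1/2}\Phi^\T e_i}_2^2 = \norm{(\Phi^\T N\Phi)^{-1/2}\Phi^\T e_i}_2^2 \le \frac{1}{\numin}\norm{P_\Phi e_i}_2^2 \le \frac{\deff(\Phi)}{\numin S}$. This gives $\norm{X_i}_2 \le |u_{s_i}|\sqrt{\deff(\Phi)/(\numin S)} \le \sqrt{\deff(\Phi)/(\numin S)}\,\norm{u}_\infty$ almost surely, which is exactly the scale multiplying $\log(1/\delta)$ in the second term. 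For the variance I would compute
\begin{equation*}
    \sum_{i=1}^n \E\norm{X_i}_2^2 = n\sum_{j=1}^S \nu_j u_j^2 \norm{\Xi^{-1/2}\Phi^\T e_j}_2^2 \le \frac{n\,\deff(\Phi)}{\numin S}\sum_{j=1}^S \nu_j u_j^2 = \frac{n\,\deff(\Phi)}{\numin S}\norm{N^{1/2} u}_2^2,
\end{equation*}
and then simplify $N^{1/2} u = P^{\perp}_{N^{1/2}\Phi} N^{1/2} V$, so the variance proxy is $\frac{n\,\deff(\Phi)}{\numin S}\norm{P^{\perp}_{N^{1/2}\Phi} N^{1/2} V}_2^2$, matching the first term. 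Since $\opnorm{X_i X_i^\T} = \norm{X_i}_2^2$, the same quantity also bounds the operator-norm variance $\opnorm{\sum_i \E[X_i X_i^\T]}$ by Jensen's inequality, so either form of the Bernstein variance is controlled.

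The final step is to invoke a vector Bernstein inequality with $R := \sqrt{\deff(\Phi)/(\numin S)}\,\norm{u}_\infty$ and variance $v := \frac{n\,\deff(\Phi)}{\numin S}\norm{P^{\perp}_{N^{1/2}\Phi} N^{1/2} V}_2^2$, giving a tail of the form $\Pr(\norm{z_n}_2 \ge t) \le \exp(-\tfrac{t^2/2}{v + Rt/3})$, and then solving for $t$ at level $\delta$ via the standard two-term inversion $t \lesssim \sqrt{v\log(1/\delta)} + R\log(1/\delta)$. I expect the main obstacle to be keeping the bound \emph{dimension-free}: applying matrix Bernstein (as in \cite{tropp15introduction}) directly to the $k\times 1$ blocks $X_i$ would introduce a spurious $(k+1)$ (equivalently $\log k$) prefactor, whereas the statement has none. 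Sidestepping this requires a genuinely Hilbert-space vector Bernstein inequality of Pinelis type rather than the matrix version; the two quantities computed above are precisely its inputs, and the regime $\delta < e^{-1/8}$ is what lets the residual numerical constants and lower-order terms be absorbed into the clean constants $2\sqrt{8}$ and $\tfrac{4}{3}$ stated in the lemma. Everything else is routine bookkeeping through the tail and its inversion.
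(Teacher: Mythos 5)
Your proposal matches the paper's proof essentially line for line: the paper defines the same summands $q_i = \Xi^{-1/2}\Phi^\T e_{s_i} e_{s_i}^\T N^{-1/2}P^{\perp}_{N^{1/2}\Phi}N^{1/2}V$ (your $X_i$), checks $\E[q_i]=0$, derives the identical per-term variance bound $\frac{\deff(\Phi)}{\numin S}\norm{P^{\perp}_{N^{1/2}\Phi}N^{1/2}V}_2^2$ and almost-sure bound $\sqrt{\deff(\Phi)/(\numin S)}\,\norm{N^{-1/2}P^{\perp}_{N^{1/2}\Phi}N^{1/2}V}_\infty$, and then invokes a dimension-free vector Bernstein inequality with tail $\Pr\big(\norm{z_n}_2 > \sqrt{v}(1+\sqrt{8t}) + \tfrac{4}{3}Rt\big)\le e^{-t}$ evaluated at $t=\log(1/\delta)$. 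Your reading of the hypothesis $\delta\in(0,e^{-1/8})$ is also exactly the paper's implicit use of it: it gives $\sqrt{8\log(1/\delta)}\ge 1$, so the additive $\sqrt{v}$ in that tail is absorbed to yield the constant $2\sqrt{8}$ in the stated bound.
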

\begin{proof}
Define $q_i := \Xi^{-1/2} \Phi^\T e_{s_i} e_{s_i}^\T N^{-1/2} P^\perp_{N^{1/2} \Phi} N^{1/2} V$.
We have that $\E[q_i] = 0$.
Next,
\begin{align*}
    \E[\norm{q_i}_2^2] &= \E[ \norm{\Xi^{-1/2} \Phi^\T e_{s_i}}_2^2 \ip{e_{s_i}}{N^{-1/2} P^\perp_{N^{1/2} \Phi} N^{1/2} V  }^2 ] \\
    &\leq \frac{\deff(\Phi)}{\numin S} \E[ \ip{e_{s_i}}{N^{-1/2} P^\perp_{N^{1/2} \Phi} N^{1/2} V  }^2 ] \\
    &= \frac{\deff(\Phi)}{\numin S} \norm{ P^\perp_{N^{1/2} \Phi} N^{1/2} V}_2^2.
\end{align*}
Finally, we have the following almost sure bound:
\begin{align*}
    \norm{q_i}_2 \leq \sqrt{ \frac{\deff(\Phi)}{\numin S}} \norm{ N^{-1/2} P^{\perp}_{N^{1/2} \Phi} N^{1/2} V}_\infty.
\end{align*}

Put $z_n := \sum_{i=1}^{n} q_i$.
By the vector Bernstein inequality,
for all $t > 0$,
\begin{align*}
    \Pr\left( \norm{z_n}_2 > \sqrt{ \frac{n \deff(\Phi)}{\numin S} \norm{P^\perp_{N^{1/2} \Phi} N^{1/2} V }_2^2 }(1+\sqrt{8t}) + \frac{4}{3} \sqrt{ \frac{\deff(\Phi)}{\numin S}} \norm{ N^{-1/2} P^{\perp}_{N^{1/2} \Phi} N^{1/2} V}_\infty t \right) \leq e^{-t}.
\end{align*}
The claim now follows by setting $t = \log(1/\delta)$.
\end{proof}

\begin{lemma}
\label{lemma:mds-wn}
Let $\cG$ be the event:
\begin{align*}
    \cG := \left\{ \Xi^{-1/2} \Phi^\T E_n^\T E_n \Phi \Xi^{-1/2} \preccurlyeq 4n I_k \right\}
\end{align*}
With probability at least $1-\delta$, we have:
\begin{align*}
    \ind\{ \cG \} \cdot \norm{ \Xi^{-1/2} \Phi^\T E_n^\T \eta }_2^2 \leq \sigma^2 n [ 8k + 12 \log(1/\delta)]. 
\end{align*}
\end{lemma}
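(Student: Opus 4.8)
The plan is to view $\norm{\Xi^{-1/2}\Phi^\T E_n^\T \eta}_2^2$ as a quadratic form in the noise vector $\eta \in \rR^n$ and to apply a conditional sub-Gaussian concentration bound, with the event $\cG$ furnishing the deterministic control on the matrix of the form. Writing $u_i := \Xi^{-1/2}\phi(s_i) \in \rR^k$ for the whitened feature of the $i$-th sampled state, we have $\Xi^{-1/2}\Phi^\T E_n^\T \eta = \sum_{i=1}^n \eta_i u_i$ and hence
\[
\norm{\Xi^{-1/2}\Phi^\T E_n^\T \eta}_2^2 = \eta^\T G \eta, \qquad G := E_n \Phi \Xi^{-1} \Phi^\T E_n^\T \in \rR^{n\times n},
\]
where $G$ is the Gram matrix $G_{ij} = \ip{u_i}{u_j}$. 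Crucially, $G$ depends only on the sampled states, whereas $\eta$ is independent of the states with iid zero-mean $\sigma$-sub-Gaussian entries.

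First I would extract the two deterministic quantities that a quadratic-form tail bound requires, $\opnorm{G}$ and $\Tr(G)$, and control both on $\cG$. Writing $G = U U^\T$ with $U \in \rR^{n\times k}$ the matrix whose rows are the $u_i^\T$, the matrix $G$ shares its nonzero eigenvalues with $U^\T U = \sum_{i=1}^n u_i u_i^\T = \Xi^{-1/2}\Phi^\T E_n^\T E_n \Phi \Xi^{-1/2}$. On $\cG$ this $k\times k$ matrix satisfies $\Xi^{-1/2}\Phi^\T E_n^\T E_n \Phi \Xi^{-1/2} \preceq 4n I_k$, which yields $\opnorm{G} \leq 4n$ and, via $\Tr(G) = \Tr(U^\T U) \leq k\,\lambda_{\max}(U^\T U)$, also $\Tr(G) \leq 4nk$.

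With these bounds in hand I would condition on the states and apply the tail inequality for quadratic forms of sub-Gaussian random vectors \citep{hsu2012random}. Conditionally $G$ is deterministic, and the inequality states that for every $t > 0$,
\[
\eta^\T G \eta \leq \sigma^2\Big(\Tr(G) + 2\sqrt{\Tr(G^2)\,t} + 2\opnorm{G}\,t\Big)
\]
with conditional probability at least $1 - e^{-t}$. Using $\Tr(G^2) \leq \opnorm{G}\Tr(G)$ and $2\sqrt{ab}\leq a+b$ simplifies the right-hand side to $2\sigma^2\Tr(G) + 3\sigma^2\opnorm{G}\,t$; substituting the bounds $\opnorm{G}\leq 4n$, $\Tr(G)\leq 4nk$ valid on $\cG$ and setting $t = \log(1/\delta)$ gives exactly $\sigma^2 n[\,8k + 12\log(1/\delta)\,]$.

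The indicator $\ind\{\cG\}$ is what glues the two sources of randomness together. Conditioning on the states, any realization outside $\cG$ makes the left-hand side identically zero so the inequality is trivial, while on $\cG$ the deterministic bounds above license the preceding step; thus the inequality holds with conditional probability at least $1-\delta$ for every fixed realization of the states, and taking expectation over the states preserves the $1-\delta$ guarantee. I expect the only real difficulty to be bookkeeping rather than conceptual: one must keep the state-randomness (carried by $\cG$) cleanly separated from the noise-randomness, and track the constants in the quadratic-form tail carefully so that they resolve to precisely the factors $8k$ and $12\log(1/\delta)$ in the statement.
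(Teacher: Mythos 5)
Your proposal is correct and follows essentially the same route as the paper's proof: both view the quantity as the quadratic form $\eta^\T G\eta$ with $G = E_n\Phi\Xi^{-1}\Phi^\T E_n^\T$, bound $\Tr(G)\leq 4nk$ and $\opnorm{G}\leq 4n$ on $\cG$ using the shared nonzero spectrum with $\Xi^{-1/2}\Phi^\T E_n^\T E_n\Phi\Xi^{-1/2}$, apply the Hsu--Kakade--Zhang sub-Gaussian quadratic-form tail conditionally on the states, and remove the conditioning by the tower property (the paper folds the indicator into the matrix $M=\ind\{\cG\}G$ rather than case-splitting, but this is cosmetic). The only nit is the citation: the tail inequality is the one in \citet{hsu2012tail}, not \citet{hsu2012random}.
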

\begin{proof}
Put $M := \ind\{\cG\} \cdot E_n \Phi \Xi^{-1} \Phi^\T E_n^\T$.
Because $\eta$ is assumed to be independent of $E_n$, we can condition on $E_n$
and apply the Hanson-Wright inequality \citep{hsu2012tail} to conclude
that for any $t > 0$,
    \begin{align*}
        \Pr( \eta^\T M \eta > \sigma^2 ( \Tr(M) + 2 \sqrt{\Tr(M^2) t} + 2\opnorm{M} t)) \mid E_n ) \leq e^{-t}.
    \end{align*}
We now compute upper bounds on $\Tr(M)$, $\Tr(M^2)$, and $\opnorm{M}$.
First, we have:
    \begin{align*}
        \Tr(M) = \ind\{\cG\} \Tr(E_n \Phi \Xi^{-1} \Phi^\T E_n^\T) = \ind\{\cG\} \Tr( \Xi^{-1/2} \Phi^\T E_n^\T E_n \Phi \Xi^{-1/2} ) \leq 4nk.
    \end{align*}
Next,
    \begin{align*}
        \Tr(M^2) &= \ind\{\cG\} \Tr( E_n \Phi \Xi^{-1} \Phi^\T E_n^\T E_n \Phi \Xi^{-1} \Phi^\T E_n^\T ) \\
        &= \ind\{\cG\} \Tr( \Xi^{-1/2}\Phi^\T E_n^\T E_n \Phi \Xi^{-1/2} \cdot \Xi^{-1/2} \Phi^\T E_n^\T E_n \Phi \Xi^{-1/2} ) \\
        &\stackrel{(a)}{\leq} \ind\{\cG\} \Tr(\Xi^{-1/2} \Phi^\T E_n^\T E_n \Xi^{-1/2}\Phi) \opnorm{\Xi^{-1/2} \Phi^\T E_n^\T E_n \Phi \Xi^{-1/2}} \\
        &\leq 4nk \cdot 4n = 16n^2 k.
    \end{align*}
Above, (a) follows from H\"older's inequality.
Finally,
    \begin{align*}
        \opnorm{M} = \ind\{\cG\} \opnorm{  E_n \Phi \Xi^{-1} \Phi^\T E_n^\T  } = \ind\{\cG\} \opnorm{ \Xi^{-1/2} \Phi^\T E_n^\T E_n \Phi \Xi^{-1/2} } \leq 4n.
    \end{align*}
We now plug these bounds in along with the choice of $t = \log(1/\delta)$,
which tells us that conditioned on $E_n$, 
with probability at least $1-\delta$,
    \begin{align*}
        \eta^\T M \eta &\leq \sigma^2 \left[  4nk + 8n \sqrt{k \log(1/\delta)} + 8n \log(1/\delta) \right] \\
        &\leq \sigma^2 \left[  8nk + 12n \log(1/\delta) \right] \\
        &= \sigma^2 n \left[ 8k + 12 \log(1/\delta) \right].
    \end{align*}
We now remove the conditioning on $E_n$.
Let $\bar{t} := \sigma^2 n \left[ 8k + 12 \log(1/\delta) \right]$.
By the tower property,
\begin{align*}
    \Pr( \eta^\T M \eta \geq \bar{t} ) = \E[ \ind\{ \eta^\T M \eta \geq \bar{t} \}] = \E[ \E[ \ind\{ \eta^\T M \eta \geq \bar{t} \} \mid E_n ] ] = \E[ \Pr( \eta^\T M \eta \geq \bar{t} \mid E_n) ] \leq \E[ \delta ] = \delta.
\end{align*}
\end{proof}
\cref{thm:main_gen_error} is a corollary of \cref{thm:main_gen_error_gen} in the case where the distribution $\nu$ is uniform.
\maingenerror*
\begin{proof}
$\nu$ being uniform, we have $\numin=S$. The result follows by plugging $\numin$ in \cref{thm:main_gen_error_gen}.
\end{proof}

\section{PROOFS FOR SECTION \ref{sec:sr-bound}}
\label{app:proof-srbound}
\singularvalues*
\begin{proof}
Let $\lambda(\cdot)$ denote the eigenvalues of a matrix.
Because $P$ is symmetric, we have that:
\begin{align*}
    \sigma((I - \gamma P)^{-1}) = \left\{ \frac{1}{1-\gamma \lambda} : \lambda \in \lambda(P) \right\}.
\end{align*}
Because $P$ is a row stochastic matrix, we have that the
spectral radius of $P$ satisfies $\rho(P) = 1$, and therefore $\lambda(P) \subseteq [-1, 1]$.
Hence:
\begin{align*}
    \frac{1}{1-\gamma \lambda} \in [1/(1+\gamma), 1/(1-\gamma)].
\end{align*}
\end{proof}

\textbf{Eigenstructure of the Star Graph (\cref{sec:effect-structure})}\\
A random walk on the Star graph induces a rank-two transition matrix $P_\pi \in \rR^S$. We may write 
$P_\pi = v_1 e_S^\T + \frac{e_S v^\T}{S-1}$ where $v$ is an all-ones vector except on its last coordinate where it takes value 0 and $e_S$ a one-hot vector taking value $1$ on its last coordinate.
It is easy to prove by induction that 
\begin{itemize}
    \item for any $k \geq 1, P_\pi^{2k}=\frac{vv^\T}{S-1}+ e_Se_S^\T$
    \item for any $k \geq 0, P_\pi^{2k+1}=P_\pi$
\end{itemize}
From this, it follows that 
\begin{align*}
    (I - \gamma P_\pi)^{-1} &= I + \sum_{t=1}^\infty (\gamma \Ppi)^t \\
   &=  I +  \sum_{2k\geq2} \gamma^{2k}P_\pi^{2k} + \sum_{2k+1\geq1} \gamma^{2k+1} (P_\pi)^{2k+1}\\
     &=  I +  \sum_{2k\geq2} \gamma^{2k}(\frac{vv^\T}{S-1}+ e_Se_S^\T) + \sum_{2k+1\geq1} \gamma^{2k+1} P_\pi\\
    &= I + \frac{\gamma^2}{1-\gamma^2} (\frac{vv^\T}{S-1}+ e_Se_S^\T) + \frac{\gamma}{1-\gamma^2} P_\pi.
\end{align*}
Define $\eta := \frac{\gamma}{1-\gamma^2}$.
The non-zero singular values of $(I - \gamma P_\pi)^{-1}$ are the square roots of the eigenvalues of $A=(I - \gamma P_\pi)^{-1}\left((I - \gamma P_\pi)^{-1}\right)^\T$. We have
\begin{align*}
    A=(I - \gamma P_\pi)^{-1}\left((I - \gamma P_\pi)^{-1}\right)^\T &=  \left(I + \gamma \eta P_\pi^2 + \eta P_\pi \right) \left(I + \gamma \eta (P_\pi^2)^\T + \eta P_\pi^\T \right)\\
    &= I + B,
\end{align*}
where $B := a vv^\T + b e_Se_S^\T + c (e_Sv^\T + ve_S^\T)$ with $a=\frac{2\eta\gamma+\eta^2\gamma^2}{S-1}+\eta^2$, $b={2\eta\gamma+\eta^2\gamma^2}+\frac{\eta^2}{S-1}$ and $c=(\eta+\eta^2\gamma)\frac{S}{S-1}.$

Moreover, if $\{\lambda_1, ..., \lambda_k \}$ are the eigenvalues of $B$ then the eigenvalues of $A$ are $\{1+\lambda_1, ..., 1+\lambda_k \}$.

Consider the basis $\{e_S, v\}$.
For any $a_1, a_2$, 
\begin{align*}
    B(a_1 e_S + a_2 v) &= a vv^\T (a_1 e_S + a_2 v) + b e_Se_S^\T (a_1 e_S + a_2 v) + c (e_Sv^\T + ve_S^\T) (a_1 e_S + a_2 v)\\
    &=  a_1a \ip{v}{e_S} v + a_2 a\norm{v}_2^2 v + a_1be_S+a_2b\ip{v}{e_S} e_S +
    c(a_1\ip{v}{e_S} e_S + a_1 v + a_2 \norm{v}_2^2 e_S + a_2 \ip{v}{e_S}v)\\
    &= (a_1b+c a_1 \ip{v}{e_S}+a_2b\ip{v}{e_S}+a_2c\norm{v}_2^2) e_S + (a_1a \ip{v}{e_S}+ca_1 +a_2 \ip{v}{e_S}+a_2 a\norm{v}_2^2)v.
\end{align*}
Since $\norm{v}_2^2 = S-1$ and $\ip{v}{e_S} = 0$,
$B$ has the representation in $\{e_S, v\}$ as:
\begin{align*}
    \begin{bmatrix}
     b  & c(S-1) \\
     c & a(S-1)
    \end{bmatrix}&=
     \begin{bmatrix}
     2\eta\gamma+\eta^2\gamma^2+\frac{\eta^2}{S-1}  & (\eta+\eta^2\gamma)S \\
     (\eta+\eta^2\gamma)\frac{S}{S-1} & 2\eta\gamma+\eta^2\gamma^2+\eta^2(S-1)
    \end{bmatrix}=
     \begin{bmatrix}
     \frac{\eta^2}{S-1}  & (\eta+\eta^2\gamma)S \\
     (\eta+\eta^2\gamma)\frac{S}{S-1} & \eta^2(S-1)
    \end{bmatrix}+ (2\eta\gamma+\eta^2\gamma^2) I\\
   & =C + (2\eta\gamma+\eta^2\gamma^2) I
\end{align*}
Hence, the eigenvalues of C are given by $\frac{1}{2}\left(\eta^2\left((S-1)+ \frac{1}{S-1}\right) \pm \sqrt{\eta^4 \left((S-1)+ \frac{1}{S-1}\right)^2 + 4(\eta+\eta^2\gamma)^2\frac{S^2}{S-1}-4\eta^4}\right)$. The non-zero singular values of $(I - \gamma P_\pi)^{-1}$ are thus $1$ with multiplicity $S-2$ and
$$\sqrt{\frac{1}{2}\left(\eta^2\left((S-1)+ \frac{1}{S-1}\right) \pm \sqrt{\eta^4 \left((S-1)+ \frac{1}{S-1}\right)^2 + 4(\eta+\eta^2\gamma)^2\frac{S^2}{S-1}-4\eta^4}\right) + 2\eta\gamma+\eta^2\gamma^2+1}$$ For $\gamma=0.99$ and $S=400$, we can check numerically that the two extreme singular values are equal to 996 and 0.05 respectively which matches the spectrum obtained for the Star graph in \cref{fig:effectivedim_toymdps}.

\section{EMPIRICAL EVALUATION: ADDITIONAL DETAILS}\label{appendix:experiments}
\subsection{Graphical Structures}
\label{app:toys}
\begin{figure}[b!]
  \centering
      \includegraphics[width=0.19\textwidth]{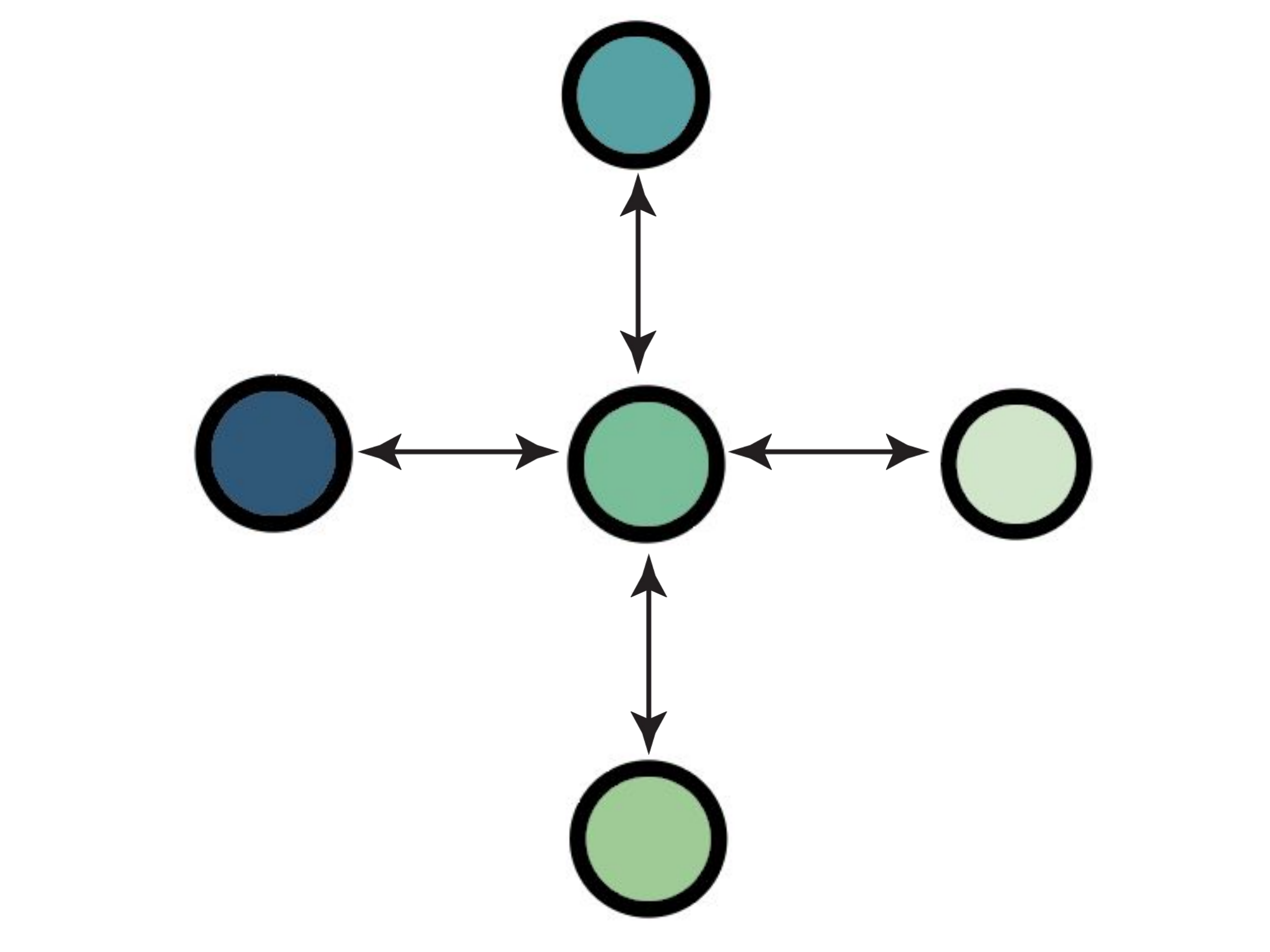}
              \includegraphics[width=0.19\textwidth]{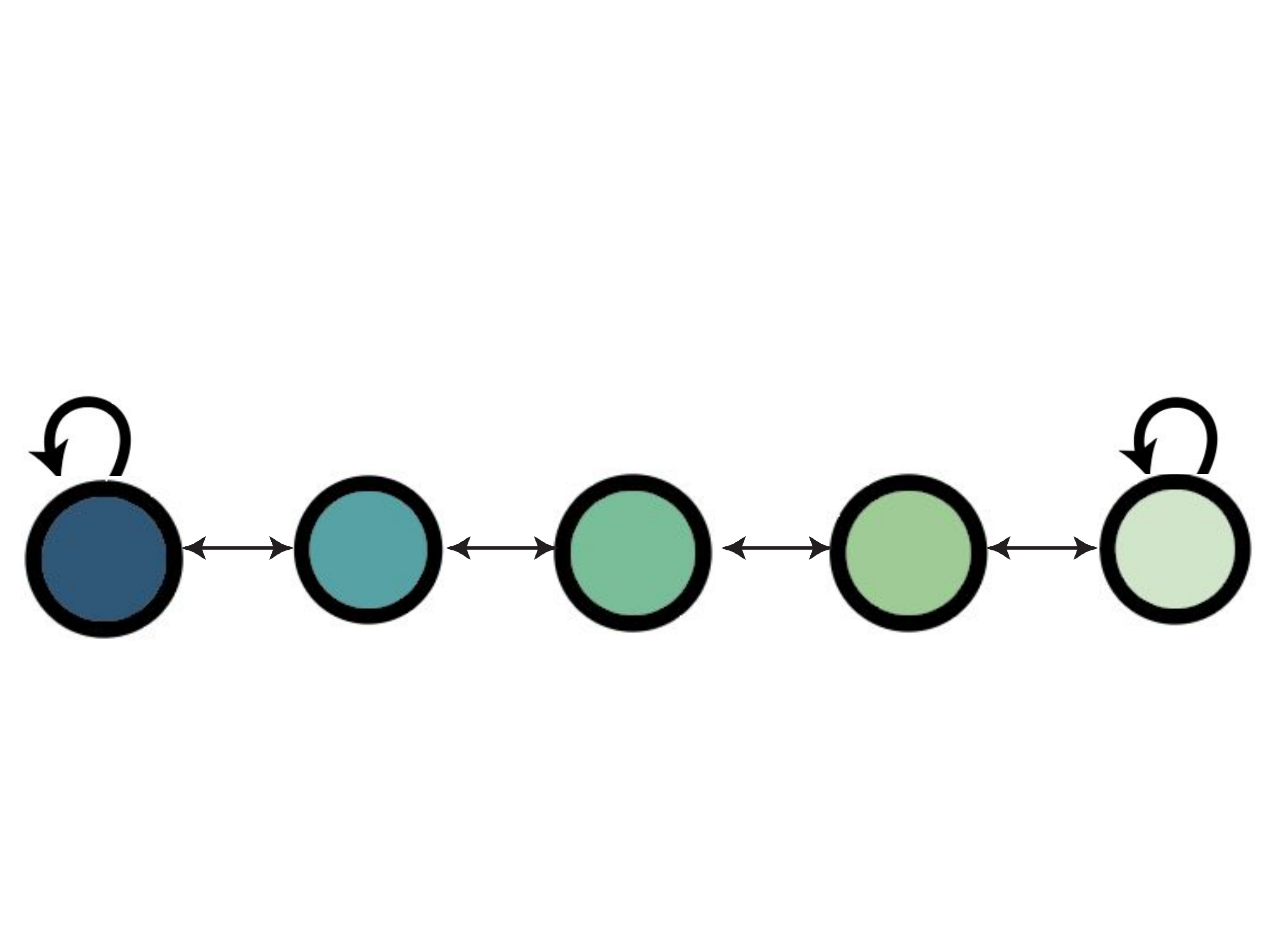}
       \includegraphics[width=0.19\textwidth]{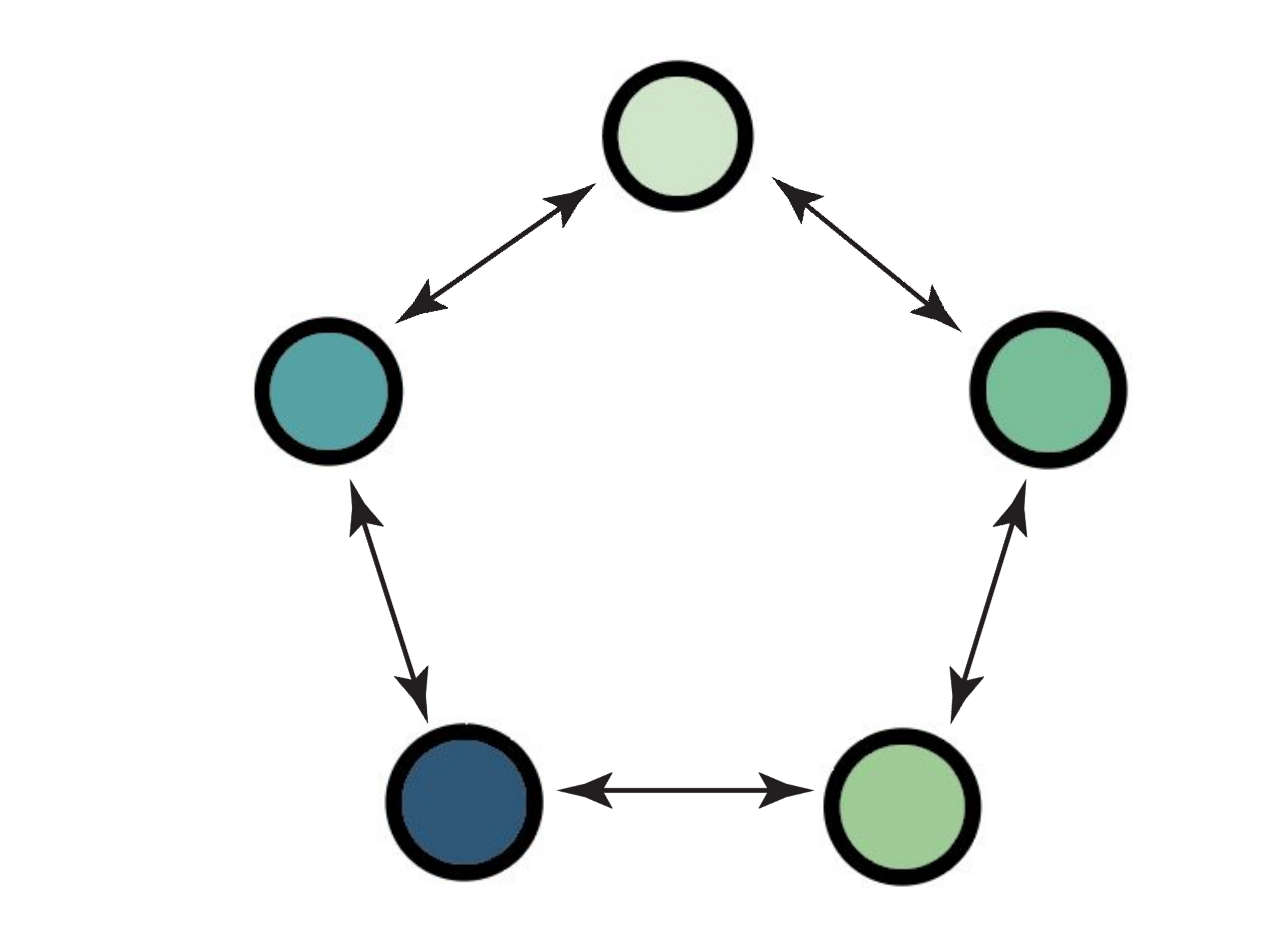}
  \includegraphics[width=0.19\textwidth]{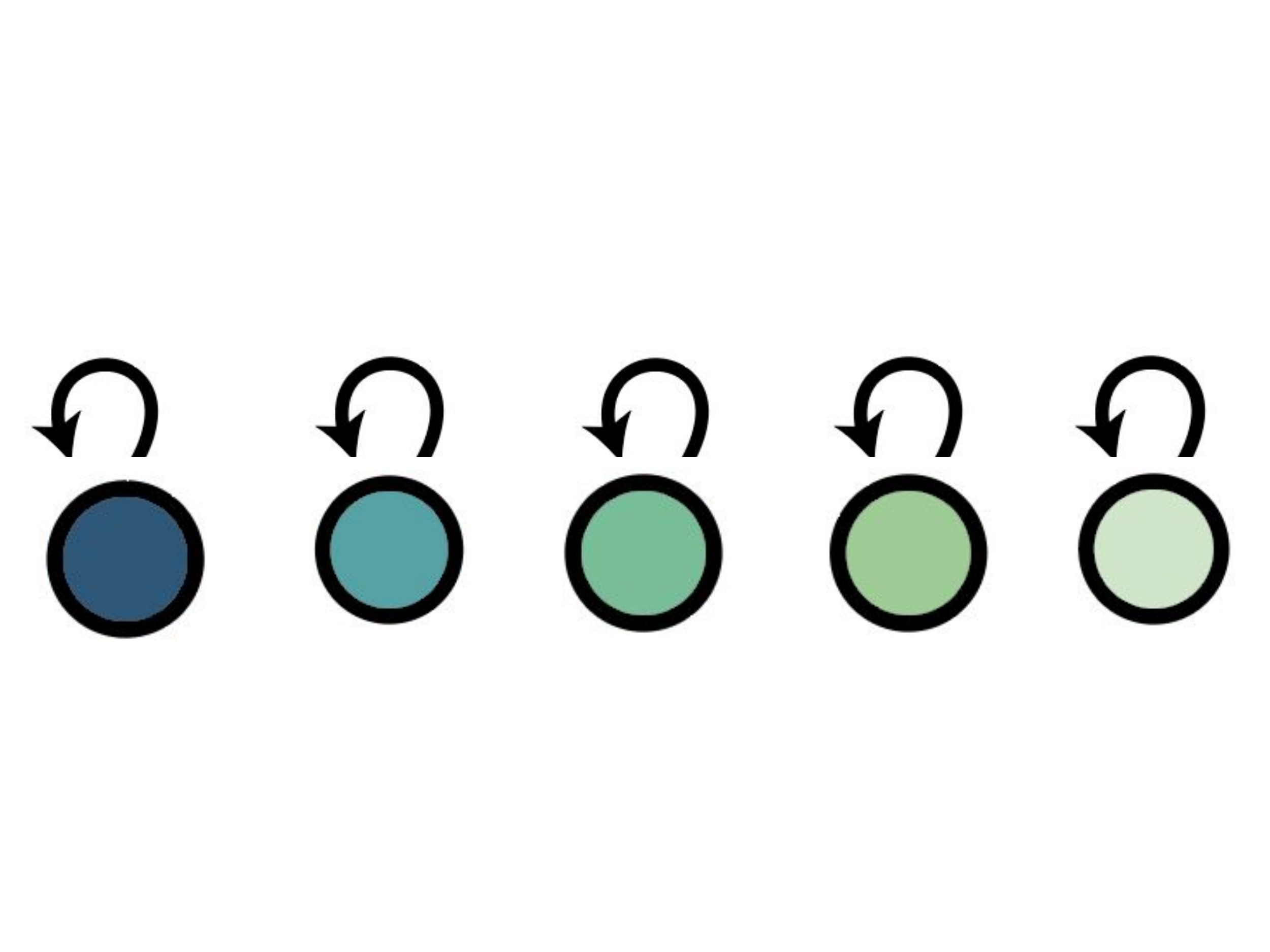}
    \includegraphics[width=0.19\textwidth]{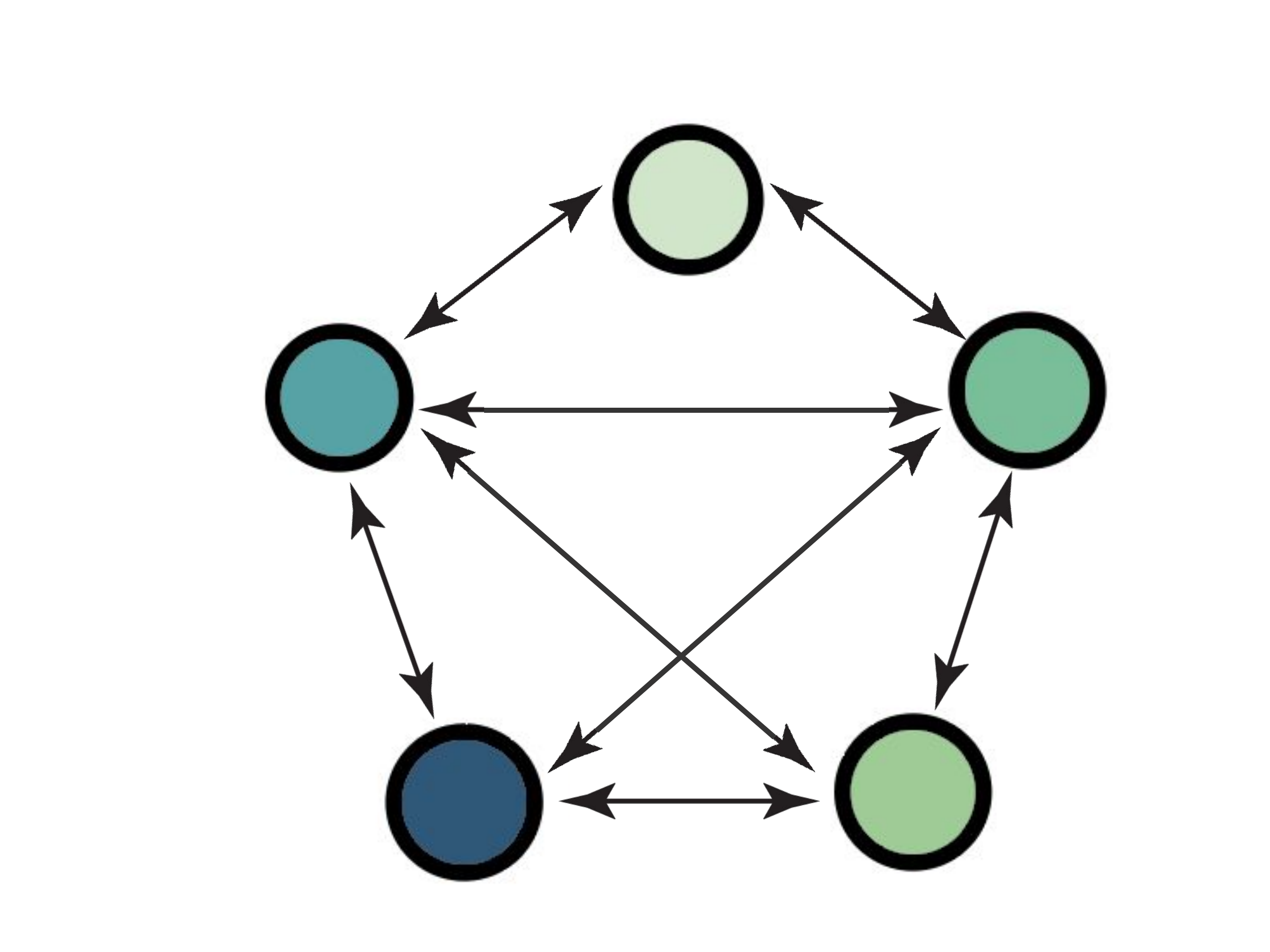}\vfill
    \includegraphics[width=0.22\textwidth]{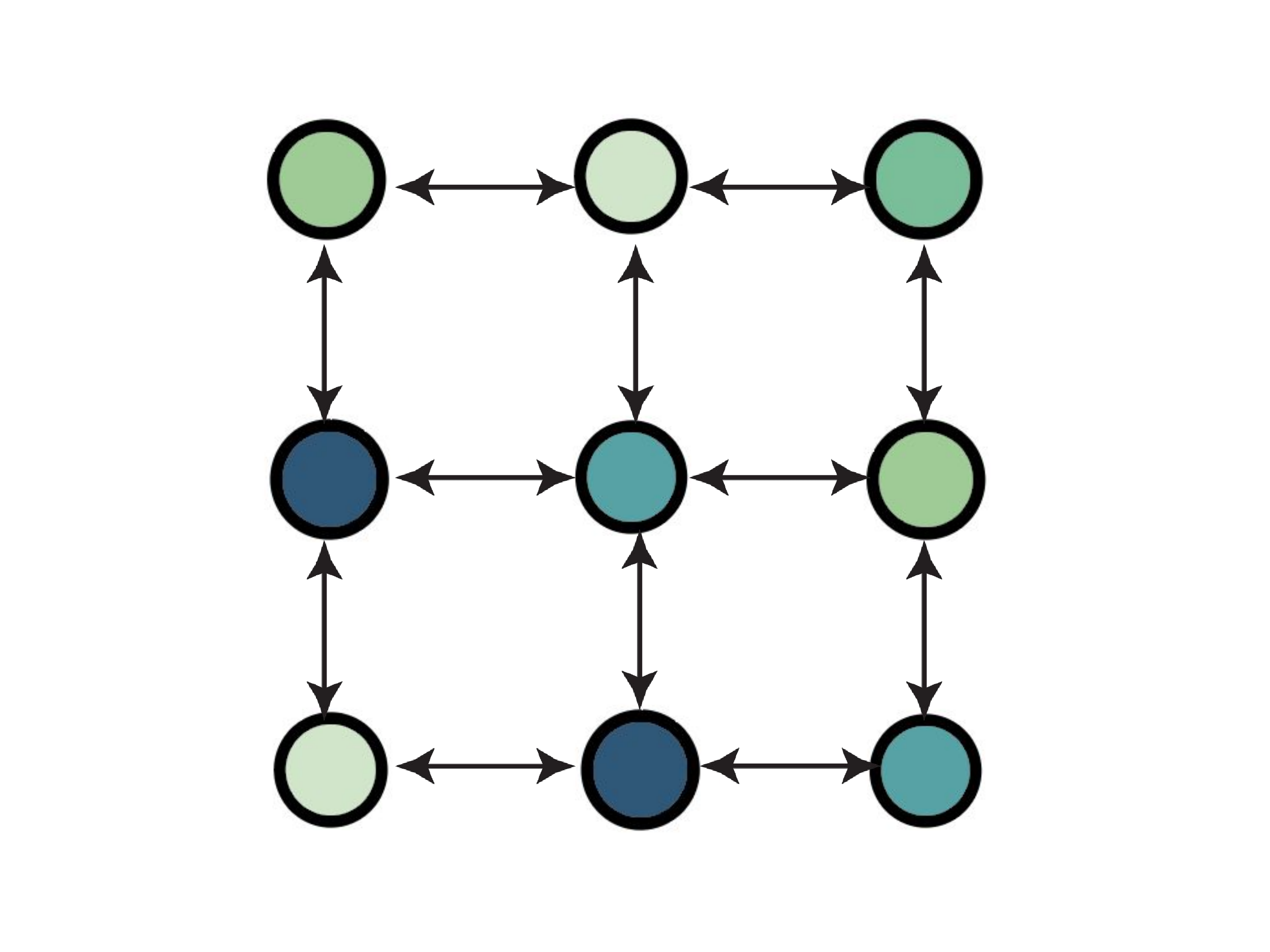}
    \includegraphics[width=0.22\textwidth]{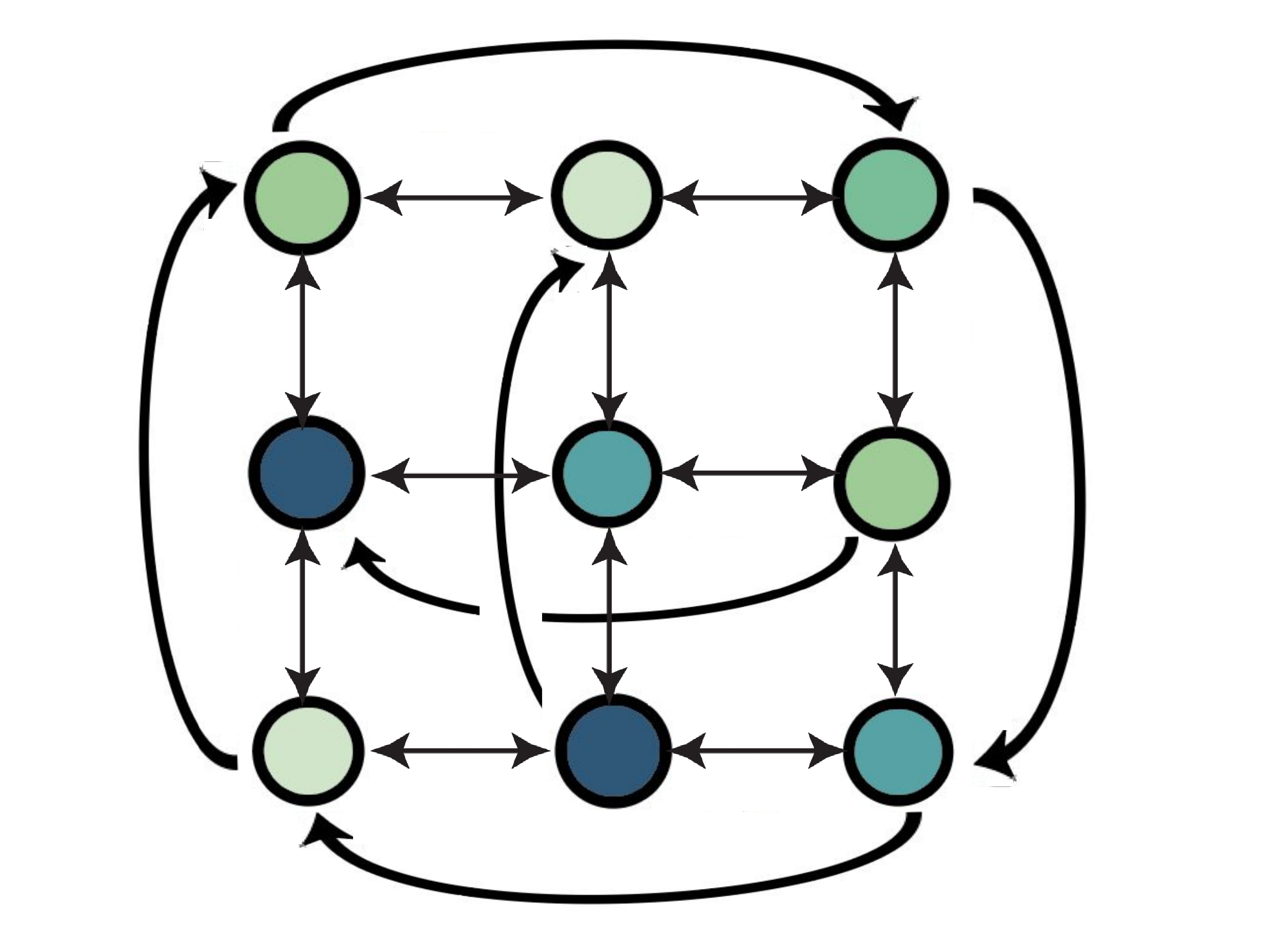}
  \caption{\textbf{Top:} Different graphical structures with $S=5$ states from left to right, Star, Chain,  Torus1d, Disconnected, Fullyconnected.
  \textbf{Bottom:} Two-dimensional graphical structures with $S=9$ states: from left to right, Openroom and Torus2d. 
  }
  \label{fig:graph:illustrations}
  \end{figure}
  
\begin{figure}[h!]
  \centering
  \includegraphics[width=0.22\textwidth]{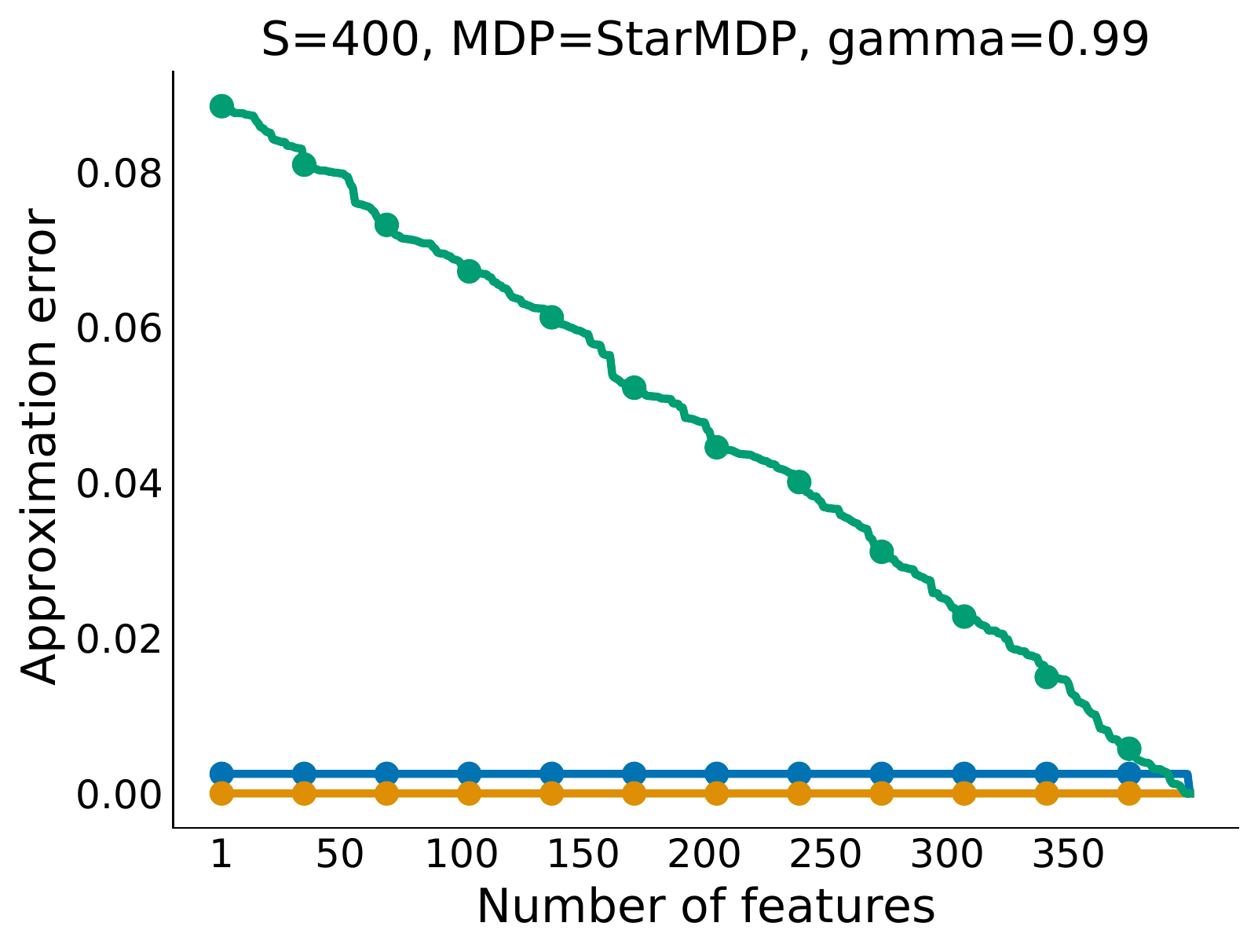}
    \includegraphics[width=0.22\textwidth]{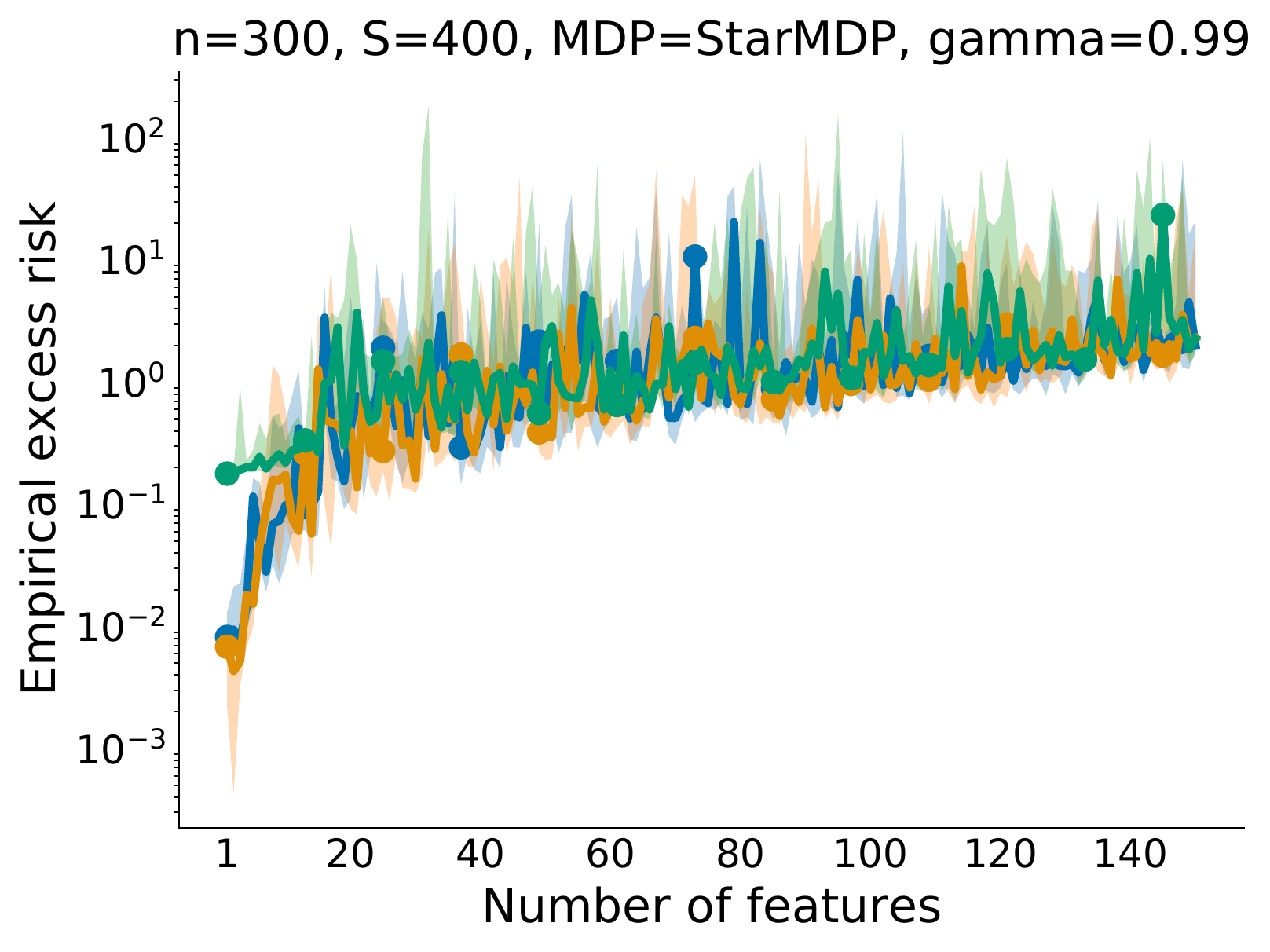}
      \includegraphics[width=0.22\textwidth]{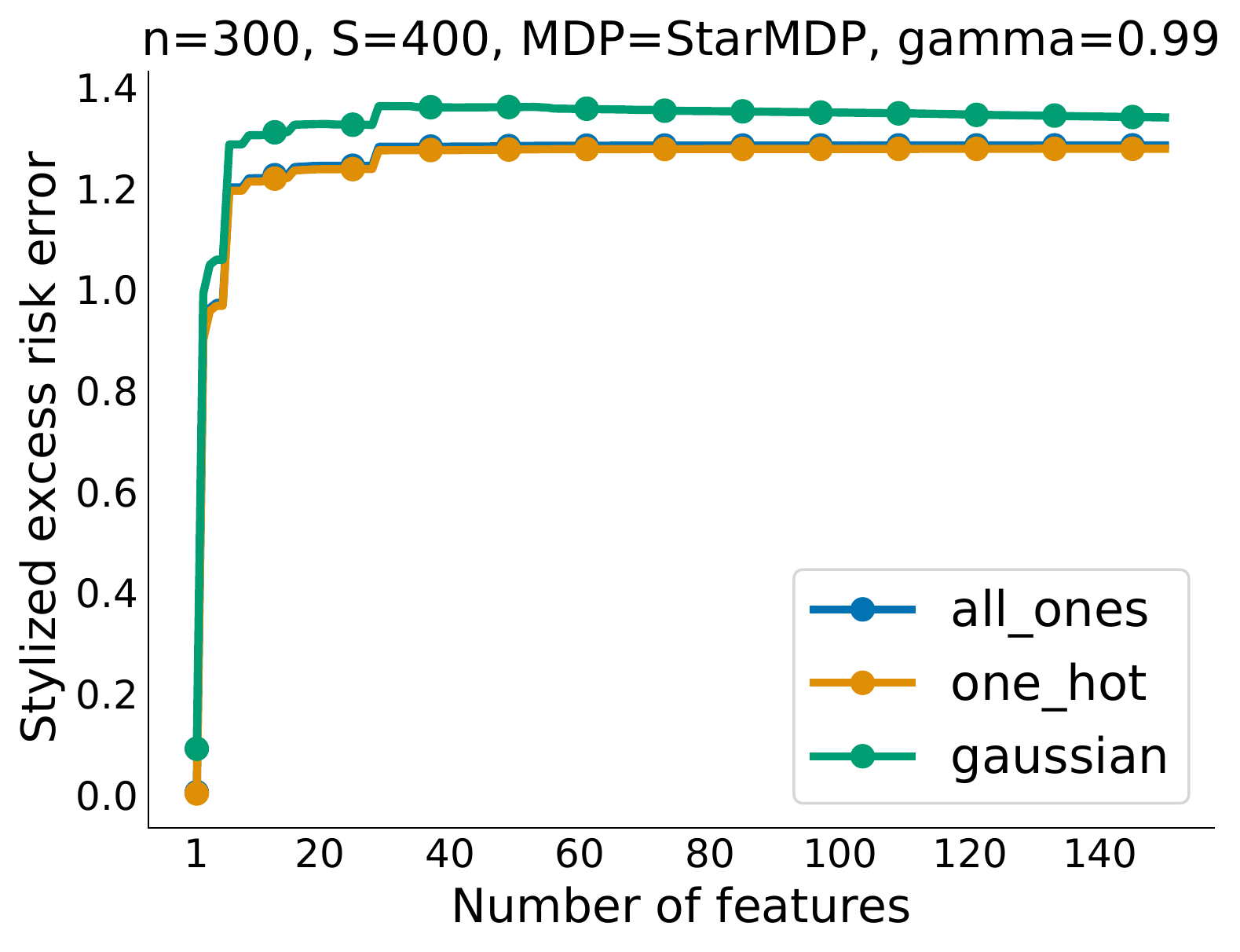}\vfill
    \includegraphics[width=0.22\textwidth]{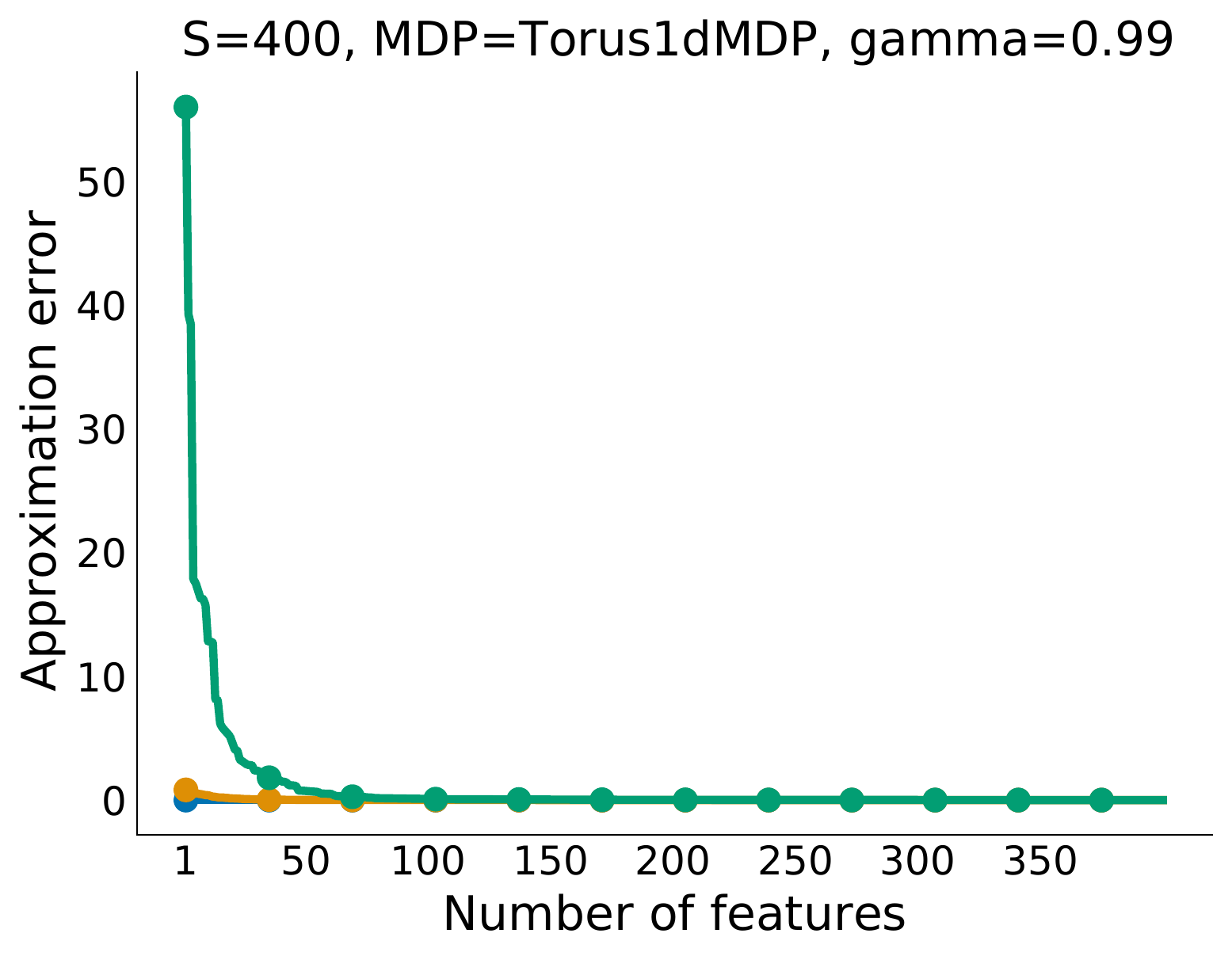}
        \includegraphics[width=0.22\textwidth]{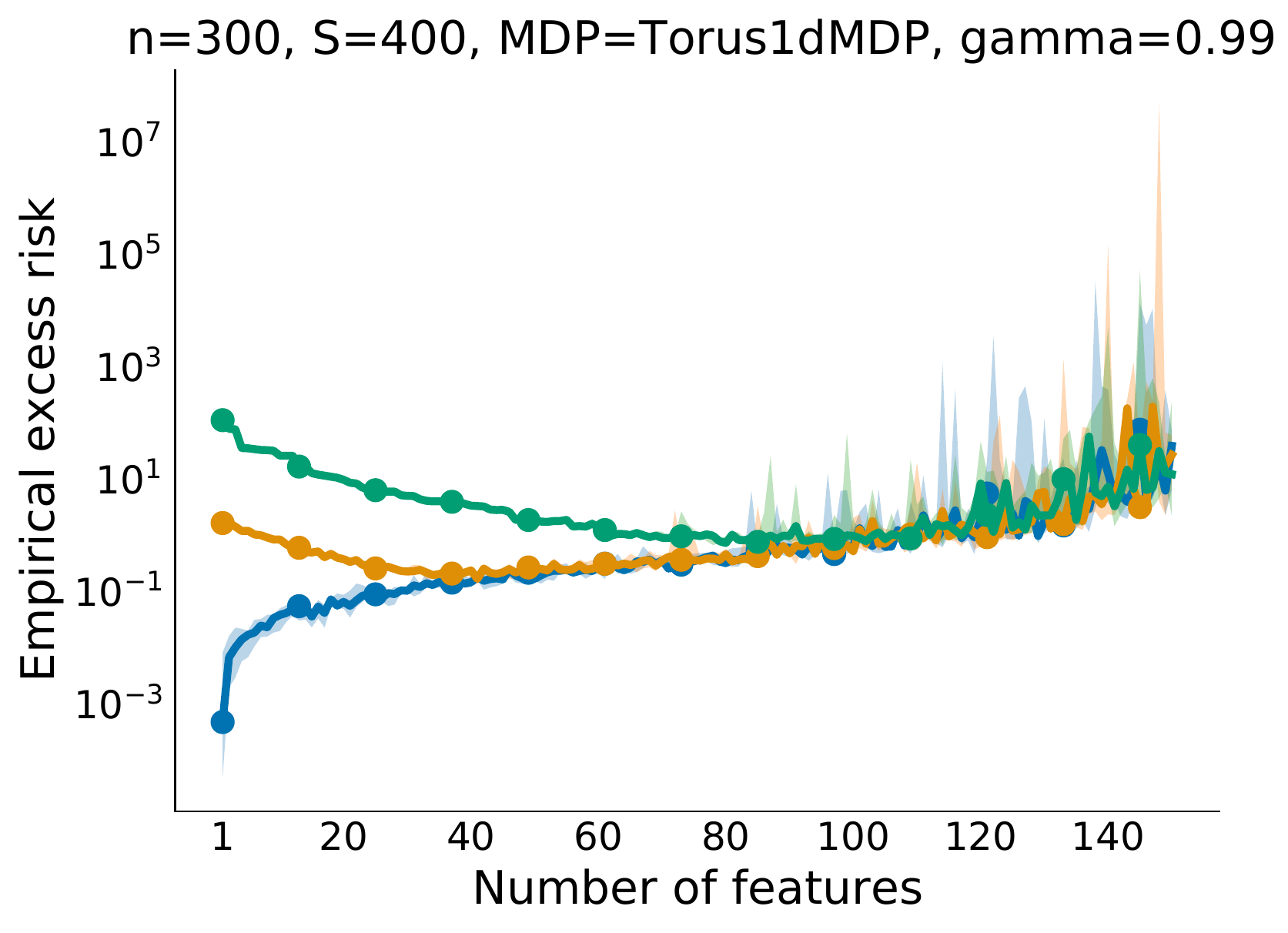}
         \includegraphics[width=0.22\textwidth]{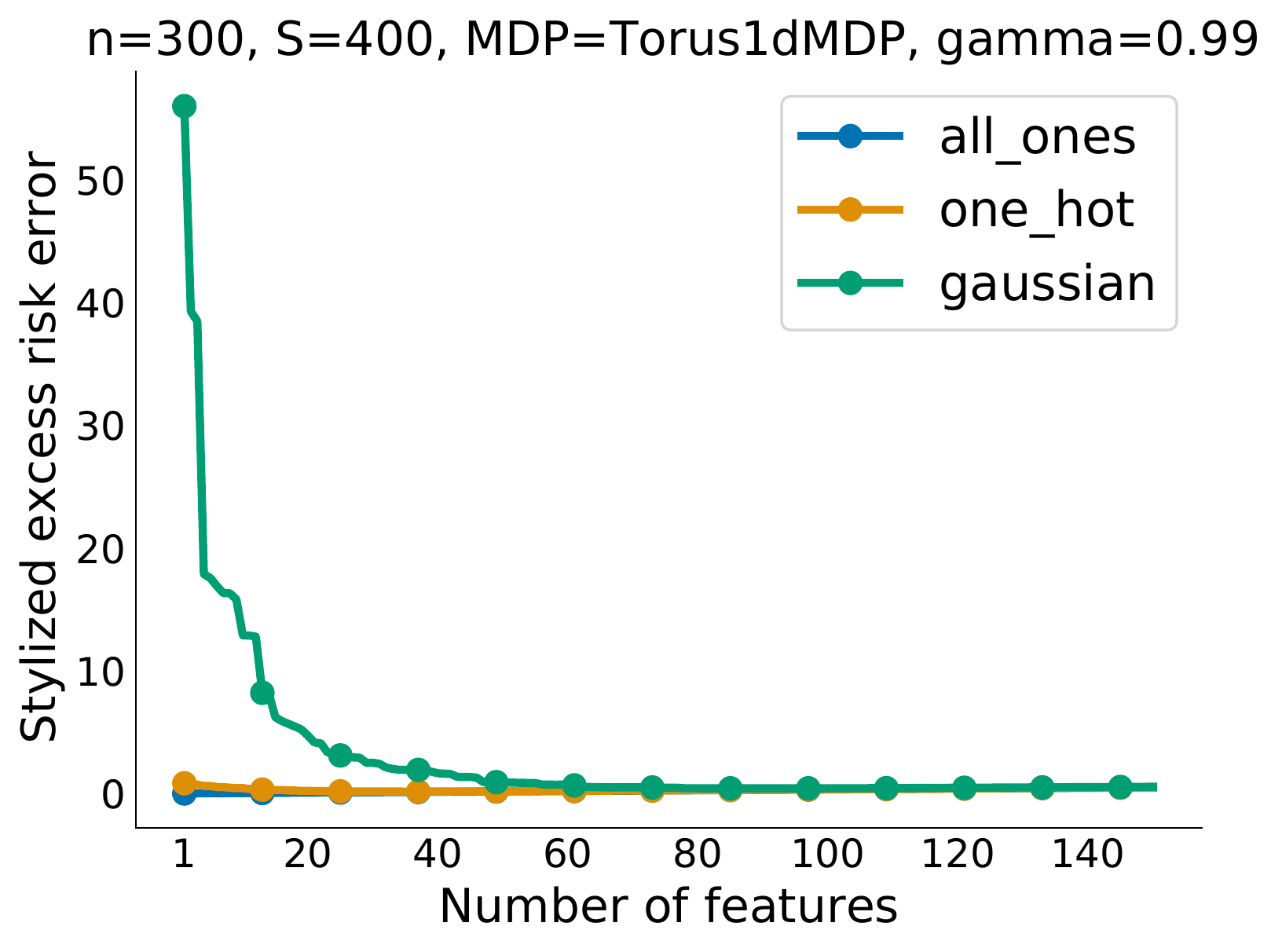}\vfill
     \includegraphics[width=0.22\textwidth]{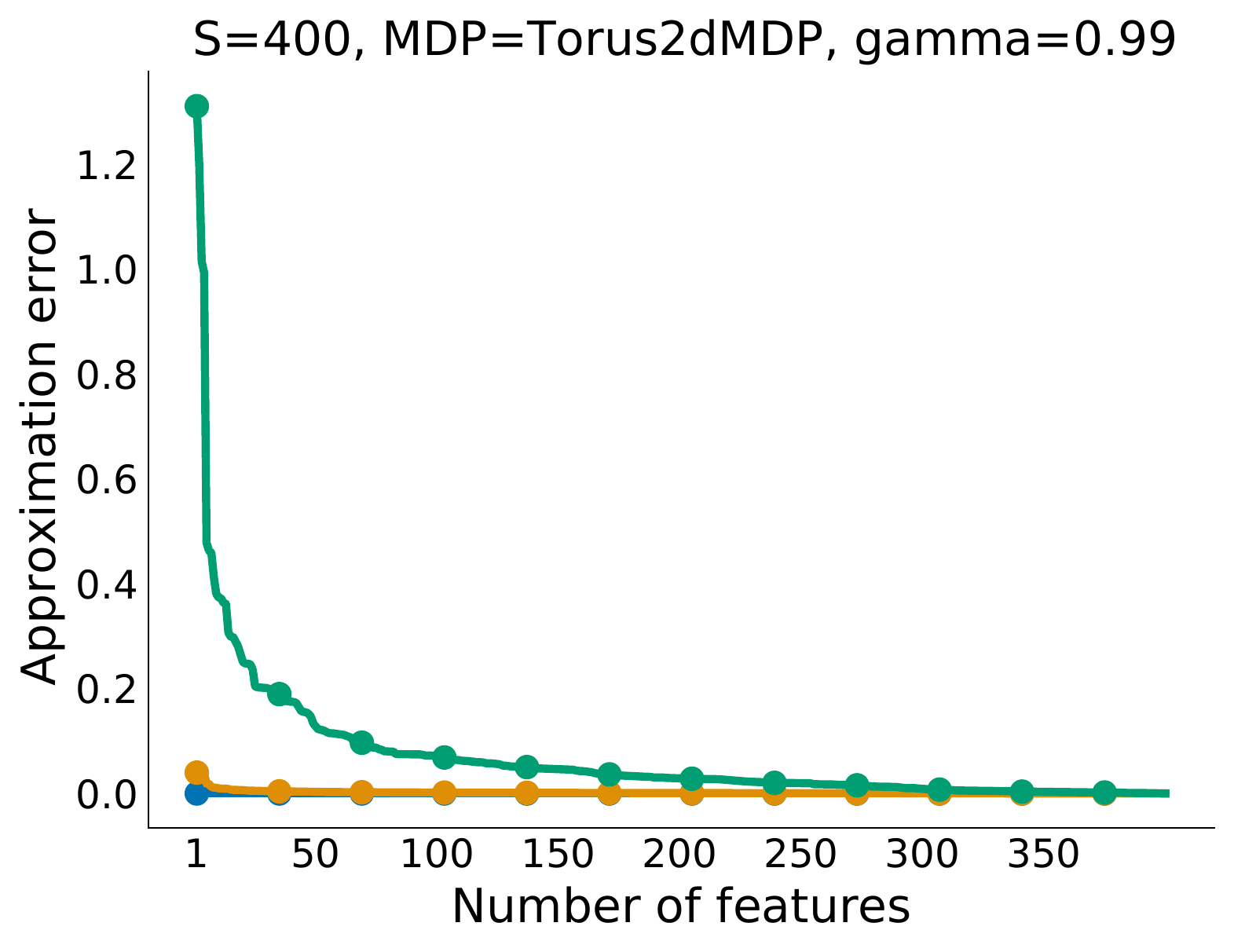}
        \includegraphics[width=0.22\textwidth]{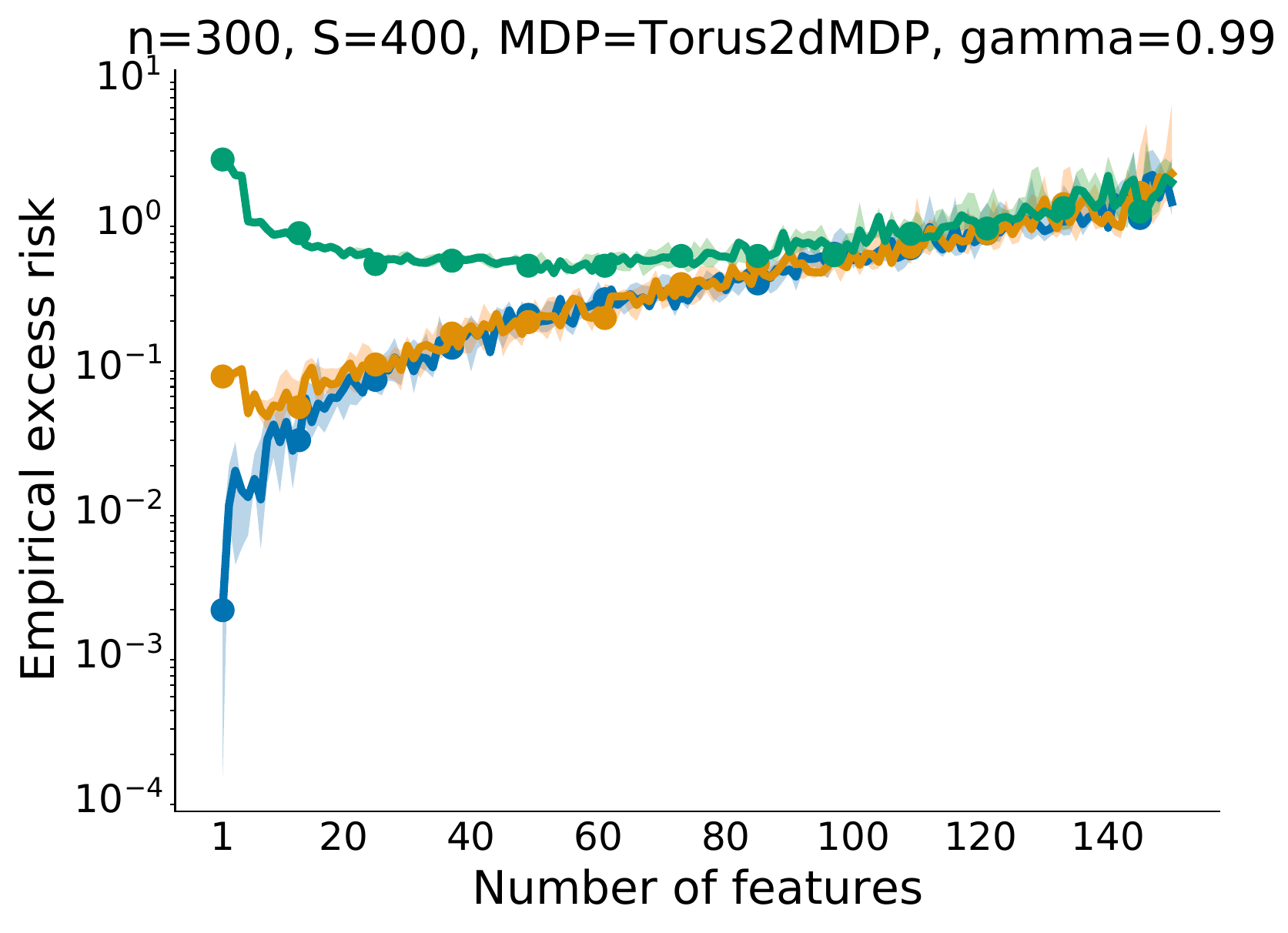}
         \includegraphics[width=0.22\textwidth]{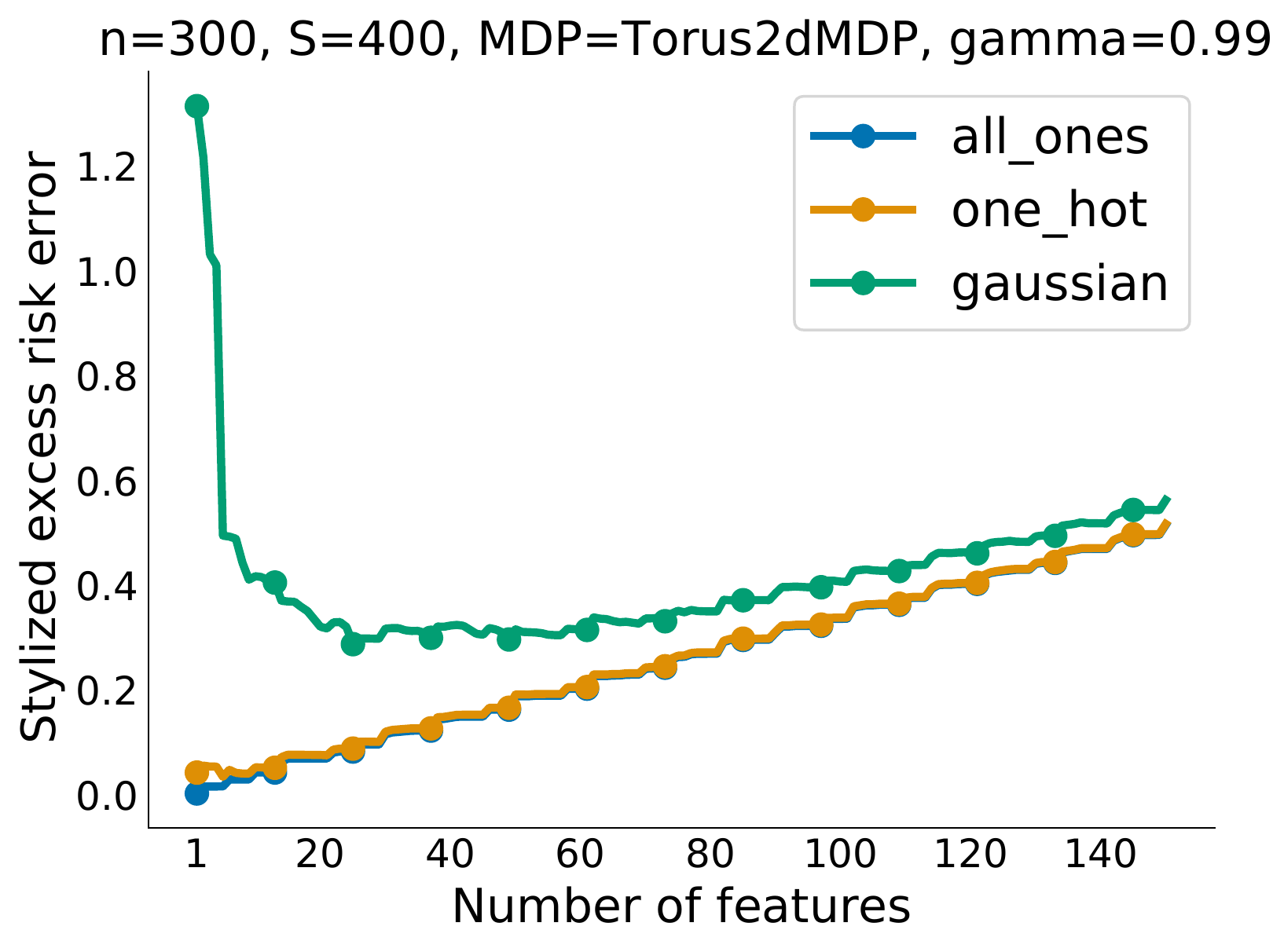}\vfill     
            \includegraphics[width=0.22\textwidth]{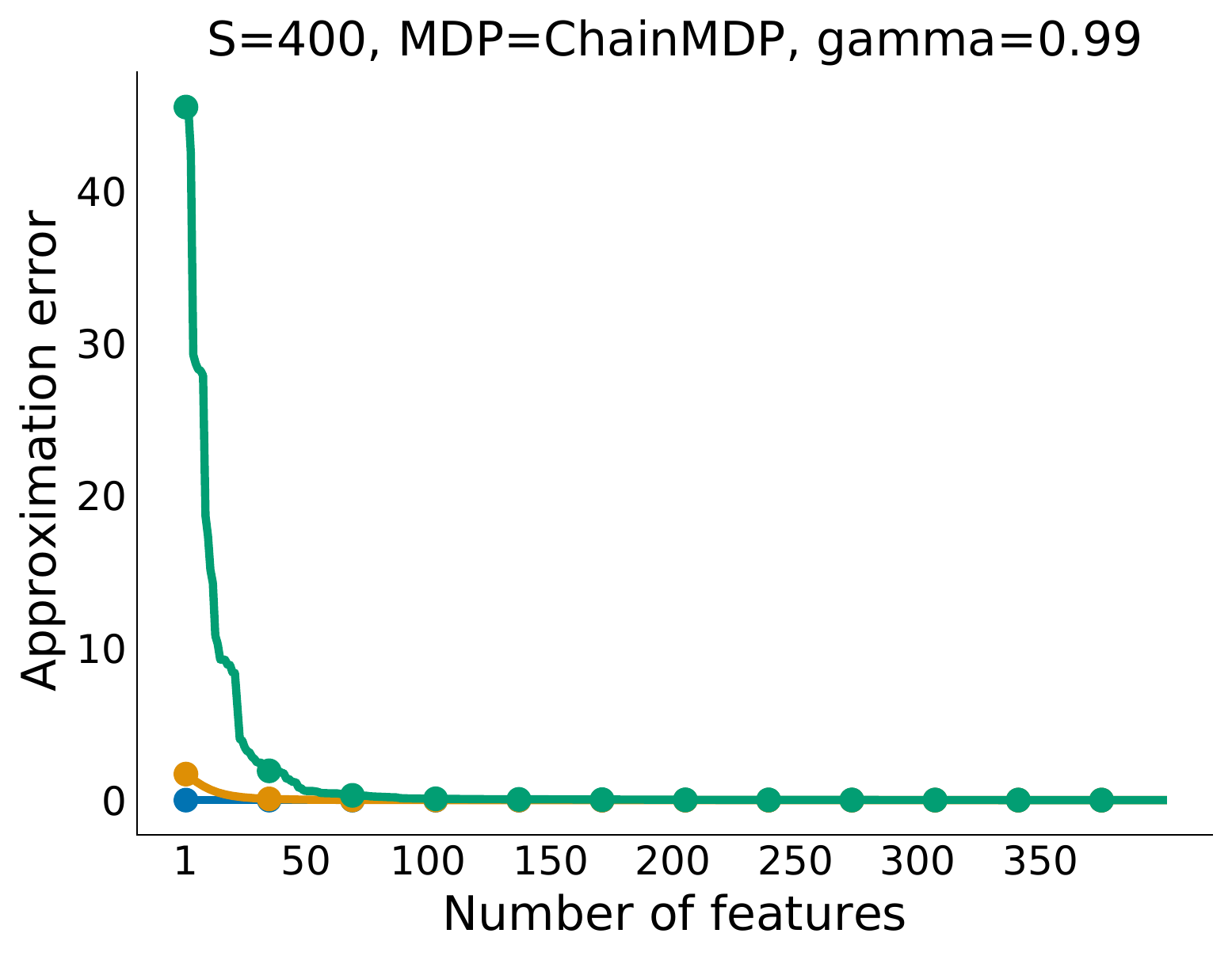}
  \includegraphics[width=0.22\textwidth]{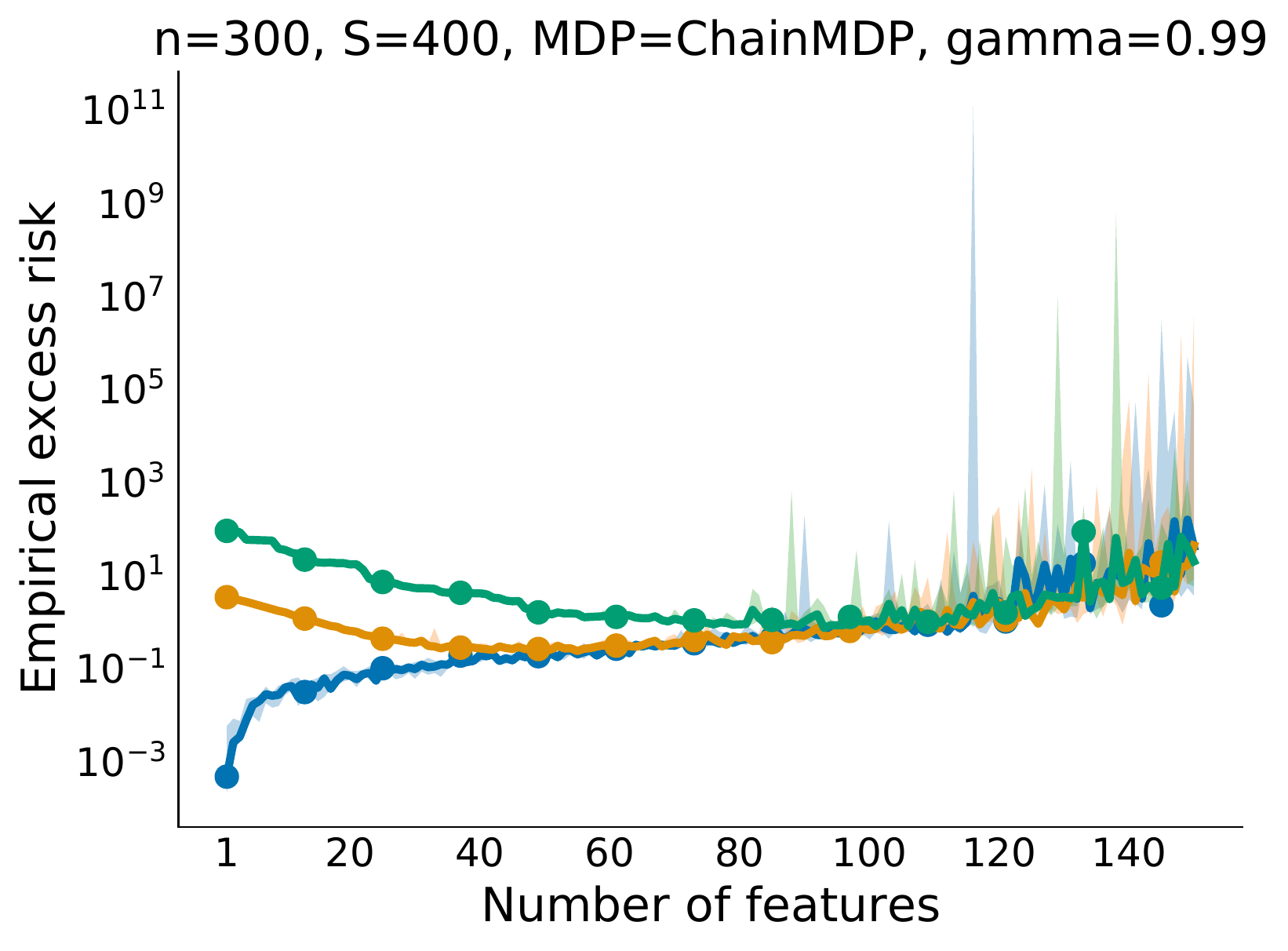}
  \includegraphics[width=0.22\textwidth]{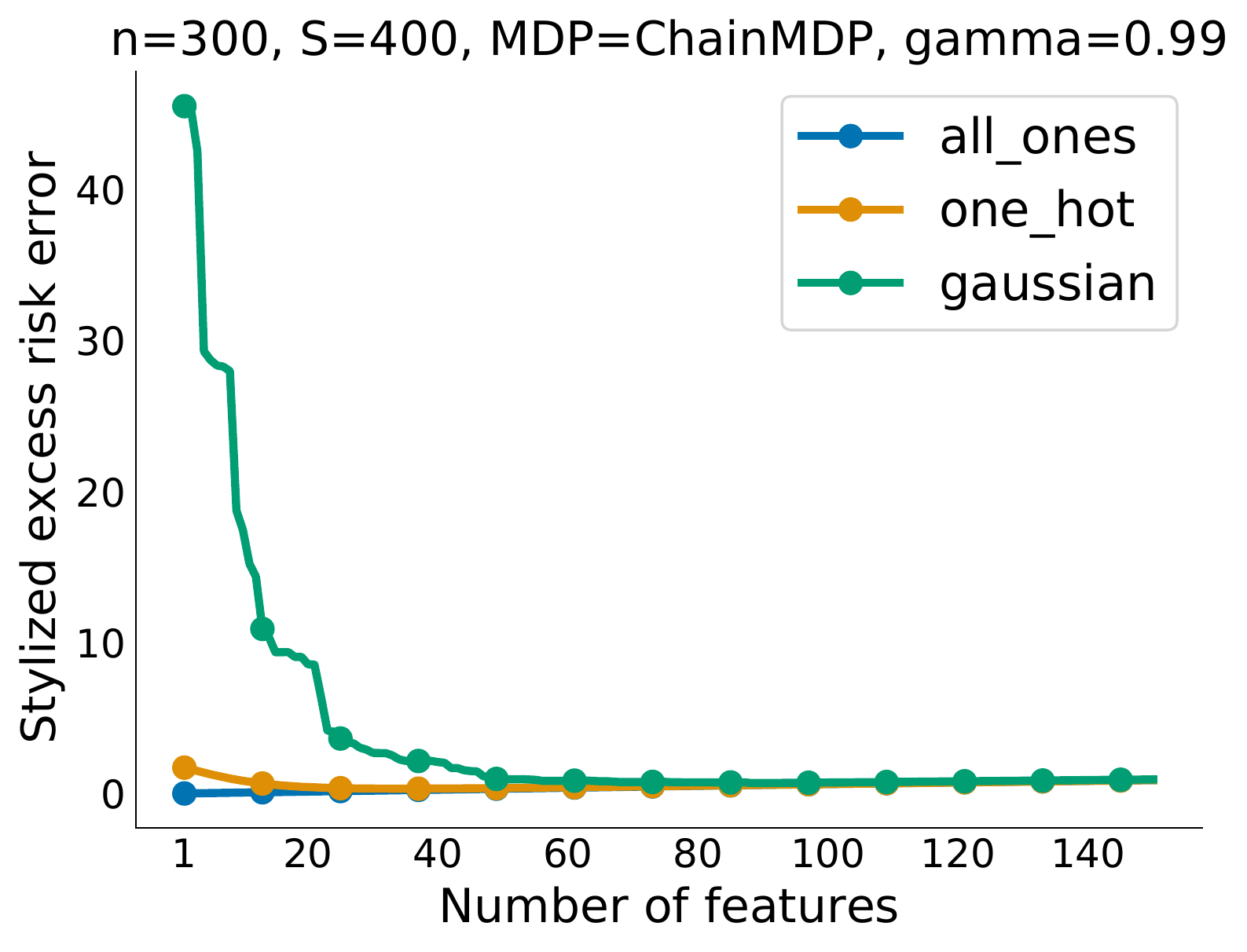} \vfill 
      \includegraphics[width=0.22\textwidth]{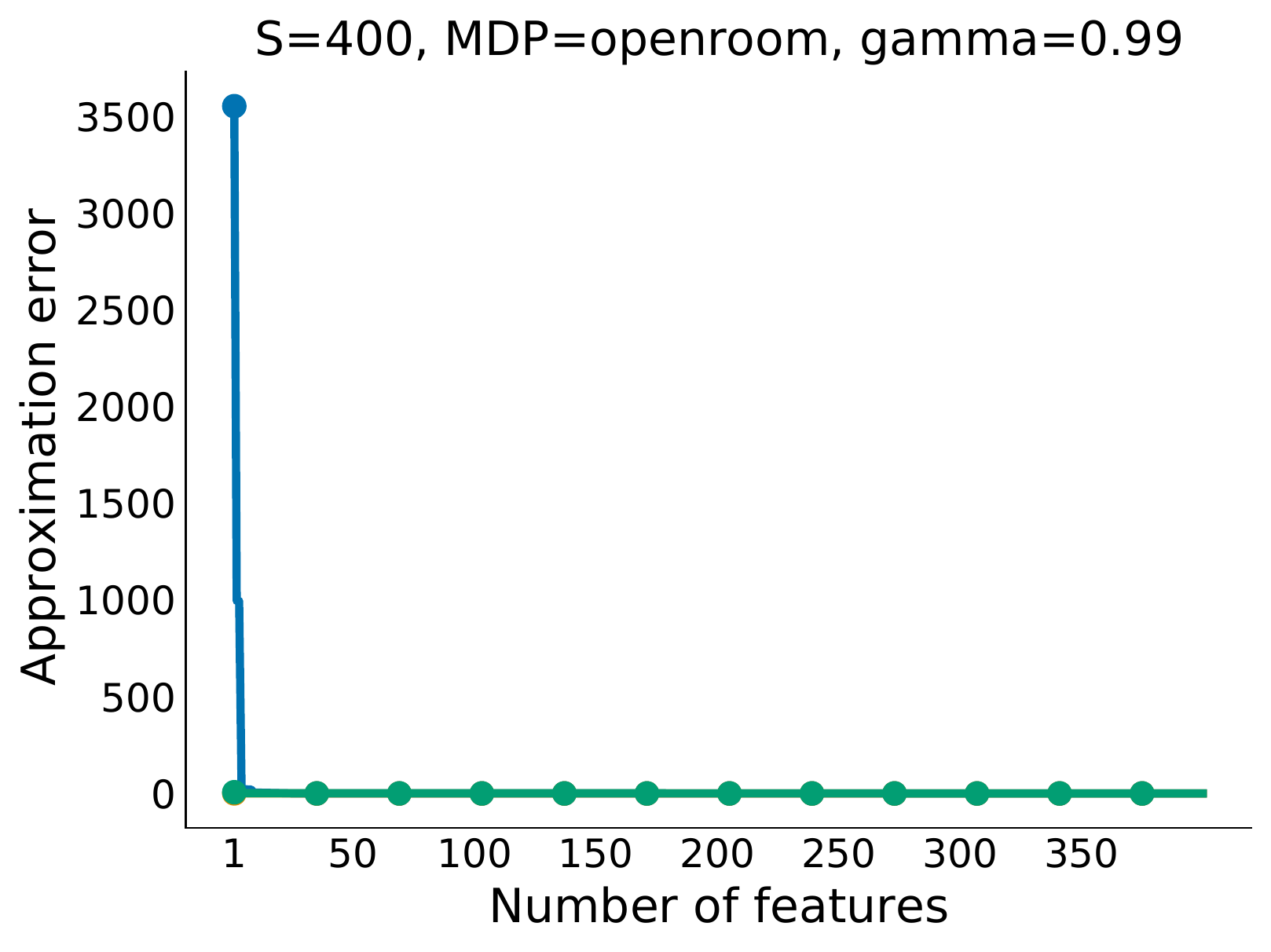}
    \includegraphics[width=0.22\textwidth]{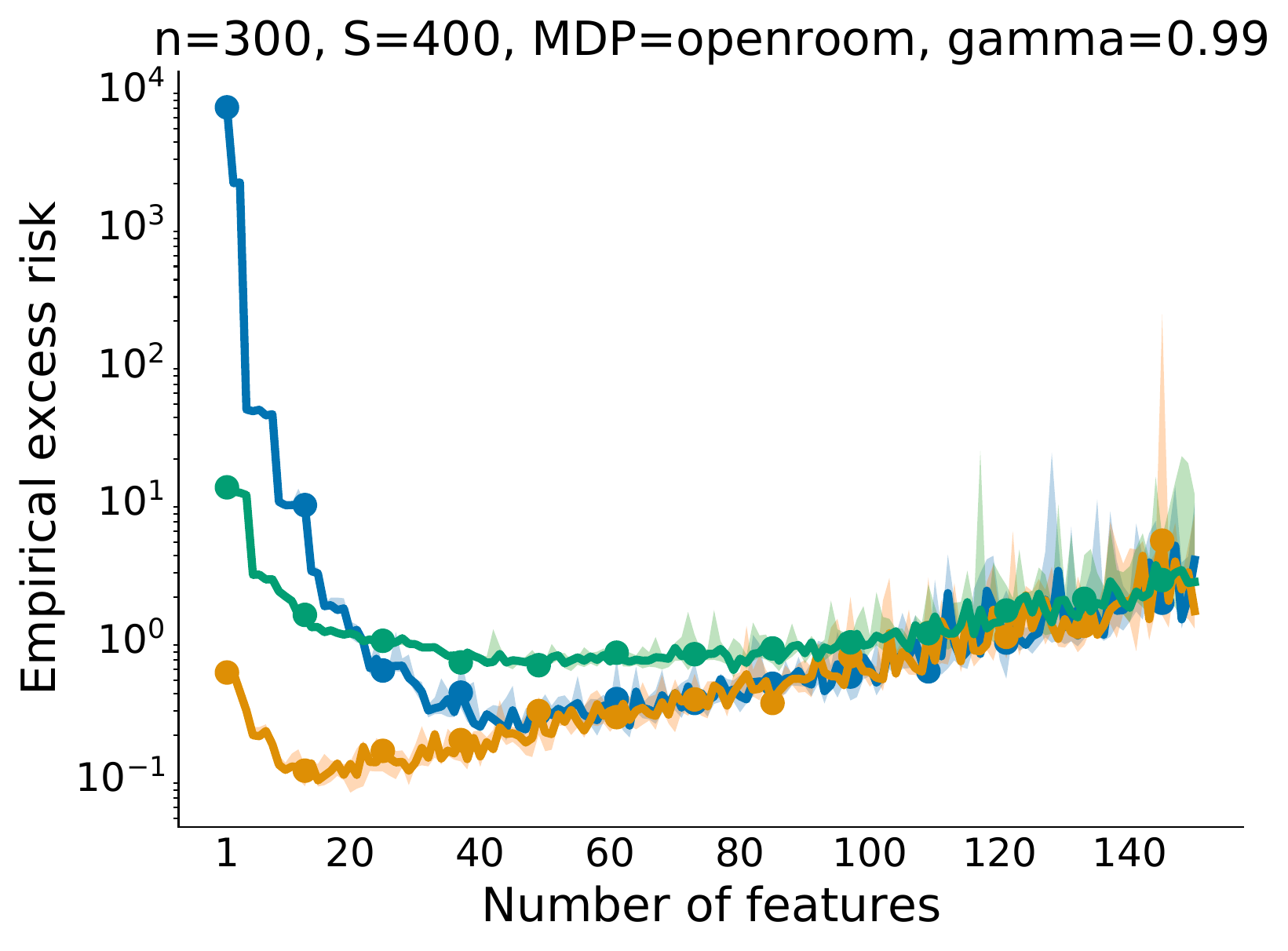}
  \includegraphics[width=0.22\textwidth]{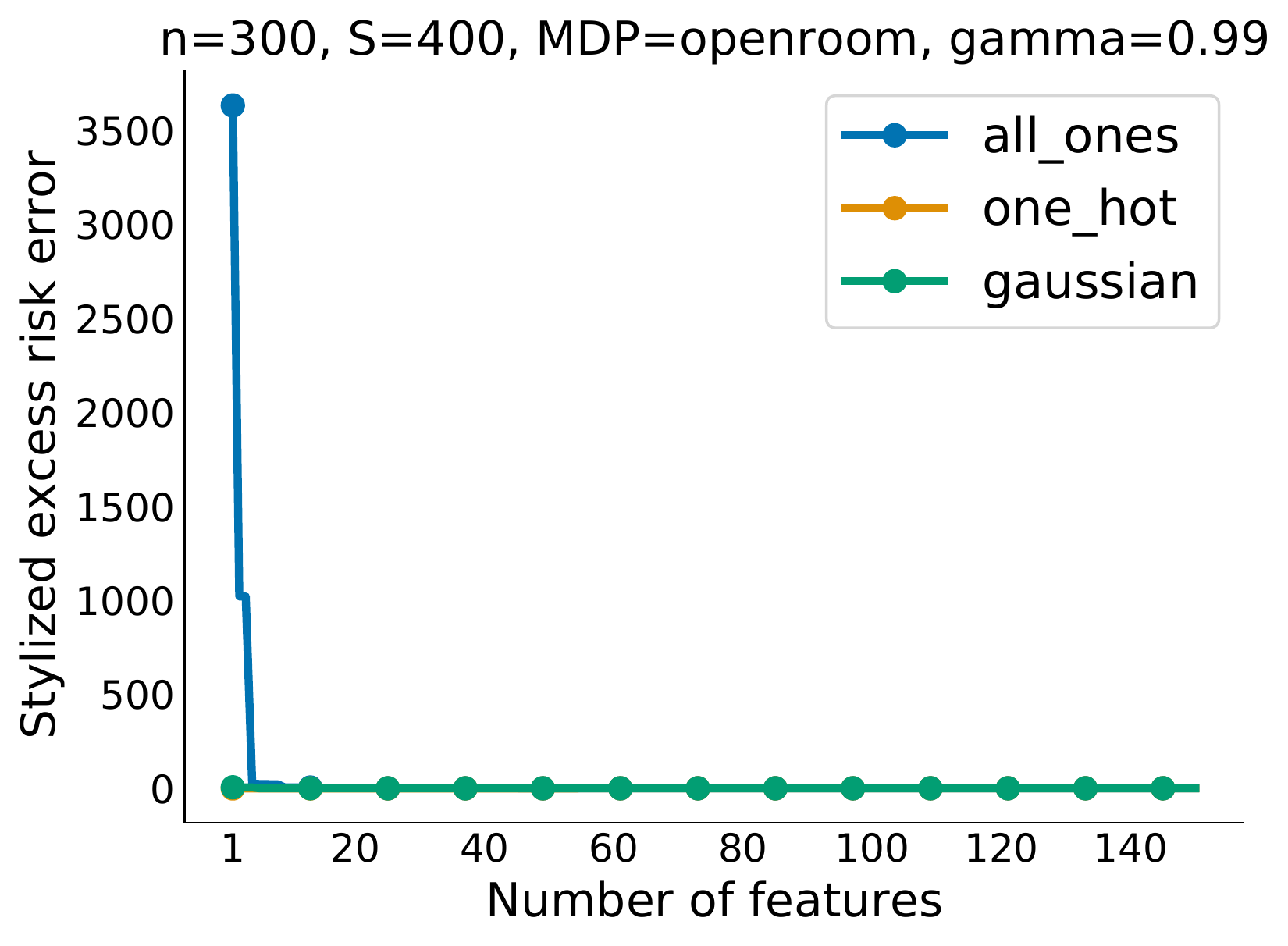} \vfill 
              \includegraphics[width=0.22\textwidth]{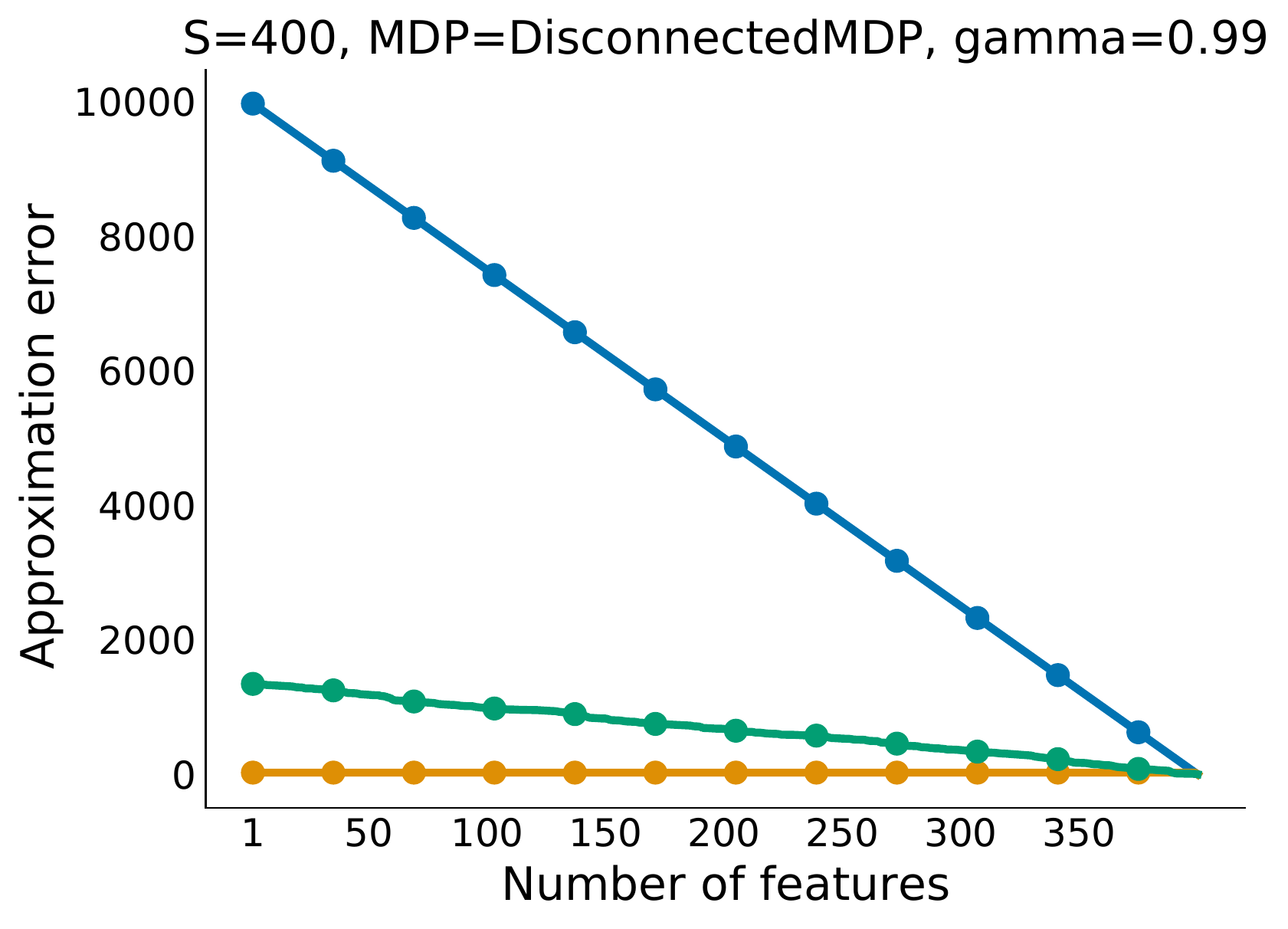}
  \includegraphics[width=0.22\textwidth]{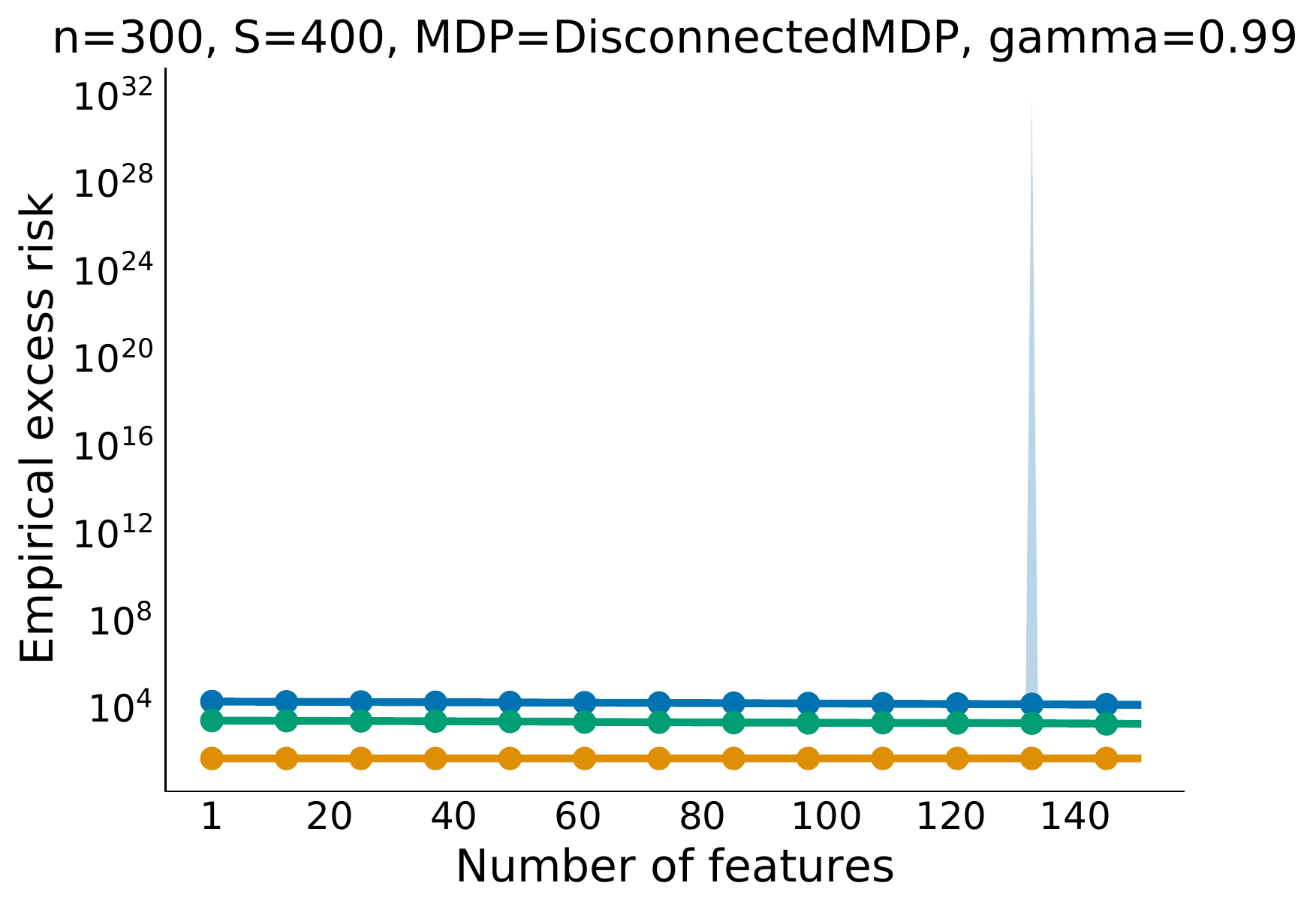}
  \includegraphics[width=0.22\textwidth]{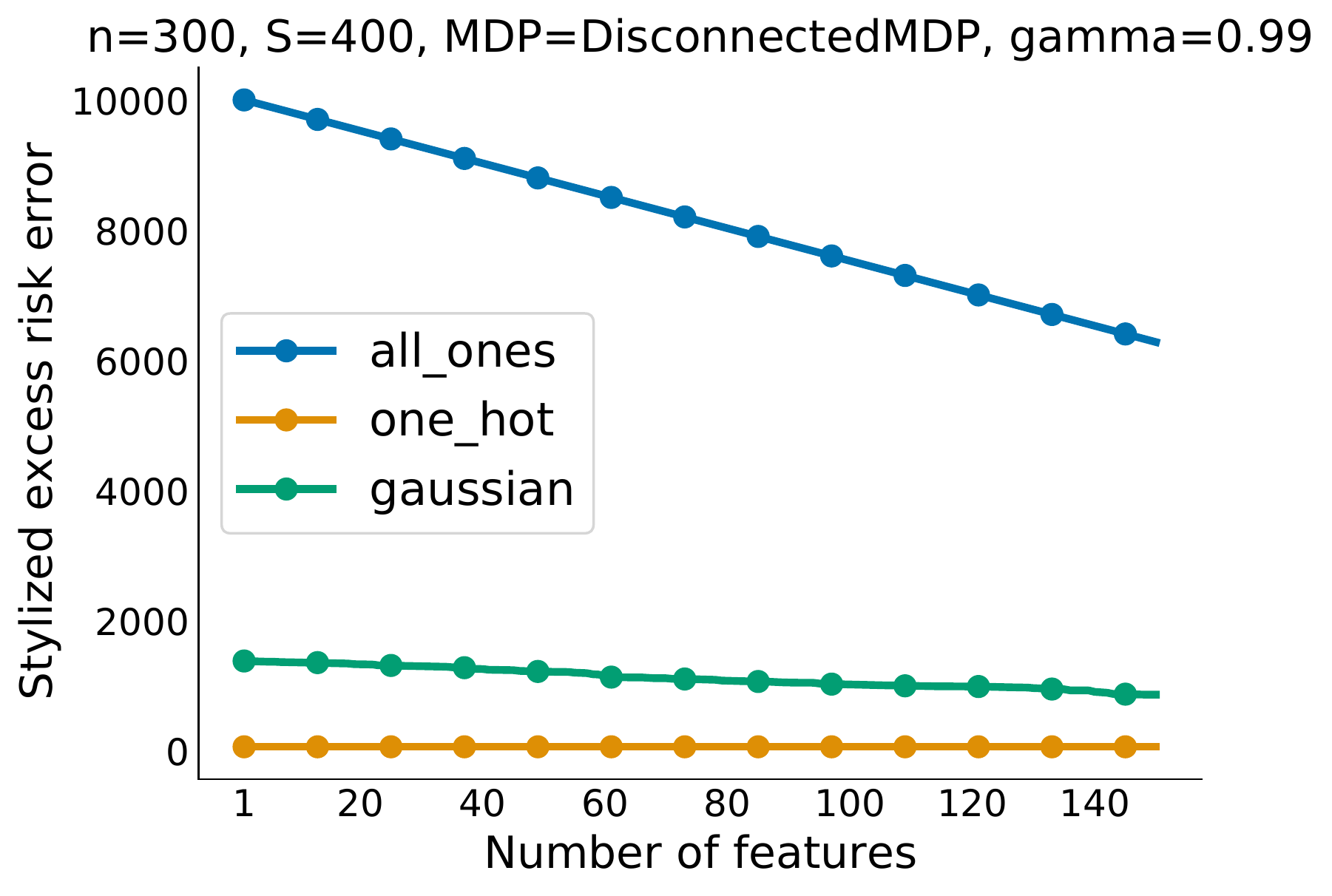} \vfill 
                \includegraphics[width=0.22\textwidth]{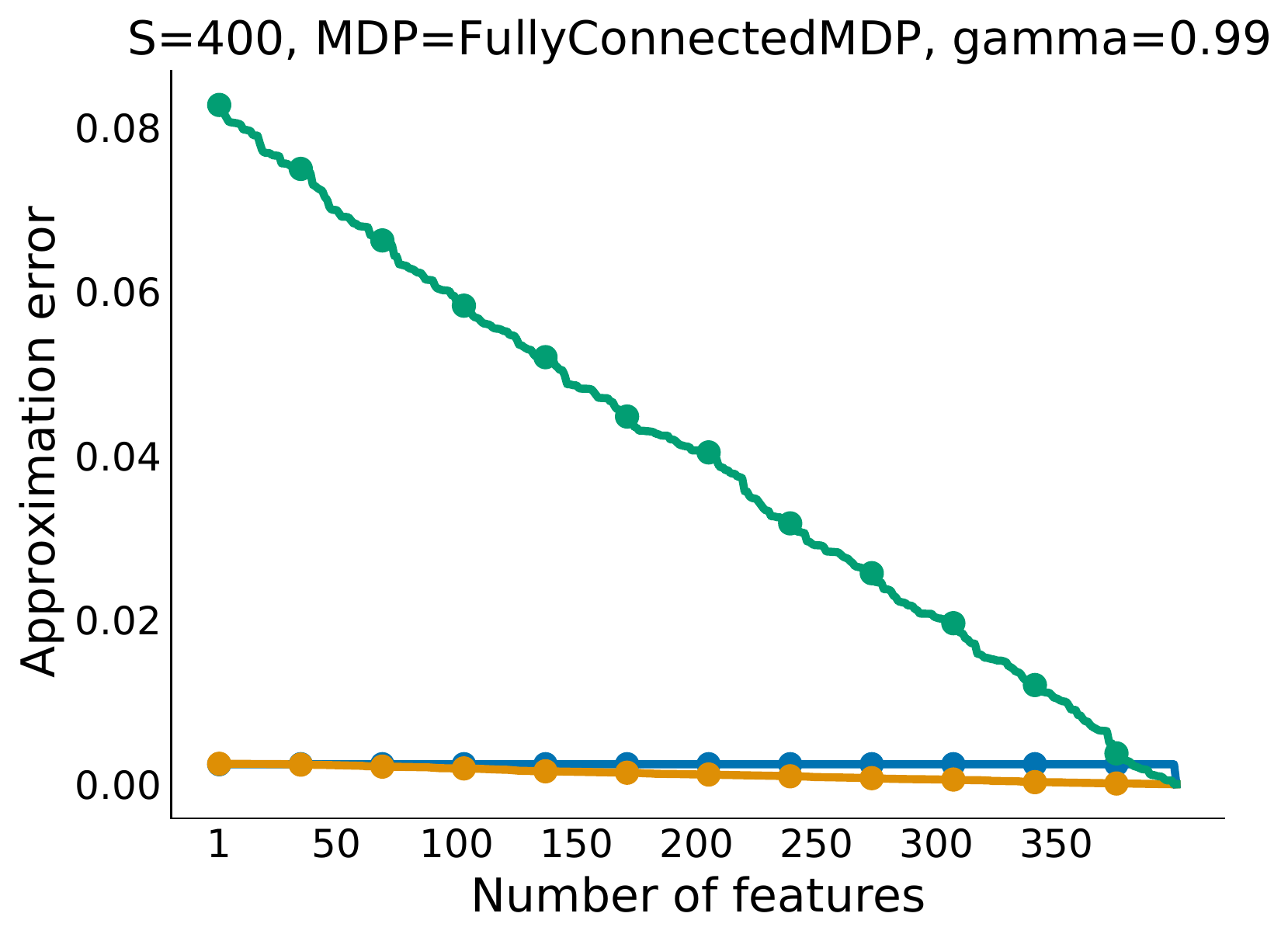}
  \includegraphics[width=0.22\textwidth]{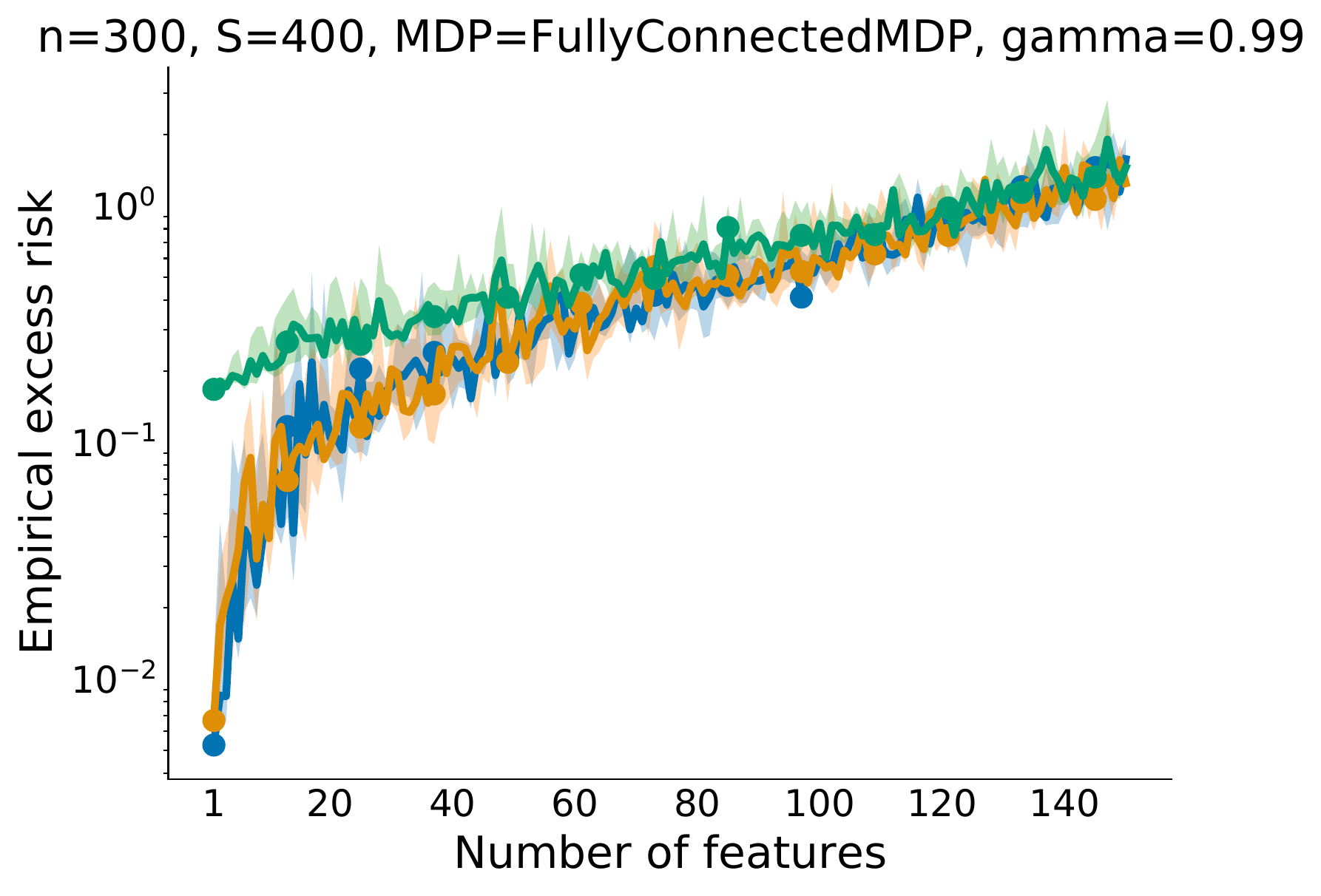}
  \includegraphics[width=0.22\textwidth]{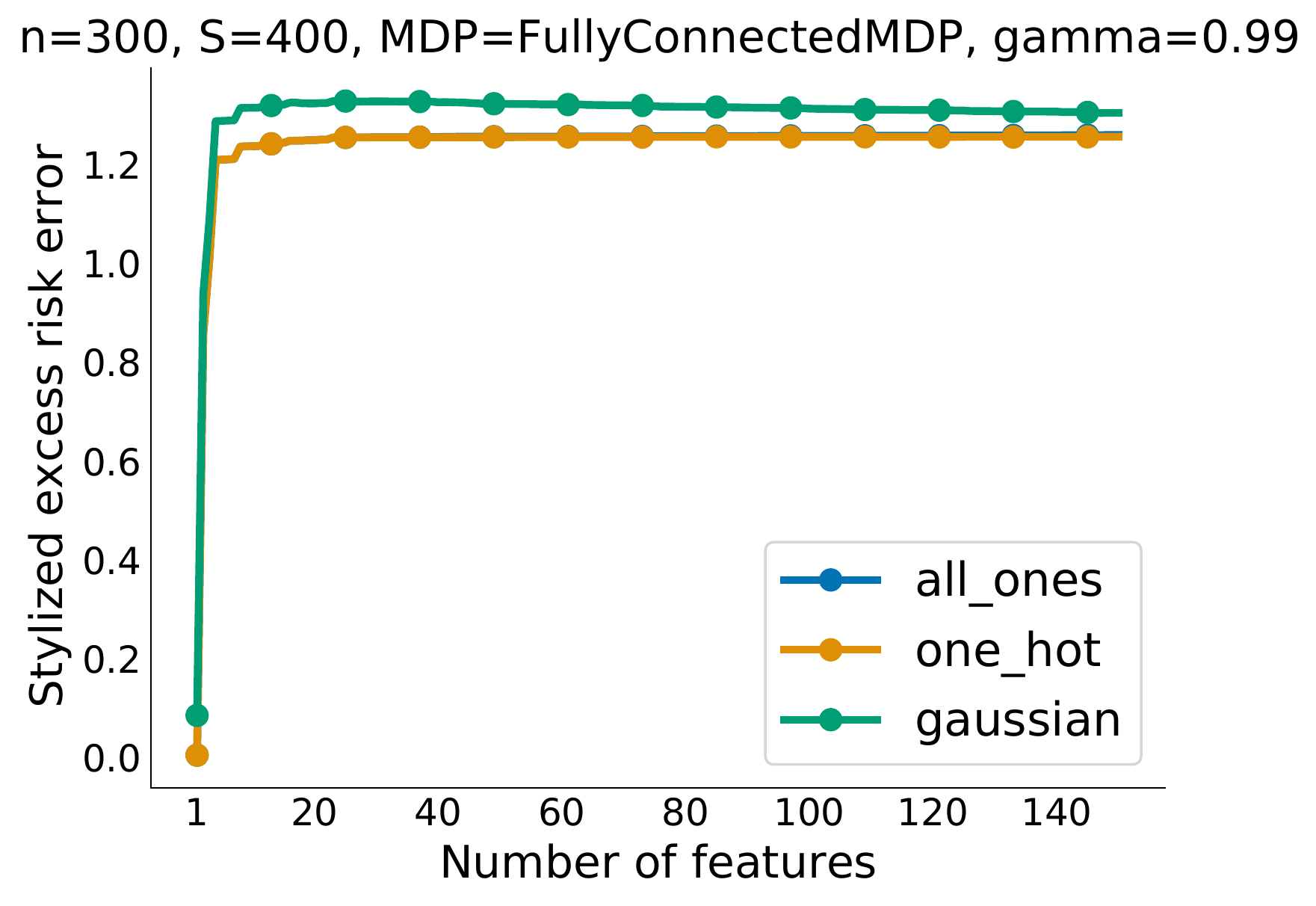} \vfill 
  \caption{\textbf{Left:} Approximation error $\|P_{F_k}V^{\pi}\|$ given a one-hot, all-ones and Gaussian reward vector and for MDPs with different graphical structures. \textbf{Middle:} Median empirical excess risk $\cE(V_{F_k, \hat{w}})$ given a one-hot, all-ones and Gaussian reward vector. \textbf{Right} Theoretical excess risk for a representation $\Phi_k=F_k$ and a one-hot, all-ones and Gaussian reward vector. The median is over 5 random seeds and shading gives $95\%$  confidence intervals.
  }
  \label{fig:toyplots}
\end{figure}
In this section, we study the generalization characteristics of the representations induced by the SVD of the successor representation for several environment transition structures. We illustrate the different graphs over which we define a random walk, studied in \cref{sec:effect-structure} as well as some new ones, in \cref{fig:graph:illustrations}.

Our experiment consists in evaluating the value function on these different transition structures when $S=400$ states. We consider three different reward vectors $r_\pi \in \rR^S$: the all ones vector, the one-hot feature vector $e_S$, and a vector whose entries are drawn from zero-mean Gaussian distribution and normalized such that $\norm{r_\pi}_{\infty}=1$.
We then sampled a dataset $D$ of $n=300$ pairs $(s_i, y_i)$ where we performed a Monte Carlo rollout to obtain the returns $(y_i)_{i=1}^n$. 
The targets are the value functions induced by the random walk.

We are interested in comparing our generalization bound to the empirical excess risk on these domains. Our bound looks at the regime $ n \geq \deff(F_k)$. We choose $k \leq \frac{n}{2}$ as an heuristic way of achieving this.
We report in \cref{fig:toyplots} the approximation error (\cref{fig:toyplots} Left), the empirical excess risk (\cref{fig:toyplots} Middle) and the theoretical excess risk (\cref{fig:toyplots} Right) obtained when using the representation $\phi=F_k$ on these different graph structures.

\textbf{Star}: Baird's star graph \citep{baird1995residual} consists in $S-1$ states which are the star corners and a state $S$ which is the star center. A random walk on this star graph induces a transition function such that all star corners transition to the star center and the star center goes to the star corners.
There are two extreme cases in terms of rewards: either the reward is the same for all $s_i$, $i \neq S$, (e.g. the all ones reward vector or the one-hot vector $e_S$) or not. If the reward is the same for all $s_i, i \neq S$, then this is effectively a 2 state structure, so we only really need 1 feature to distinguish between the value of the star corners and the value of the star center.
However, if the reward is different for all $s_i(i \neq S)$ then we effectively have $(S-1)$ tuples $(s_i, s_S)$ which can be thought of as independent graphical structures and we thus expect to need all the features to distinguish between their values. We can see this in \cref{fig:toyplots} that for the all ones reward vector and the one-hot reward vector $e_S$, the error with $k=1$ is very good but for the Gaussian reward, the error with $k=1$ is high.

\textbf{Chain}: This is a $S$-state connected graph with 2 pendant states and $(n-2)$ states of degree two. The shapes of the curves are similar to the Torus1d but we can notice that the errors are larger for each feature dimension $k$. This is intuitive as for instance in the case of an all ones reward vector, the values are not the same for each state due to the two end states of the chain, implying that more than one feature is needed to generalize the value function.

\textbf{Openroom}: This is a two-dimensional grid with $S$ states. States strictly inside the grid have four neighbours. States belonging to one (reps. two) edges are of degree three (resp. two). As we observed in \cref{fig:effectivedim_toymdps}, the Openroom domain does not generalizes as well as the Torus2d which can be explained by their difference in effective dimension.

\textbf{Torus1d}: This is a wrap-around version of the Chain. State $i$ transitions to state $(i+1)$ mod $S$ and state $(i-1)$ mod $S$. We can see that the curve showing the empirical excess risk (Middle) corresponding to the Gaussian reward vector has a sweet spot which is also predicted by our theory. Moreover, when all states have the same reward, their values are identical. Hence, in that case, only one feature is enough to have very low error which is shown both empirically and by our theoretical bound on \cref{fig:toyplots}.

\textbf{Torus2d}: It is a wrap-around version of the Openroom domain such that each state has four different neighbors. We can see in \cref{fig:effectivedim_toymdps} that the Torus1d and Torus2d have similar effective dimension but the decay of the singular values is faster in the case of Torus2d translating into smaller approximation errors in \cref{fig:toyplots} (Middle). This results in overall lower excess risk for the Torus2d indicating it generalizes in general better than its one-dimensional counterpart. Just like for the Torus1d, in the case of the Gaussian reward vector, there is a non trivial optimal number of features $k$ minimizing the excess risk, which we can notice is smaller than for the Torus1d. 

\textbf{Disconnected}: This graph consists of $S$ states that self-transition. We do not expect the successor representation to generalize well within this MDP as we cannot leverage knowledge from one feature state to another. This idea was already captured by the effective dimension shown in \cref{fig:effectivedim_toymdps}. The plots in \cref{fig:toyplots} corroborates this both empirically and theoretically showing that its excess risk is indeed the highest across all transition structures considered.

\textbf{Fullyconnected}: This is a connected graph of $S$ states where each state can transition to $(S-1)$ states. The first singular vector, which is the constant vector, is very good in terms of effective dimension but the second vector has high effective dimension. When the rewards are the same in each state, their values are identical. In that case, one feature is enough to distinguish between the $S$ states leading to good generalization in that case. Additional features must be misleading as the excess risks rises significantly from a number of features $k=2$.

\subsection{Full Atari Results}
\label{app:fullatariresults}
For all experiments, we used the hyperparameters provided by Dopamine \citep{castro18dopamine}.

\textbf{Compute.} For our experiments on Atari, we used Tesla V100 GPUs and P100 for all runs. To obtain the pretrained deep representations for each deep RL agent, we ran a total of 5 runs / game $\times$ 60 games / algorithm $\times$ 5 algorithms $=1500$ runs. Each of these runs takes around 5 days. Additionally, for the auxiliary loss experiment, we ran a total of 5 runs / game $\times$ 5 games / algorithm $\times$ 2 algorithms $=50$ runs. In this setting, each run takes around 1 day. Overall, the amount of compute is of 7050 days of GPU training.

We provide a per-game comparison of the effective dimension of the representations induced by DQN, DQN (Adam), Rainbow, IQN and M-IQN throughout training in \cref{fig:effdim_allgames} for all 60 Atari games in the online setting to complement the results presented in \cref{fig:aggregate_atari} in the main part of the paper.

For the offline experiment presented in \cref{fig:atari_offline}, we use the same mini-batch sampled for the temporal-difference loss $\mathcal{L}_{\text{TD}}$ for computing the auxiliary loss $\mathcal{L}_\phi$. Our combined loss is then $\mathcal{L}_{\alpha} = (1-\alpha)\mathcal{L}_{\text{TD}} + \alpha \mathcal{L}_\phi$. We ran a hyperparameter sweep over $\alpha$ on the five games displayed in \cref{fig:sweep} and found that a value of $\alpha = 0.1$ worked well. We provide per-game training curves for IQN agents for 17 Atari games in \cref{fig:atari_offline_pergame} as well as the effective dimension (see \cref{fig:atari_offline_pergame_effdim}) of their induced representations computed with a batch size of $2^{15}$. We also complement these results with the rank of these representations as a function of training in \cref{fig:atari_offline_pergame_rank} and \cref{fig:atari_offline_IQM_rank} as a proxy for the approximation error.

\begin{figure*}[h!]
  \centering
   \includegraphics[width=\textwidth]{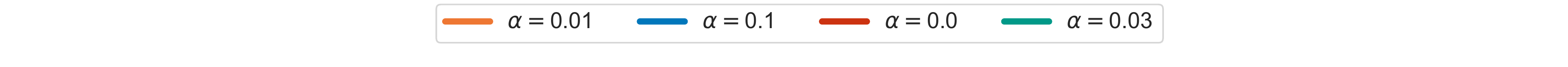}
     \includegraphics[width=\textwidth]{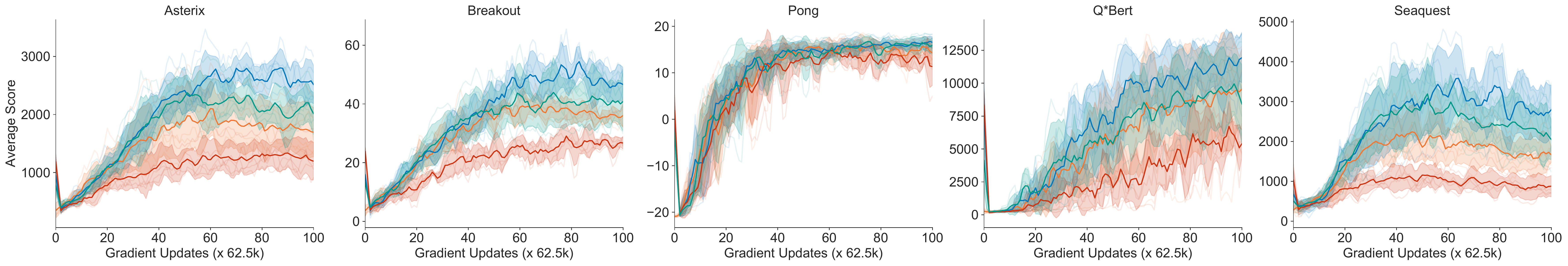}
  \caption{Sweeping over various values of $\alpha$ when adding the auxiliary loss $\mathcal{L}_\phi$ to IQN.}
  \label{fig:sweep}
\end{figure*}

\begin{figure*}[t!]
  \centering
   \includegraphics[width=0.7\textwidth]{images/legend.pdf}
  \includegraphics[width=0.70\textwidth]{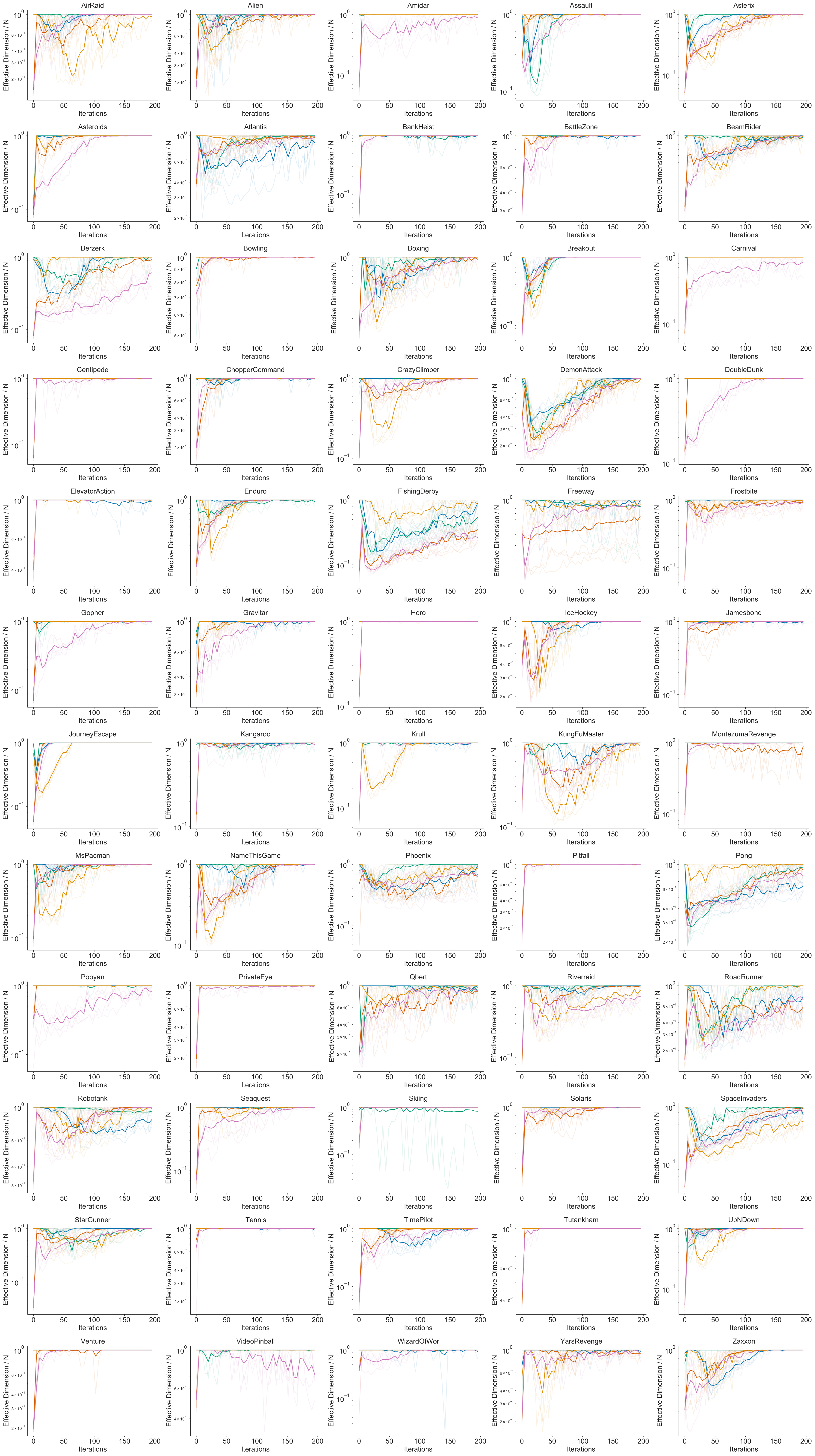}
  \caption{Average estimate (darker color) of the effective dimension normalized by the batch size used $N=2^{15}$ on DQN(Nature), DQN(Adam), Rainbow, IQN and M-IQN on all 60 Atari games computed using 5 independent runs. Individual runs are shown with a lighter color.}
  \label{fig:effdim_allgames}
\end{figure*}

\begin{figure*}[h!]
  \centering
     \includegraphics[width=0.85\textwidth]{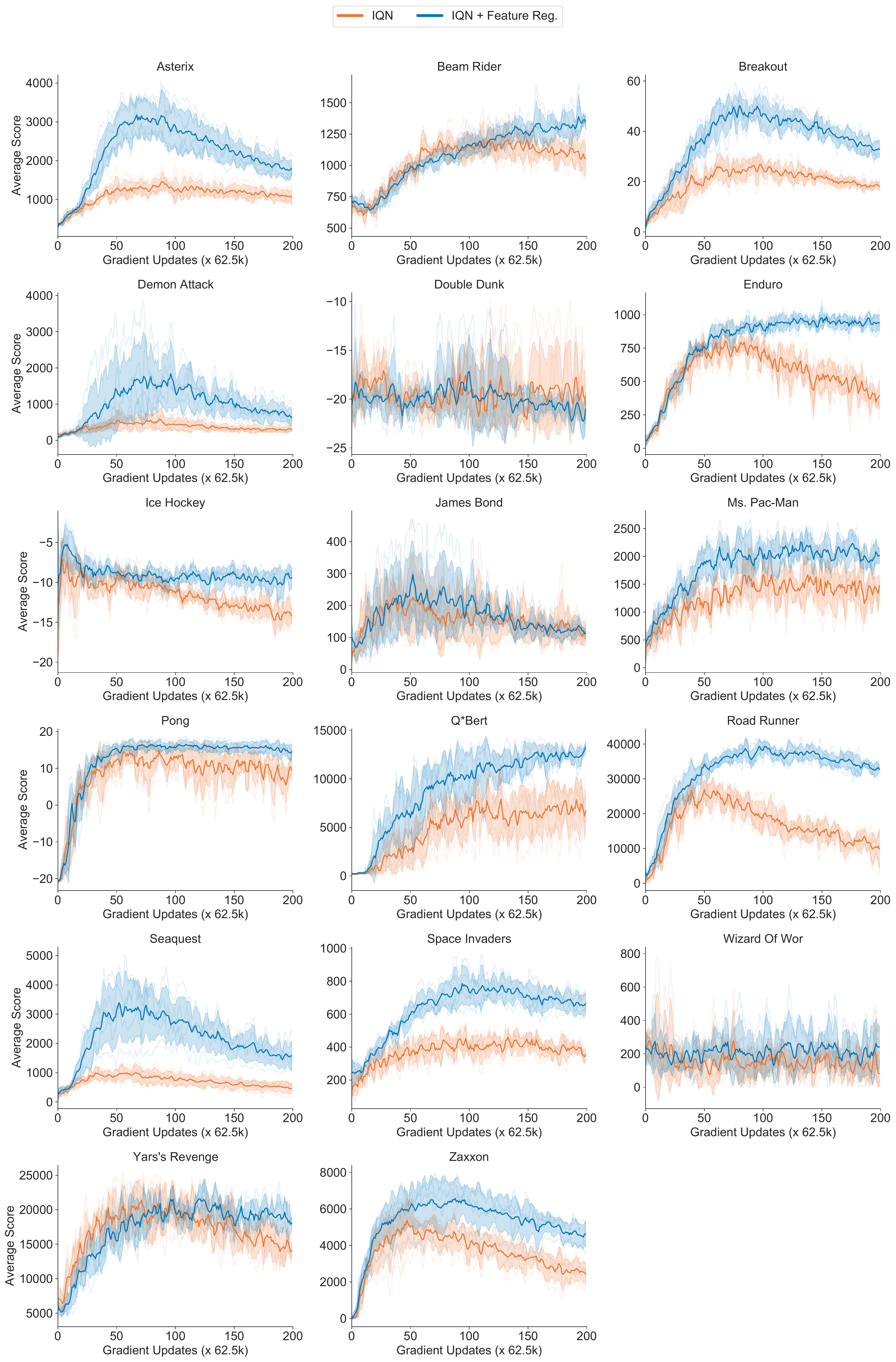}
  \caption{Per-game learning curves of IQN and IQN with feature regularization $L_\phi$ on 17 Atari games in the offline RL setting.}
  \label{fig:atari_offline_pergame}
\end{figure*}

\begin{figure*}[h!]
  \centering
     \includegraphics[width=0.85\textwidth]{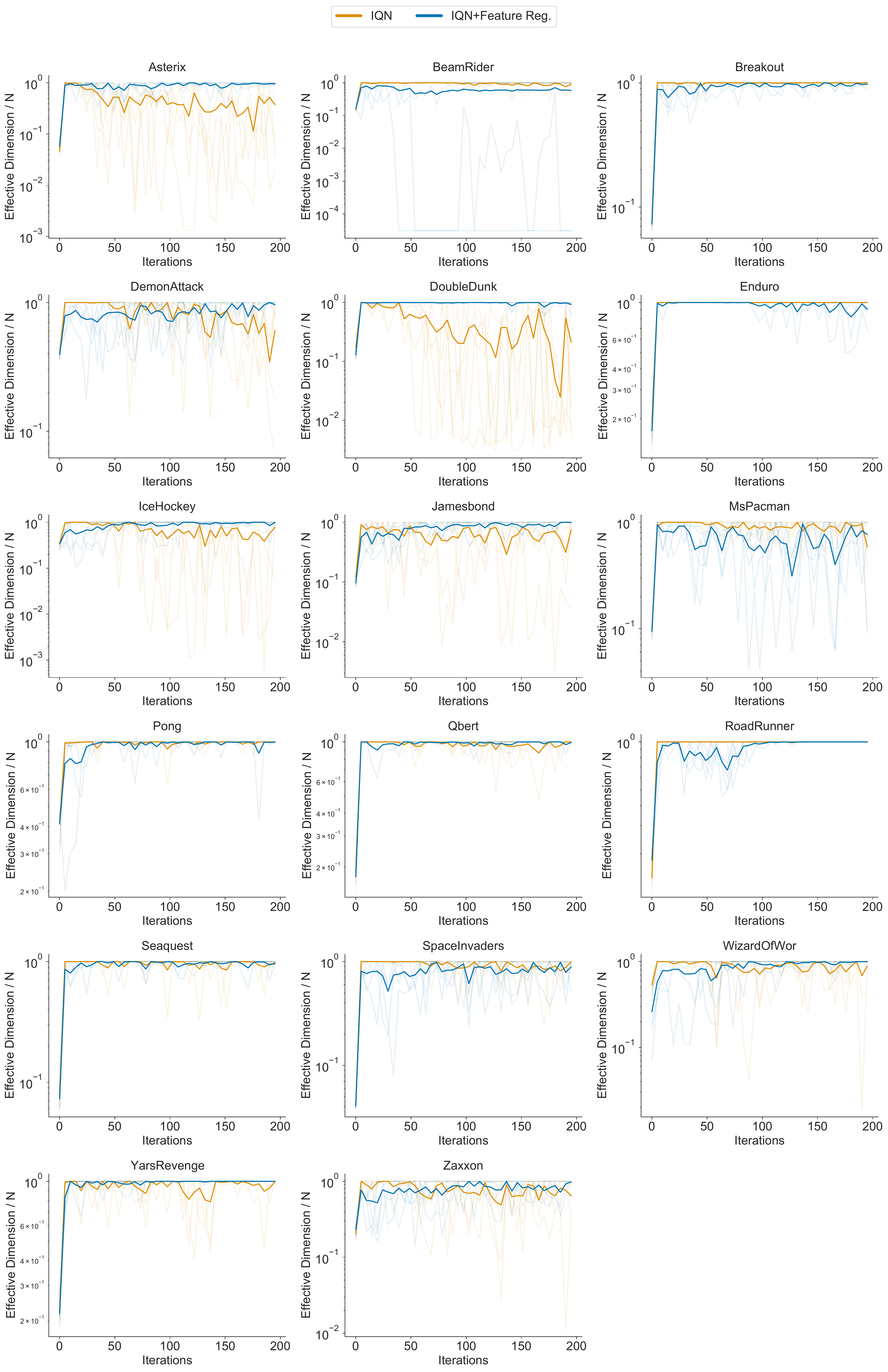}
  \caption{Per-game effective dimension normalized by the batch size $N=2^{15}$ of IQN and IQN with feature regularization $L_\phi$ on 17 Atari games in the offline RL setting, using 5 independent runs. Individual runs are shown with a lighter color.}
  \label{fig:atari_offline_pergame_effdim}
\end{figure*}

\begin{figure*}[h!]
  \centering
     \includegraphics[width=0.85\textwidth]{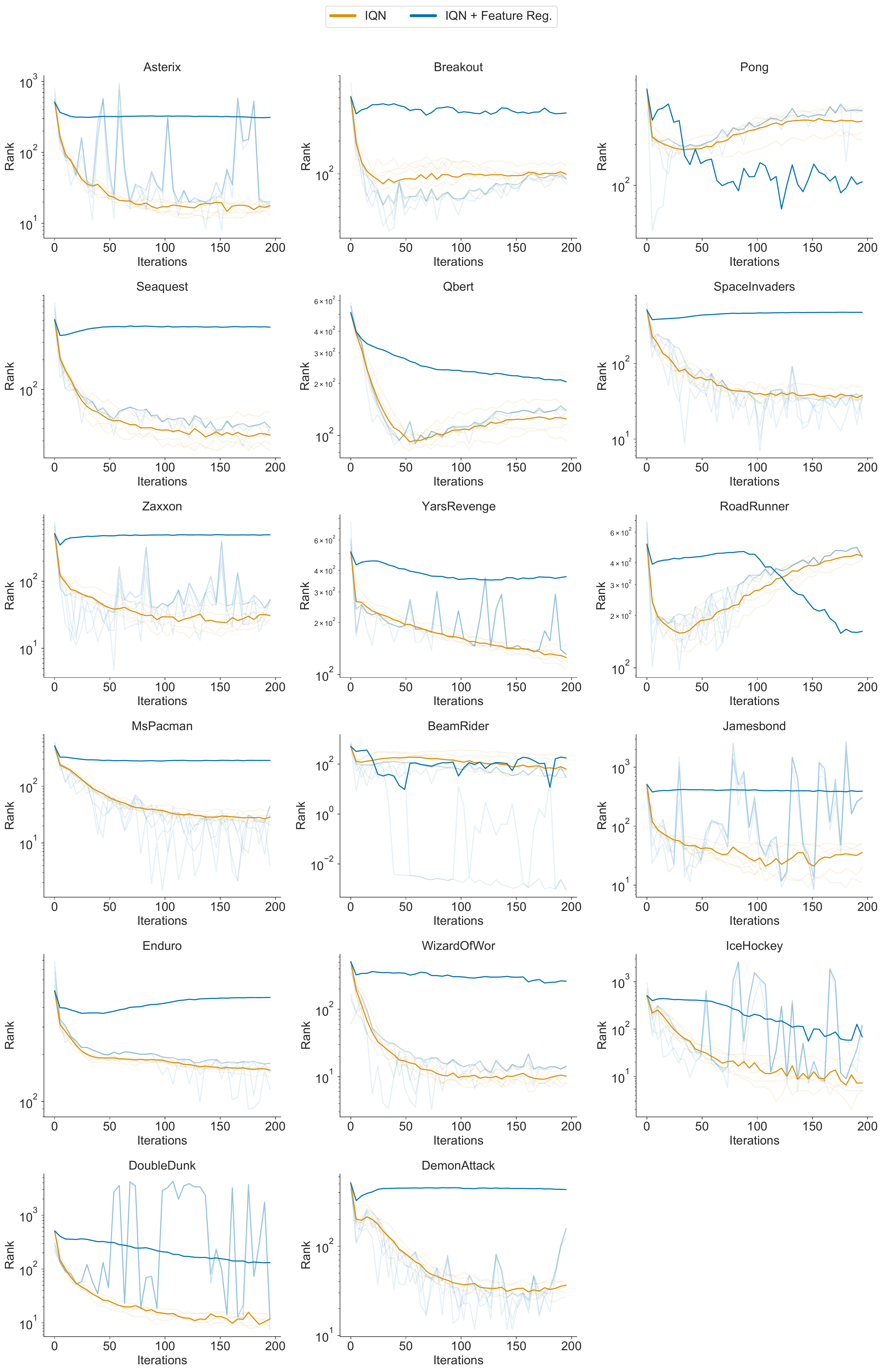}
  \caption{Per-game rank of IQN and IQN with feature regularization $L_\phi$ computed with a batch size $N=2^{15}$ on 17 Atari games in the offline RL setting, using 5 independent runs. Individual runs are shown with a lighter color.}
  \label{fig:atari_offline_pergame_rank}
\end{figure*}

\clearpage
\begin{figure*}[h!]
  \centering
  \includegraphics[width=\textwidth]{images/legend_offline.pdf}
     \includegraphics[width=0.4\textwidth]{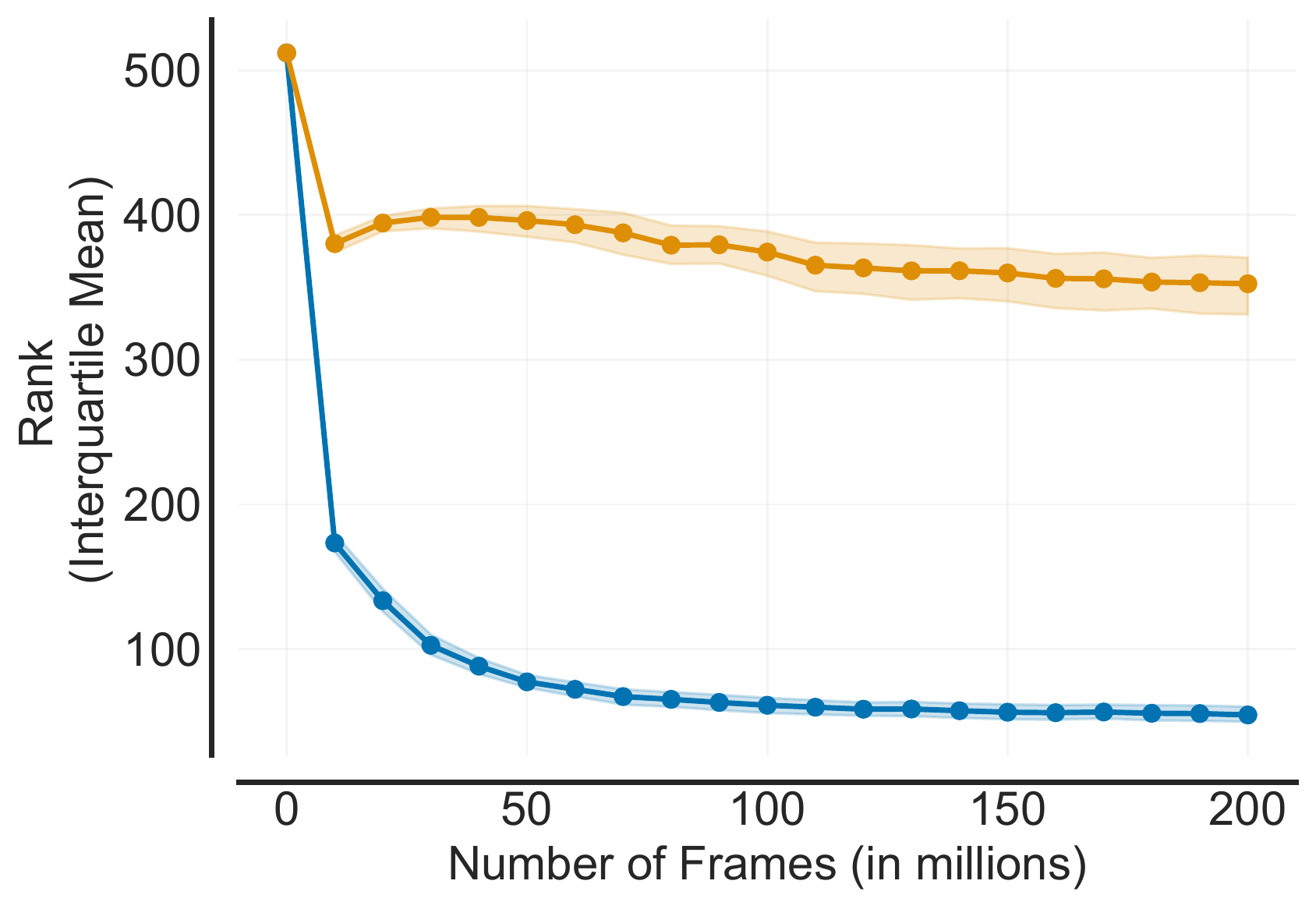}
  \caption{Interquartile mean (IQM)~\citep{agarwal2021deep} for the rank of representations induced by IQN and IQN with  feature regularization $L_\phi$ computed with a batch size $N=2^{15}$ on 17 Atari games in the offline setting.}
  \label{fig:atari_offline_IQM_rank}
\end{figure*}

\section{SOCIETAL IMPACT}
This paper contributes to the fundamental understanding of state representations, characterizing their generalization capacity. Our work suggests that algorithms making use of representations minimized by the excess risk bound from \cref{thm:main_gen_error} can improve their performance. However, when making the choice of such a representation, we did not focus on other important factors like the computational cost of learning these representations, their scalability or the biases these representations can propagate resulting into possible discriminatory outcomes or dangerous behaviours. We suggest that practitioners should not only consider our generalization characterization of representations but also ethical deliberations.

\end{appendix}

\end{document}